\newcommand\numberthis{\addtocounter{equation}{1}\tag{\theequation}}
\theoremstyle{plain}
\newtheorem{corollary}{Corollary}
\newtheorem{lemma}{Lemma}
\newtheorem{proposition}{Proposition}
\theoremstyle{definition}
\newtheorem{definition}{Definition}
\newtheorem{assumption}{Assumption}
\theoremstyle{remark}
\newtheorem*{remark}{Remark}
\DeclareMathOperator*{\argmax}{arg\,max}
\DeclareMathOperator*{\argmin}{arg\,min}
\DeclareMathOperator*{\opsE}{\mathbb{E}}
\DeclareMathOperator{\E}{\mathbb{E}}
\let\P\relax
\DeclareMathOperator{\P}{\mathbb{P}}
\DeclareMathOperator{\TV}{TV}
\DeclareMathOperator{\Normal}{Normal}
\DeclareMathOperator{\Uniform}{Uniform}
\DeclareMathOperator{\disc}{disc}
\newcommand{\eps}{\varepsilon}
\newcommand{\ones}{\mathbf{1}}
\newcommand{\indep}{\perp \!\!\! \perp}
\newcommand{\supF}{\sup_{f\in\calF}}
\newcommand{\supH}{\sup_{h\in\calH}}
\newcommand{\abs}[1]{\left|#1\right|}
\newcommand{\bra}[1]{\left[#1\right]}
\newcommand{\rbra}[1]{\left(#1\right)}
\newcommand{\norm}[1]{\left\lVert#1\right\rVert}
\newcommand{\regabs}[1]{|#1|}
\newcommand{\regbra}[1]{[#1]}
\newcommand{\regnorm}[1]{\lVert#1\rVert}
\newcommand{\bigabs}[1]{\big|#1\big|}
\newcommand{\bigbra}[1]{\big[#1\big]}
\newcommand{\bigrbra}[1]{\big(#1\big)}
\newcommand{\bignorm}[1]{\big\lVert#1\big\rVert}
\newcommand{\Bigabs}[1]{\Big|#1\Big|}
\newcommand{\Bigbra}[1]{\Big[#1\Big]}
\newcommand{\Bigrbra}[1]{\Big(#1\Big)}
\newcommand{\Bigmid}{\ \Big|\ }
\newcommand{\biggabs}[1]{\bigg|#1\bigg|}
\newcommand{\biggbra}[1]{\bigg[#1\bigg]}
\newcommand{\biggrbra}[1]{\bigg(#1\bigg)}
\newcommand{\Biggabs}[1]{\Bigg|#1\Bigg|}
\newcommand{\bigO}[1]{\calO\left(#1\,\right)}
\newcommand{\littleo}[1]{o\left(#1\,\right)}
\newcommand{\bigOp}[1]{\calO_p\left(#1\,\right)}
\newcommand{\littleop}[1]{o_p\left(#1\,\right)}
\newcommand{\N}{\mathbb{N}}
\newcommand{\R}{\mathbb{R}}
\newcommand{\calA}{\mathcal{A}}
\newcommand{\calD}{\mathcal{D}}
\newcommand{\calF}{\mathcal{F}}
\newcommand{\calH}{\mathcal{H}}
\newcommand{\calO}{\mathcal{O}}
\newcommand{\calS}{\mathcal{S}}
\newcommand{\calU}{\mathcal{U}}
\newcommand{\calX}{\mathcal{X}}
\newcommand{\calY}{\mathcal{Y}}
\newcommand{\calZ}{\mathcal{Z}}
\newcommand{\barn}{\bar{n}}
\newcommand{\tilden}{\tilde{n}}
\newcommand{\tildex}{\tilde{x}}
\newcommand{\tildez}{\tilde{z}}
\renewcommand{\o}{\omega}
\newcommand{\barV}{\bar{V}}
\newcommand{\frakR}{\mathfrak{R}}
\newcommand{\Ham}{\text{H}}
\newcommand{\VC}{\text{VC}}
\newcommand{\Bhat}{\hat{B}}
\newcommand{\Qtilde}{\tilde Q}
\newcommand{\ohat}{\hat{\o}}
\newcommand{\alphaK}{\alpha_K}
\newcommand{\zetamu}{{\zeta_\mu}}
\newcommand{\zetao}{{\zeta_w}}
\newcommand{\sumin}{\sum_{i=1}^{n}}
\newcommand{\sumaA}{\sum_{a\in\calA}}
\newcommand{\sumkK}{\sum_{k=1}^K}
\newcommand{\pia}{\pi_a}
\newcommand{\pib}{\pi_b}
\newcommand{\pistar}{\pi^*}
\newcommand{\pihat}{\hat{\pi}}
\newcommand{\minPi}{\min_{\pi\in\Pi}}
\newcommand{\maxPi}{\max_{\pi\in\Pi}}
\newcommand{\supPi}{\sup_{\pi\in\Pi}}
\newcommand{\supPiab}{\sup_{\pi_a,\pi_b\in\Pi}}
\newcommand{\rateW}{\xi_{\epsilon,\delta}(n;\Pi,\W)}
\newcommand{\sourceSet}{\calS}
\newcommand{\nC}{n_{\sourceSet}}
\newcommand{\weight}{\lambda}
\newcommand{\w}{\weight}
\renewcommand{\wp}{{\weight'}}
\newcommand{\W}{\Lambda}
\newcommand{\We}{{\Lambda_\epsilon}}
\newcommand{\minwW}{\min_{\w\in\W}}
\newcommand{\maxwW}{\max_{\w\in\W}}
\newcommand{\Qw}{Q_\w}
\newcommand{\Qwp}{Q_\wp}
\newcommand{\Qhatw}{\hat{Q}_\w}
\newcommand{\Qhatwp}{\hat{Q}_\wp}
\newcommand{\Qtildew}{\tilde{Q}_\w}
\newcommand{\Rw}{R_\w}
\newcommand{\Rwp}{R_{\wp}}
\newcommand{\Rhatw}{\hat{R}_\w}
\newcommand{\Rhatwp}{\hat{R}_\wp}
\newcommand{\pihatwp}{\hat{\pi}_\wp}
\newcommand{\pistarw}{\pi^*_\w}
\newcommand{\pistarwp}{\pi^*_\wp}
\newcommand{\ellw}{\ell_{\w}}
\newcommand{\Deltaw}{\Delta_\w}
\newcommand{\Deltawp}{\Delta_\wp}
\newcommand{\Deltahatw}{\hat{\Delta}_\w}
\newcommand{\Deltahatwp}{\hat{\Delta}_\wp}
\newcommand{\Deltatildew}{\tilde{\Delta}_\w}
\newcommand{\swnC}{s_{\w,\nC}}
\newcommand{\UwnC}{U_{\w,\nC}}
\newcommand{\skewness}{\mathfrak{s}(\w\|\barn)}
\newcommand{\fracskewnessn}{\frac{\skewness}{n}}
\newcommand{\sqrtfracskewnessn}{\sqrt{\fracskewnessn}}
\newcommand{\skewnessW}{\mathfrak{s}(\W\|\barn)}
\newcommand{\fracskewnessWn}{\frac{\skewnessW}{n}}
\newcommand{\sqrtfracskewnessWn}{\sqrt{\fracskewnessWn}}
\newcommand{\frakRwn}{\mathfrak{R}_{\w,\nC}}
\newcommand{\frakRwnF}{\frakR_{\w,\nC}(\calF_\Pi)}
\newcommand{\Bw}{B_{\w,\nC}}
\newcommand{\BwW}{B_{\w,\nC}(Z)}
\newcommand{\Vwn}{V_{\w,\nC}}
\newcommand{\Uwn}{U_{\w,\barn}}
\newcommand{\fracVwn}{\frac{\Vwn}{n}}
\newcommand{\sqrtfracVwn}{\sqrt{\fracVwn}}
\newcommand{\calDw}{\calD_\w}
\newcommand{\source}{s}
\newcommand{\s}{\source}
\newcommand{\nc}{n_{\s}}
\newcommand{\wi}{\weight_\s}
\newcommand{\barni}{\barn_\s}
\newcommand{\maxC}{\max_{\s\in\sourceSet}}
\newcommand{\sumiM}{\sum_{\s\in\sourceSet}}
\newcommand{\mui}{\mu_\s}
\newcommand{\ei}{e_\s}
\newcommand{\oi}{\o_\s}
\newcommand{\Qi}{Q_\s}
\renewcommand{\ni}{n_\s}
\renewcommand{\Xi}{X^\s}
\newcommand{\Ai}{A^\s}
\newcommand{\Yi}{Y^\s}
\newcommand{\Zi}{Z^\s}
\newcommand{\vecYi}{\vec{Y}^\s}
\newcommand{\Gi}{\Gamma^\s}
\newcommand{\Xij}{X_{i}^{\s}}
\newcommand{\Aij}{A_{i}^{\s}}
\newcommand{\Yij}{Y_{i}^{\s}}
\newcommand{\Zij}{Z_{i}^{\s}}
\newcommand{\Zijprime}{Z_{i}^{\prime\s}}
\newcommand{\Zhatij}{\hat{Z}_{i}^{\s}}
\newcommand{\Gij}{\Gamma_{i}^{\s}}
\newcommand{\Gijp}{\Gamma_{i}^{\s\prime}}
\newcommand{\Gijpp}{\Gamma_{i}^{\s\prime\prime}}
\newcommand{\Gijppp}{\Gamma_{i}^{\s\prime\prime\prime}}
\newcommand{\Ghatij}{\hat{\Gamma}_{i}^{\s}}
\newcommand{\vecGij}{\vec{\Gamma}_{i}^{\s}}
\newcommand{\epsij}{\varepsilon_{i}^{\s}}
\newcommand{\kij}{k_\s(i)}
\newcommand{\muhatikij}{\hat{\mu}_\s^{-k_\s(i)}}
\newcommand{\ohatikij}{\ohat_\s^{-k_\s(i)}}
\newcommand{\Di}{\calD_\s}
\newcommand{\calDc}{\calD_\s}
\newcommand{\tildecalDc}{\tilde{\calD}_\s}
\newcommand{\calDcec}{\bar{\calD}_\s}
\newcommand{\gij}{\gamma_{i}^{\s}}
\newcommand{\xij}{x_{i}^{\s}}
\newcommand{\zij}{z_{i}^{\s}}
\newcommand{\x}{\{\xij\mid \s\in\sourceSet, i\in[\nc]\}}
\newcommand{\nuc}{\nu_{\s}}
\newcommand{\gc}{g_\s}
\newcommand{\tildenij}{\tilde{n}_{i}^{\s}}
\newcommand{\tildexij}{\tilde{x}_{i}^{\s}}
\newcommand{\tildeZij}{\tilde{Z}_{i}^{\s}}
\newcommand{\muhati}{\hat{\mu}_\s}
\newcommand{\ohati}{\hat{\o}_\s}
\newcommand{\Qhati}{\hat{Q}_\s}
\newcommand{\fracwini}{\frac{\w_\s}{\nc}}
\newcommand{\fracwisqni}{\frac{\w_\s^2}{\nc}}
\newcommand{\fracwinisq}{\frac{\w_\s^2}{\nc^2}}
\newcommand{\fracwibarni}{\frac{\w_\s}{\barn_\s}}
\newcommand{\fracwisqbarni}{\frac{\w_\s^2}{\barn_\s}}
\newcommand{\sumjni}{\sum_{i=1}^{\nc}}
\newcommand{\sumijw}{\sumiM\fracwini\sumjni}
\newcommand{\sumijwsq}{\sumiM\sumjni\fracwinisq}
\newcommand{\sumij}{\sumiM\sumjni}
\begin{document}

%
\runningtitle{Robust Multiple-Source Offline Policy Learning}

%

\twocolumn[

\aistatstitle{Robust Offline Policy Learning with \\ Observational Data from Multiple Sources}

\aistatsauthor{ Aldo Gael Carranza \And Susan Athey }

\aistatsaddress{ Stanford University \And  Stanford University } ]


\begin{abstract}
We consider the problem of using observational bandit feedback data from multiple heterogeneous data sources to learn a personalized decision policy that robustly generalizes across diverse target settings. To achieve this, we propose a minimax regret optimization objective to ensure uniformly low regret under general mixtures of the source distributions. We develop a policy learning algorithm tailored to this objective, combining doubly robust offline policy evaluation techniques and no-regret learning algorithms for minimax optimization. Our regret analysis shows that this approach achieves the minimal worst-case mixture regret up to a moderated vanishing rate of the total data across all sources. Our analysis, extensions, and experimental results demonstrate the benefits of this approach for learning robust decision policies from multiple data sources.
\end{abstract}

\section{INTRODUCTION}

Offline policy learning from observational bandit feedback data is an effective approach for learning personalized decision policies in applications where obtaining real-time data is impractical \citep{swaminathan2015batch, kitagawa2018should, athey2021policy}. Typically, the observational data used in offline policy learning is assumed to originate from a single source. However, in practice, multiple datasets collected from various experiments under different populations, environments, or logging policies are often available \citep{kallus2021optimal}. For example, a healthcare policymaker aiming to design a targeted medical intervention policy might have access to data from various hospitals, each having conducted different clinical trials on distinct patient populations. Leveraging heterogeneous observational datasets effectively, with their broader and more diverse coverage of the decision space, can lead to more generalizable policies across a wider range of settings.

However, simply training a policy on aggregated data does not guarantee robust performance. While such a policy may perform well on the uniform source mixture distribution, it may fail entirely in the original individual source settings or similar environments, particularly when significant distribution shifts exist across sources. This undermines the goal of using multiple observational datasets to generalize effectively across settings captured by any of the source distributions. Developing a policy that performs well across diverse target distributions requires robust domain adaptation. The key question is: given a family of target distributions that can be modeled by the source distributions or combinations thereof, can we train a policy that reliably generalizes across this entire family?

In this work, we address this challenge by framing the problem as one of minimax regret optimization over a family of target distributions represented as mixtures of the source distributions. We propose a novel offline policy learning algorithm that integrates techniques from doubly robust policy evaluation, offline policy optimization oracles, and no-regret learning algorithms for minimax optimization. Our analysis establishes finite-sample regret bounds, showing that the robustly trained policy achieves the minimal worst-case regret across the family of target distributions up to a vanishing rate of the total data across all sources. Additionally, we extend these bounds to target distributions that are not fully captured by any mixture of the source distributions. We characterize these regret bounds in terms of source heterogeneity and source distribution shift. Our theoretical analysis and experimental validation demonstrate the benefits of multi-source adaptation for robust offline policy learning across diverse environments.

\section{RELATED WORK}

\textit{Offline Policy Learning.}
There have been many recent advancements in offline policy learning from observational bandit feedback data. \citet{swaminathan2015batch, kitagawa2018should} introduced foundational frameworks for learning structured decision policies via offline policy evaluation strategies. \citet{athey2021policy} achieved optimal regret rates under unknown propensities through doubly robust estimators, while \citet{zhou2023offline} extended these results to the multi-action setting. \citet{kallus2018balanced} directly derived optimal weights for target policies from the data, and \citet{zhan2021policy} ensured optimal regret guarantees under adaptively collected data with diminishing propensities. \citet{jin2022policy} relaxed the uniform overlap assumption, allowing for partial overlap under the optimal policy. We also mention that many contextual bandit methods often utilize offline policy learning oracles when devising adaptive action-assignment rules \citep{bietti2021contextual,simchi2022bypassing, carranza2022flexible}. Of particular relevance to our work on policy learning with heterogeneous data sources, \citet{agarwal2017effective, he2019off, kallus2021optimal} leveraged data from multiple historical logging policies, though their setting assumes the same underlying populations and environments. Our problem setting closely relates to the problem of multi-task offline policy learning \citep{hong2023multi}, where each task corresponds to a different data source. 

\textit{Multiple Source Adaptation.} \citet{mohri2019agnostic} introduced a framework for robust supervised learning across multiple sources, presenting concepts like weighted Rademacher complexity and skewness measures, which are extensively utilized in our work.

\textit{Minimax Excess Risk Optimization.} \citet{agarwal2022minimax} proposed minimax excess risk minimization in the context of distributionally robust supervised learning, aiming to achieve uniformly low regret across a family of test distributions. We adapt analytical and algorithmic techniques from this framework for minimax regret optimization in robust multi-source offline policy learning for a family of target distributions.

\section{PRELIMINARIES}\label{sec:Preliminaries}

\subsection{Setting}\label{sec:Preliminaries-Setting}

We introduce the setting of offline policy learning from observational bandit feedback data across multiple data sources. Let $\calX\subset\R^p$ be the context space, $\calA=\{a_1,\dots,a_d\}$ be the finite action space with $d$ actions, and $\calY\subset\R$ be the reward space. A \textit{decision policy} $\pi:\calX\to\calA$ is a deterministic mapping from the context space $\calX$ to actions $\calA$. We assume there is a finite set of data sources $\sourceSet$, with each source $\s\in\sourceSet$ possessing a \textit{data-generating distribution} $\Di$ defined over $\calX\times\calY^d$ which governs how the source contexts $\Xi$ and potential reward outcomes $\Yi(a_1),\dots,\Yi(a_d)$ are generated. These source distributions may all be different from each other.

We assume access to observational bandit feedback data from each of the data sources. The aim is to use this data learn a policy that performs well uniformly across target distributions that can be captured by the source distributions. We consider the natural scenario where the target distribution can be modeled as a \textit{mixture} of the source distributions, i.e., $\calDw\coloneqq\sumiM\wi\calDc$ for some unknown mixture weights $\w$. As an extension, we will account for the case where there may be a \textit{discrepancy} between the target distribution and the mixture distributions. Since the the right choice of mixture weights is unknown, we must come up with a solution that is favorable for any weights in a specified set of \textit{valid mixture weights} in the simplex over the source set, i.e., $\w\in\W\subset\Delta(\sourceSet)$. In the following section, we introduce policy performance measures that capture these objectives.

\subsection{Objective}

First, we define the expected reward of a decision policy gained under a given source mixture distribution.

\begin{definition}[Mixture Policy Value]
    For any policy $\pi$, the \textit{mixture policy value} under the mixture weights $\w\in\W$ is
    \begin{equation*}
        \Qw(\pi)\coloneqq\opsE_{Z\sim\calDw}\regbra{Y(\pi(X))},
    \end{equation*}
    where the expectation is taken with respect to the mixture distribution $Z=(X,Y(a_1),\dots,Y(a_d))\sim\calDw$.
\end{definition}

The performance of a policy is typically characterized by the notion of regret against an optimal policy in a specified \textit{policy class} $\Pi\subset\{\pi:\calX\to\calA\}$, which we assume to be fixed throughout the paper.

\begin{definition}[Mixture Regret]
    For any policy $\pi$, the \textit{mixture regret} under the mixture weights $\w\in\W$ relative to the given policy class $\Pi$
    is
    \begin{equation*}
        \Rw(\pi)\coloneqq\max_{\pi'\in\Pi}Q_\w(\pi')-Q_\w(\pi).
    \end{equation*}
\end{definition}

Thus, the objective is to determine a policy in the specified policy class that minimizes the \textit{worst-case mixture regret} under the specified set of valid mixture weights. In this paper, we propose a policy learning procedure that approximately achieves the minimal worst-case mixture regret for any choice of source mixture $\wp\in\W$ up to a rate that vanishes with the total data, i.e, a procedure that learns a policy $\pihat\in\Pi$ that achieves a bound of the form
\begin{equation*}
    \Rwp(\pihat)\lesssim\minPi\maxwW\Rw(\pi) + \xi(n)
\end{equation*}
for some vanishing rate $\xi(n)$ of the total data size $n$ across all sources. We will then discuss extensions to regret bounds under general target distributions not captured by the class of mixture distributions.

\subsection{Data-Generating Processes}\label{sec:Preliminaries-Data}

We assume each source $\s\in\sourceSet$ has a \textit{local observational data set}
$\{(\Xij,\Aij,\Yij)\}_{i=1}^{\nc}\subset\calX\times\calA\times\calY$ consisting of $\nc\in\N$ triples of contexts, actions, and rewards collected using a \textit{local experimental stochastic policy} $\ei:\calX\to\Delta(\calA)$ in the following manner. For the $i$-th data point of source $\s\in\sourceSet$,
\begin{compactenum}
    \item nature samples $(\Xij,\Yij(a_1),\dots,\Yij(a_d))\sim\calDc$;

    \item source $\s$ is assigned action $\Aij\sim\ei(\cdot|\Xij)$;
    
    \item source $\s$ observes $\Yij=\Yij(\Aij)$ ;

    \item source $\s$ logs the data tuple $(\Xij,\Aij,\Yij)$.\footnote{If the propensity $\ei(\Aij|\Xij)=\P_{\calDcec}\!(\Aij|\Xij)$ is known, it also locally logged as it can facilitate subsequent policy value estimation.}
\end{compactenum}

\vspace{.5em}
\begin{remark}
We will let $\smash[b]{n\coloneqq\sumiM\ni}$ denote the \textit{total sample size} across sources, and we will derive regret bounds that scale with the total sample size.
\end{remark}

Note that although the counterfactual reward outcomes $\Yij(a)$ for all $a\in\calA\backslash\{\Aij\}$ exist in the source data-generating process, they are not observed in the realized data. All sources only observe the outcomes associated to their assigned treatments. For this reason, such observational data is also referred to as \textit{bandit feedback data} \citep{swaminathan2015batch}.

Given these data-generating processes, it will also be useful to introduce the induced data-generating distributions that incorporate how actions are sampled. For each source $\s\in\sourceSet$, the local historical policy $\ei$ induces a \textit{complete data-generating distribution} $\calDcec$ defined over $\calX\times\calA\times\calY^d$ that dictates how the entire local contexts, actions, and potential outcomes were sampled in the local data-generating process, i.e., $(\Xij,\Aij,\Yij(a_1),\dots,\Yij(a_d))\sim\calDcec$.

\subsection{Data Assumptions}\label{sec:Preliminaries-DataAssumptions}
We make the following standard assumptions on the data-generating process of any given source.
\begin{assumption}[Local Ignorability]\label{ass:dgp}
    For any source $\s\in\sourceSet$, the local complete data-generating distribution $(\Xi,\Ai,\Yi(a_1),\dots,\Yi(a_d))\sim\calDcec$ satisfies:
    \begin{compactenum}
        \item[(a)] \textit{Boundedness}: The marginal distribution of $\calDcec$ on the set of potential outcomes $\calY^d$ has a bounded support, i.e., there exists some $B_\s>0$ such that $\abs{\Yi(a)}\le B_\s$ for all $a\in\calA$.
        
        \item[(b)] \textit{Unconfoundedness}:
        Potential outcomes are independent of the observed action given the observed context, i.e., $(\Yi(a_1),\dots,\Yi(a_d))\indep\Ai\mid\Xi$.

        \item[(c)] \textit{Overlap}: For any given context, every action has a non-zero probability of being sampled, i.e., there exists some $\eta_\s>0$ such that $\P(\Ai=a|\Xi=x)\ge\eta_\s$ for any $a\in\calA$ and $x\in\calX$.
    \end{compactenum}
\end{assumption}

Note that the \textit{boundedness} assumption is not essential and we only impose it for simplicity in our analysis. With additional effort, we can instead rely on light-tail distributional assumptions such as sub-Gaussian potential outcomes as in \citep{athey2021policy}. \textit{Unconfoundedness} ensures that action assignment is as good as random after accounting for measured covariates, and it is necessary to ensure valid policy value estimation using inverse propensity-weighted strategies. The \textit{uniform overlap} condition ensures that the decision space is sufficiently explored to guarantee accurate evaluation of any policy. However, we note that this assumption may not be entirely necessary as recent work \citep{jin2022policy} has introduced a pessimism-based approach that does away with the uniform overlap assumption for all actions and only relies on overlap for the optimal policy, assuming the behavior policy from the data collection process is known.
Nevertheless, we decided to impose this uniform overlap assumption since pessimistic offline policy learning still
requires known propensities, whereas we consider a more general setting where propensities can be estimated. Moreover, we made the above assumptions to simplify our analysis and maintain the focus of our contributions on policy learning under multiple sources. In any case, these stated assumptions are standard and they are satisfied in many practical settings such as randomized controlled trials or A/B tests.

Next, we also impose the following local data scaling assumption on each source. 
\begin{assumption}[Local Data Scaling]\label{ass:LocalDataSizeScaling}
    All local sample sizes asymptotically increase with the total sample size, i.e., for each $\s\in\sourceSet$, $\nc=\Omega(\nu_\s(n))$ where $\nu_\s$ is an increasing function.
\end{assumption}
This assumption states that, asymptotically, the total sample size cannot increase without increasing across all data sources. We emphasize that this assumption is quite benign since $\nu_\s$ could be any slowly increasing function (e.g., an iterated logarithm) and the asymptotic lower bound condition even allows step-wise increments. We only impose this assumption to ensure that the regret bounds in our analysis scale with respect to the total sample size with sensible constants. However, it does come at the cost of excluding scenarios in which a source always contributes $O(1)$ amount of data relative to the total data, no matter how much more total data is made available in aggregate, in which case it may be better to exclude any such source, assuming no additional information.

\section{APPROACH}\label{sec:Approach}

Our general approach is to use the available observational data across sources to construct an appropriate estimator of the mixture regret and use this estimator to inform an appropriate optimization objective for determining a robust policy under any mixture weights.

\subsection{Nuisance Parameters}
We define the following functions which are referred to as \textit{nuisance parameters} since they are required to be separately known or estimated for the policy value estimates.

\begin{definition}[Nuisance Parameters]
    For any source $s\in\sourceSet$, their \textit{conditional response} function $\mui$ and \textit{inverse conditional propensity} function $\oi$ are defined, respectively, for any $x\in\calX$ and $a\in\calA$, as
    \begin{gather*}
        \mui(x;a)\coloneqq\E_{\calDcec}[\Yi(a)|\Xi=x], \\
        \oi(x;a)\coloneqq1/\P_{\calDcec}\!(\Ai=a|\Xi=x).
    \end{gather*}
    For convenience, we denote $\mui(x)=(\mui(x;a))_{a\in\calA}$ and $\oi(x)=(\oi(x;a))_{a\in\calA}$.
\end{definition}
In our estimation strategy, we must separately estimate the source conditional response and inverse conditional propensity functions when they are unknown. However, we can also pool data across sources known to be equivalent for improved nuisance parameter estimation.
Following the literature on double machine learning \cite{chernozhukov2018double}, we make the following high-level assumption on the estimators of these source nuisance parameters.
\begin{assumption}\label{ass:FiniteSampleError}
    For any source $\s\in\sourceSet$, the source estimates $\muhati$ and $\ohati$ of the nuisance parameters $\mui$ and $\oi$, respectively, trained on $\nc$ source data points satisfy
    the following squared error bounds:
    \begin{gather*}
        \E_{\calDc}\!\bigbra{\norm{\muhati(\Xi)-\mui(\Xi)}_2^2}\le\frac{o(1)}{\sqrt{\nc}}, \\
        \E_{\calDc}\!\bigbra{\norm{\ohati(\Xi)-\oi(\Xi)}_2^2}\le\frac{o(1)}{\sqrt{\nc}}.
    \end{gather*}
\end{assumption}

We emphasize this is a standard assumption in the double machine learning literature, and we can easily construct estimators that satisfy these rate conditions, given sufficient regularity on the nuisance parameters \citep{zhou2023offline}. See Appendix~\ref{app:NuisanceParameterEstimation} for more details. They can be estimated with widely available out-of-the-box regression and classification implementations.

\subsection{Policy Value and Regret Estimators}\label{sec:Approach-PolicyValueEstimators}

Next, we define our policy value estimators. For any $\s\in\sourceSet$, consider the \textit{source augmented inverse propensity weighted} (AIPW) score for each $a\in\calA$ to be
\begin{equation*}
    \Gi(a)\coloneqq\mui(\Xi;a)+\widebar{Y}^\s\cdot\oi(\Xi;a)\cdot\ones\{\Ai=a\},
\end{equation*}
where
\begin{math}
\widebar{Y}^\s=\Yi(\Ai)-\mui(\Xi;a)
\end{math}
are centered outcomes and $(\Xi,\Ai,\Yi(a_1),\dots,\Yi(a_d))\sim\calDcec$. One can readily show that this is an unbiased estimate of the mixture policy value, i.e., $\Qw(\pi)=\opsE_{\w}\opsE_{\calDcec}[\Gi(\pi(\Xi))]$ (see the proof in Lemma~\ref{lem:ExpectedOracleEqualsLocalPolicyValue}). Accordingly, our procedure is to estimate the source AIPW scores and appropriately aggregate them to form the mixture policy value estimator.

To this end, we assume we have constructed nuisance parameter estimates $\muhati$ and $\ohati$ that satisfy Assumption~\ref{ass:FiniteSampleError}. Then, for each data point $(\Xij,\Aij,\Yij)$ in the observational data set of source $\s\in\sourceSet$, we define the \textit{approximate source AIPW} score for each $a\in\calA$ to be
\begin{equation*}
    \Ghatij(a)\coloneqq\muhati(\Xij;a)+\hat{\widebar{Y}}_i^\s\cdot\ohati(\Xij;a)\cdot\ones\{\Aij=a\}.
\end{equation*}
where
\begin{math}
    \hat{\widebar{Y}}_i^\s=\Yij-\muhati(\Xij;a)
\end{math}
are the approximate centered outcomes. Using these estimated scores, we introduce the following \textit{mixture policy value estimate} and the corresponding \textit{mixture regret estimate}:
\begin{gather*}
    \Qhatw(\pi)\coloneqq\sumijw\Ghatij(\pi(\Xij)) \\
    \Rhatw(\pi)\coloneqq\max_{\pi'\in\Pi}\Qhatw(\pi') - \Qhatw(\pi).
\end{gather*}

Our proposed estimator is a generalized aggregate version of the doubly robust estimator introduced in the standard offline policy learning setting \citep{zhou2023offline}. It is doubly robust in the sense that it is accurate as long as one of the nuisance parameter estimates is accurate for each source. To ensure we can use the same data to estimate the nuisance parameters and to construct the policy value estimates, we utilize a \textit{cross-fitting} strategy for each source. See Appendix~\ref{app:CAIPWEstimation} for more details on the cross-fitting strategy to estimate AIPW scores.

\subsection{Optimization Objective}\label{sec:OptimizationObjective}

Using our mixture regret estimator, we propose an optimization objective to determine a decision policy that robustly performs well under any valid mixture. The optimization objective is to find a policy $\pihat\in\Pi$ that minimizes the worst-case mixture regret estimate over valid mixture weights $\w\in\W$:\begin{equation*}
    \pihat=\argmin_{\pi\in\Pi}\max_{\w\in\W}\Rhatw(\pi)
\end{equation*}

The primary difficulty with this optimization problem is that it is, in general, non-convex-concave so the order of optimization cannot generally be interchanged, and it is non-differentiable so standard gradient-based minimax optimization procedures are not viable. Thus, in the following section, we introduce an algorithm based on no-regret dynamics to solve the stochastic reformulation of this minimax optimization problem.

\section{ALGORITHM}\label{sec:Algorithm}

\subsection{OPO Oracle}

First, we assume that we have access to a standard offline policy optimization (OPO) oracle.

\begin{definition}[OPO Oracle]
    Given a dataset of the form $\{(x_i,\gamma_i(a_1),\dots,\gamma_i(a_d))\}_{i=1}^N\subset\calX\times\calY^d$, an \textit{OPO oracle} for a policy class $\Pi$ solves the following optimization problem:
    \begin{equation*}
        \argmax_{\pi\in\Pi}\sum_{i=1}^N \gamma_i(\pi(x_i)).
    \end{equation*}
\end{definition}

Any procedure that solves the standard offline policy learning problem is a candidate oracle. For example, we can use the PolicyTree method for finite-depth tree policy classes \citep{sverdrup2020policytree}. For linear classes and other parametric policy classes, we can use cost-sensitive classification methods \citep{beygelzimer2008machine, beygelzimer2009error, krishnamurthy2017active}.

\subsection{Exponentiated Gradient OPO}

Assume for now that $\W$ is a finite set. Let $\Delta(\W)$ denote the set of distributions over $\W$, and let $\Delta(\Pi)$ denote the set of distributions over $\Pi$. We can readily rewrite the minimax optimization problem presented in Section~\ref{sec:OptimizationObjective} as a convex-concave optimization over these probability spaces:
\begin{equation*}
    \minPi\max_{\w\in\W}\Rhatw(\pi)=\min_{P\in\Delta(\Pi)}\max_{\rho\,\in\Delta(\W)}\opsE_{\pi\sim P,\,\w\sim\rho}\regbra{\Rhatw(\pi)}.
\end{equation*}
We can approximately solve this zero-sum game through a no-regret dynamics approach \citep{freund1996game}. In particular, note that for a given choice of strategy $\rho\in\Delta(\W)$ by the maximizing player, their corresponding adversarial reward $r(\rho)$ defined by the minimizing player has the following equivalence:
\begin{equation*}
    r(\rho)\coloneqq\min_{P\in\Delta(\Pi)}\opsE_{\pi\sim P,\, \w\sim\rho}\regbra{\Rhatw(\pi)}=\minPi\opsE_{\w\sim\rho}\regbra{\Rhatw(\pi)}.
\end{equation*}
Therefore, we can use the following best response strategy for the minimizing player:
\begin{align*}
    &\argmin_{\pi\in\Pi}\opsE_{\w\sim\rho}\regbra{\Rhatw(\pi)} \\
    &=\argmax_{\pi\in\Pi}\opsE_{\w\sim\rho}\regbra{\Qhatw(\pi)} \\
    &=\argmax_{\pi\in\Pi}\sum_{\w\in\We}\rho_\w\sumiM\frac{\wi}{\ni}\sumjni\Ghatij(\pi(\Xij)).
\end{align*}
The last equality shows that this best response for the minimizing player can be computed through a single call to the OPO oracle on the following dataset: $\{(\Xij,\alpha_\w^\s\Gij(a_1),\dots,\alpha_\w^\s\Gij(a_d))\mid \w\in\W, \source\in\sourceSet, i\in[\ni]\}$ where $\alpha_\w^\s=\rho_\w\cdot\wi/\ni$. Therefore, given we can compute the corresponding adversarial reward for the maximizing player, we can use the exponentiated gradient (EG) algorithm \citep{kivinen1997exponentiated} to sequentially learn a maximizing distribution over a finite collection of experts with each $\w\in\W$ being an expert. More formally, we initialize the expert distribution to $\rho^{(1)}\,\propto\,1$ and sequentially update:
\begin{gather*}
    \pi^{(t)}=\argmax_{\pi\in\Pi}\opsE_{\w\sim\rho^{(t)}}\regbra{\Qhatw(\pi)} \\
    \rho_\w^{(t+1)}\,\propto\,\rho_\w^{(t)}\cdot\exp\bigrbra{\eta\cdot g_\w^{(t)}},
\end{gather*}
where $g_\w^{(t)}$ is the $\w$-th element of the gradient of the observed adversarial reward $r^{(t)}(\rho^{(t)})$, i.e.,
\begin{align*}
    g_\w^{(t)}&\!\coloneqq\!\frac{\partial}{\partial\rho_\w^{(t)}}r^{(t)}(\rho^{(t)})\!=\!\Rhatw(\pi^{(t)})\!=\!\max_{\pi'\in\Pi}\Qhatw(\pi')-\Qhatw(\pi^{(t)})
\end{align*}
and $\eta=\sqrt{\frac{\log|\W|}{\bar{B}^2T}}$ with a constant $\bar{B}$ that uniformly bounds the empirical mixture regret (see Corollary \ref{cor:HedgeOPOSuboptimalityBound} in Appendix~\ref{app:EGOPO-Bound} for a suitable high probability bound based on $B$). We run this repeated procedure for $T$ steps and return the policy $\pihat=\pi^{(T)}$. Note that the gradient $g_\w^{(t)}$ can also be computed with a call to the OPO oracle for each $\w\in\W$ to determine $\max_{\pi'\in\Pi}\Qhatw(\pi')$ on the following dataset $\{(\Xij,\alpha_\w^\s\Gij(a_1),\dots,\alpha_\w^\s\Gij(a_d))\mid \source\in\sourceSet, i\in[\ni]\}$ where $\alpha_\w^\source=\wi/\ni$.

In the case where $\W$ is not finite, we can simply perform the same procedure over a \textit{minimal covering set} of $\W$. In our regret bound, we account for any incurred approximation error from this discretization. Additionally, we will account for the approximation error for terminating the algorithm after finitely many steps. See Algorithm \ref{alg:HedgeOPO} for a pseudocode outline of the algorithm, referred to as EG-OPO, and Appendix~\ref{app:AdditionalAlgorithmDetails} for more details on the algorithm complexity, the choice of OPO oracles, and nuisance parameter estimation.

\begin{figure}
        \begin{algorithm}[H]
        \caption{EG-OPO}
        \label{alg:HedgeOPO}
        \begin{algorithmic}[1]
        
        \REQUIRE OPO oracle, time horizon $T$, approximation error threshold $\epsilon$, uniform regret bound $\bar{B}$

        \STATE Compute minimal $\epsilon$-covering $\We$ of $\W$ under the $\ell_1$ distance

        \STATE Set $\eta=\sqrt{\frac{\log|\We|}{\bar{B}^2 T}}$

        \STATE Set $\rho_\w^{(1)}\ \propto\ 1$ for all $\w\in\We$
        
        \FOR{each round $t=1,2,\dots,T$}
            \STATE Utilize the OPO oracle to solve for the minimizing player's best response to $\rho^{(t)}$: $$\pi^{(t)}=\argmax_{\pi\in\Pi}\opsE_{\w\sim\rho^{(t)}}\regbra{\Qhatw(\pi)}$$

            \STATE Utilize OPO oracle to compute gradient of the adversarial reward for the maximizing player:
            $$g_\w^{(t)}=\Rhatw(\pi^{(t)}),\ \forall\w\in\We$$

            \STATE Update the maximizing player's distribution with the exponentiated gradient ascent update:
            $$\rho_\w^{(t+1)}\ \propto\ \rho_\w^{(t)}\cdot\exp(\eta \cdot g_\w^{(t)}),\ \forall\w\in\We$$
        \ENDFOR
        \RETURN $\pihat=\pi^{(T)}$
        \end{algorithmic}
        \end{algorithm}
\end{figure}

\section{REGRET BOUNDS}\label{sec:RegretBounds}

In this section, we establish regret bounds for the policy solution to our optimization objective in Section~\ref{sec:OptimizationObjective}. Refer to Appendices \ref{app:AuxiliaryResults}, \ref{app:PolicyClassComplexityMeasures}, \ref{app:BoundingGlobalRegret} for detailed discussions and proofs of the results presented in this section.

\subsection{Complexity and Skewness}\label{sec:ComplexityAndSkewness}

First, we introduce important quantities that appear in our regret bounds.

\vspace{-.5em}
\paragraph{Policy Class Complexity}
The following quantity provides a measure of policy class complexity based on a variation of the classical entropy integral introduced by \citet{dudley1967sizes}, and it is useful in establishing a class-dependent regret bound. See Appendix~\ref{app:HammingDistance} for more details on its definition.

\begin{definition}[Entropy integral]
Let
\begin{math}
    \Ham(\pi_1,\pi_2;x)\coloneqq\frac{1}{m}\sum_{i=1}^{m}\ones\{\pi_1(x_i)\neq\pi_2(x_i)\}
\end{math}
be the Hamming distance between any two policies $\pi_1,\pi_2\in\Pi$ given a covariate set $x\subset\calX$ of size $m\in\N$.
The \textit{entropy integral} of a policy class $\Pi$ is
\begin{equation*}
    \kappa(\Pi)\coloneqq\int_0^1\sqrt{\log N_{\Ham}(\epsilon^2,\Pi)}d\epsilon,
\end{equation*}
where $N_{\Ham}(\epsilon^2,\Pi)$ is the maximal $\epsilon^2$-covering number of $\Pi$ under the Hamming distance over covariate sets of arbitrary size.
\end{definition}

The entropy integral is constant for a fixed policy class, and rather weak assumptions on the class are sufficient to ensure it is finite such as sub-exponential growth on its Hamming covering number, which is satisfied by many policy classes including parametric and finite-depth tree policy classes \citep{zhou2023offline}. In the binary action setting, the entropy integral of a policy class relates to its VC-dimension with $\kappa(\Pi)=\sqrt{\smash[b]{\VC(\Pi)}}$, and for $D$-dimensional linear classes $\kappa(\Pi)=\calO(\sqrt{\smash[b]{D}}\,)$.

\vspace{-.5em}
\paragraph{Source Skewness}
The following quantity measures the imbalance of the source sampling distribution $\w$ relative to the \textit{empirical distribution of samples across sources}. This quantity naturally arises in the generalization bounds of weighted mixture distributions \citep{mohri2019agnostic,mansour2021theory}.

\begin{definition}[Skewness]
    The \textit{skewness} of a given set of mixture weights $\w$ relative to the empirical distribution of samples $\barn\coloneqq(\ni/n)_{\source\in\sourceSet}$ is
    \begin{equation*}
        \skewness\coloneqq 1+\chi^2(\w||\barn),
    \end{equation*}
    where $\chi^2(\w||\barn)$ is the chi-squared divergence of $\w$ from $\barn$. Additionally, the \textit{mixture-agnostic skewness} is
    \begin{equation*}
        \skewnessW\coloneqq\maxwW\skewness.
    \end{equation*}
\end{definition}

\subsection{Mixture-Agnostic Regret Bound}

The following result captures a root-$n$ finite-sample bound for the mixture-agnostic regret that parallels the optimal regret bounds typically seen in the offline policy learning literature.

\begin{restatable}[Mixture-Agnostic Regret Bound]{theorem}{MainTheorem}\label{thm:MainTheorem}
    Suppose Assumptions~\ref{ass:dgp}, \ref{ass:LocalDataSizeScaling}, and \ref{ass:FiniteSampleError} hold.
    For any $\epsilon>0$, let $\W_\epsilon$ denote a minimal $\epsilon$-covering set of $\W$ under the $\ell_1$ distance.
    Set $T=(n/\skewnessW)^{1+\alpha}$ for any choice of $\alpha>0$. Then, for any $\epsilon>0$ and any $\delta\in(0,1)$,
    with probability at least $1-\delta$, the EG-OPO policy $\pihat=\pi^{(T)}$ achieves the regret bound
    \begin{align*}
        \Rwp(\pihat)\le \minPi\maxwW\Rw(\pi) + 2B\epsilon + \rateW
    \end{align*}
    for any $\wp\in\W$, where
    \begin{equation*}
        \rateW\!\coloneqq\! C_{\epsilon,\delta}\kappa(\Pi)\sqrt{V\cdot\fracskewnessWn} +\littleop{\!\sqrtfracskewnessWn},
    \end{equation*}
    where the constant $B=\maxC B_s$ is a uniform upper bound on all the potential outcomes across sources, $C_{\epsilon,\delta}=\sqrt{C\log(\regabs{\We}/\delta)}$ with universal constant $C$, and $V=\maxC\maxPi\E_{\calDcec}\!\regbra{\Gi(\pi(\Xi))^2}$ is the worst-case AIPW score variance across sources.
\end{restatable}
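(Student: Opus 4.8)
### Proof Proposal

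The plan is to decompose the regret $\Rwp(\pihat)$ of the returned policy into three sources of error: (i) the discretization error from replacing $\W$ by its $\epsilon$-covering $\We$; (ii) the optimization error from running EG-OPO for only $T$ steps rather than solving the minimax game exactly; and (iii) the statistical estimation error from using the empirical mixture regret $\Rhatw$ in place of the true mixture regret $\Rw$. The target bound will then follow by combining these three pieces and absorbing lower-order terms into $\littleop{\sqrtfracskewnessWn}$.

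First I would handle the discretization. Since $\W$ has an $\epsilon$-covering $\We$ under $\ell_1$, and since $\w\mapsto\Qw(\pi)$ and $\w\mapsto\Rw(\pi)$ are linear (hence Lipschitz) in $\w$ with Lipschitz constant controlled by the uniform outcome bound $B$, any $\wp\in\W$ has a nearby $\w\in\We$ with $\regabs{\Rwp(\pi)-\Rw(\pi)}\le 2B\epsilon$ uniformly in $\pi$; the same holds for the empirical version $\Rhatw$ up to an $O(\epsilon)$-type term absorbed into the $\littleop{\cdot}$ remainder (using boundedness of the AIPW scores). This reduces everything to controlling $\max_{\w\in\We}$ quantities and explains the $2B\epsilon$ additive term. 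Second, for the optimization error, I would invoke the standard no-regret / minimax duality guarantee for exponentiated gradient against a best-response minimizer (the Freund--Schapire argument, cf. Corollary~\ref{cor:HedgeOPOSuboptimalityBound}): with step size $\eta=\sqrt{\log\regabs{\We}/(\bar B^2 T)}$, after $T$ rounds the returned $\pi^{(T)}$ satisfies $\max_{\w\in\We}\Rhatw(\pi^{(T)})\le\minPi\max_{\w\in\We}\Rhatw(\pi)+O(\bar B\sqrt{\log\regabs{\We}/T})$. Plugging in $T=(n/\skewnessW)^{1+\alpha}$ makes this optimization term $O((\skewnessW/n)^{(1+\alpha)/2})=\littleop{\sqrtfracskewnessWn}$, so it is lower-order.

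The main work — and the main obstacle — is step (iii): a uniform (over $\pi\in\Pi$ and $\w\in\We$) concentration bound showing $\regabs{\Rhatw(\pi)-\Rw(\pi)}$, or really $\regabs{\Qhatw(\pi)-\Qw(\pi)}$, is at most $C_{\epsilon,\delta}\kappa(\Pi)\sqrt{V\skewnessW/n}+\littleop{\sqrt{\skewnessW/n}}$. Here I would (a) pass from the estimated scores $\Ghatij$ to the oracle scores $\Gij$, paying a cross-fitting bias term that is the product of the nuisance errors and hence $\littleo{1/\sqrt{\nc}}$ by Assumption~\ref{ass:FiniteSampleError} combined with Assumption~\ref{ass:LocalDataSizeScaling} to convert $1/\sqrt{\nc}$ into $\littleo{\sqrt{\skewnessW/n}}$; (b) use Lemma~\ref{lem:ExpectedOracleEqualsLocalPolicyValue} so that $\Qw(\pi)=\opsE[\Qhatw(\pi)]$ with oracle scores, making $\Qhatw(\pi)-\Qw(\pi)$ a centered weighted average of independent bounded terms; and (c) apply a uniform concentration argument over the policy class. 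The variance of the weighted sum $\sum_\source \fracwini \sum_i \Gij(\pi(\Xij))$ is exactly where the skewness enters: $\mathrm{Var}\le (1/n)\sum_\source (\wi^2/(\nc/n))\,V_\source \le (V/n)(1+\chi^2(\w\|\barn)) = V\skewness/n$, which after the max over $\w\in\We$ gives the $V\skewnessW/n$ scaling. For the supremum over $\Pi$, I would use a chaining argument over the Hamming metric — exactly the device encoded by the entropy integral $\kappa(\Pi)$ — combined with a Bernstein-type (or bounded-differences) tail bound to get the $\sqrt{\log(\regabs{\We}/\delta)}$ factor and the union bound over the finite set $\We$; this is where the weighted Rademacher complexity / skewness machinery of \citet{mohri2019agnostic} is adapted. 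Assembling (i)–(iii) and simplifying yields the stated bound; the delicate points are getting the skewness-weighted variance to appear with the right constant and ensuring the cross-fitting and discretization remainders genuinely fall into $\littleop{\sqrtfracskewnessWn}$ under Assumptions~\ref{ass:LocalDataSizeScaling} and \ref{ass:FiniteSampleError}.
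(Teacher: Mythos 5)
Your proposal is correct and follows essentially the same route as the paper: the same three-way decomposition into a $2B\epsilon$ discretization term (via linearity of $\w\mapsto\Qw(\pi)$ and the bound $B$), an optimization term from the Freund--Schapire no-regret guarantee for EG-OPO that becomes $\littleop{\sqrtfracskewnessWn}$ under the stated choice of $T$, and a uniform empirical-process term split into an oracle part (symmetrization, weighted Rademacher complexity, Dudley chaining over the Hamming metric, a Talagrand/Bernstein-type tail bound, and a union bound over $\We$) plus a cross-fitting nuisance-bias part that is $\littleop{\sqrtfracskewnessWn}$ by Assumptions~\ref{ass:LocalDataSizeScaling} and~\ref{ass:FiniteSampleError}. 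The only cosmetic difference is that the paper applies the $\epsilon$-covering argument only to the true regret at the very end rather than also to the empirical regret, but this does not change the substance of the argument.
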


This result shows that the EG-OPO algorithm achieves minimal worst-case mixture regret uniformly across any valid mixture distribution, with a vanishing rate proportional to $\calO_p(\kappa(\Pi)\sqrt{V\cdot\skewnessW/n})$. This ensures the policy's robustness across a broad range of target settings. Importantly, the trained policy retains strong performance on each individual source distribution while generalizing to new settings represented by a mixture of source distributions, broadening the applicability of policy learning.

Note that if we set the discretization granularity to be $\epsilon=o(\sqrt{\skewnessW/n})$, then the $2B\epsilon$ term in the above bound can be absorbed into the vanishing error term $o_p(\sqrt{\skewnessW/n})$.
However, depending on the corresponding size of $\regabs{\We}$ and other terms
in the above bound, more favorable choices may be possible.

Additionally, the root-$n$ vanishing rate is moderated by the skewness which can also scale with the total sample size. For example, if $\W=\{\barn\}$ then $\skewnessW/n=1/n$, and if $\W=\{(1,0,\dots,0)\}$ then $\skewnessW/n=1/n_1$. Thus, this skewness-moderated rate generalizes and smoothly interpolates between the rates one expects from the uniform weighted model and the single source model. Indeed, when sources are identical and $\W=\{\barn\}$, we effectively recover the best known rates from standard offline policy learning \citep{zhou2023offline}.

\subsection{Proof Sketch}

In this section, we provide a high-level sketch of the proof of Theorem~\ref{thm:MainTheorem}. Refer to the appendix for the full proofs and discussions.

First, we show the mixture regret $\Rwp(\pihat)$ can be decomposed into the \textit{empirical mixture regret} $\Rhatwp(\pihat)$ and the \textit{mixture empirical process}
$\sup_{\pia,\pib}\regabs{\Deltaw(\pia,\pib)-\Deltahatw(\pia,\pib)},$
where $\Deltaw(\pia,\pib)=\Qw(\pia)-\Qw(\pib)$ and $\Deltahatw(\pia,\pib)=\Qhatw(\pia)-\Qhatw(\pib)$ are the \textit{true} and \textit{empirical} \textit{mixture policy value differences} between any two policies $\pia,\pib\in\Pi$.

The empirical mixture regret can be bounded by the suboptimality bound of the EG-OPO algorithm using the following lemma which we demonstrate follows from the results of \citep{freund1996game}.
\begin{restatable}{lemma}{SuboptimalityBoundLemma}\label{lem:EG-OPO-SuboptimalityBound}
    For any $T$ and any $\wp\in\We$,
    \begin{equation*}
        \Rhatwp(\pihat)\le\minPi\maxwW\Rhatw(\pi)+2\hat{B}\sqrt{\frac{\log|\We|}{T}}
    \end{equation*}
    where $\hat{B}$ is a uniform bound on $\Rhatwp(\pi)$.
\end{restatable}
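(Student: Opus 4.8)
The plan is to recognize the EG-OPO dynamics as exactly the Hedge/multiplicative-weights no-regret algorithm applied to the zero-sum game

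\begin{equation*}
    \min_{P\in\Delta(\Pi)}\max_{\rho\in\Delta(\We)}\opsE_{\pi\sim P,\,\w\sim\rho}\regbra{\Rhatw(\pi)},
\end{equation*}

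and then invoke the standard game-dynamics convergence result of \citet{freund1996game}. Concretely, the maximizing player runs exponentiated gradient over the finite expert set $\We$ with payoffs $g_\w^{(t)}=\Rhatw(\pi^{(t)})\in[0,\hat B]$, while the minimizing player plays the exact best response $\pi^{(t)}=\argmax_{\pi\in\Pi}\opsE_{\w\sim\rho^{(t)}}[\Qhatw(\pi)]$ at each round, which is the pure-strategy best response to $\rho^{(t)}$ and hence also the best response among all mixed strategies $P\in\Delta(\Pi)$ (since the objective is linear in $P$).

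First I would record the Hedge regret guarantee for the maximizing player: with step size $\eta=\sqrt{\log|\We|/(\hat B^2 T)}$, for every fixed $\rho\in\Delta(\We)$,
\begin{equation*}
    \frac{1}{T}\sum_{t=1}^T \opsE_{\w\sim\rho}\regbra{\Rhatw(\pi^{(t)})} - \frac{1}{T}\sum_{t=1}^T \opsE_{\w\sim\rho^{(t)}}\regbra{\Rhatw(\pi^{(t)})} \le \hat B\sqrt{\frac{\log|\We|}{T}}.
\end{equation*}
Second, since $\pi^{(t)}$ is a best response to $\rho^{(t)}$, we have $\opsE_{\w\sim\rho^{(t)}}[\Rhatw(\pi^{(t)})] = \min_{\pi\in\Pi}\opsE_{\w\sim\rho^{(t)}}[\Rhatw(\pi)] \le \min_{\pi\in\Pi}\max_{\w\in\We}\Rhatw(\pi)$, using that an average over $\rho^{(t)}$ is at most the max over $\w$. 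Averaging this over $t$ and combining with the Hedge bound, and then taking $\rho$ to be the point mass on any fixed $\wp\in\We$ on the left-hand side, gives
\begin{equation*}
    \frac{1}{T}\sum_{t=1}^T \Rhatwp(\pi^{(t)}) \le \min_{\pi\in\Pi}\max_{\w\in\We}\Rhatw(\pi) + \hat B\sqrt{\frac{\log|\We|}{T}}.
\end{equation*}

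Third, I would convert the time-averaged regret into a bound for the returned iterate $\pihat=\pi^{(T)}$. Here I would use convexity of $\Rhatw(\cdot)$ over $\Delta(\Pi)$ — or, more precisely, the standard argument that the average play $\bar P = \frac1T\sum_t \delta_{\pi^{(t)}}$ of the minimizing player achieves value at most the right-hand side, and that by linearity of $\Qhatw$ in the policy distribution the mixed policy $\bar P$ can be realized; alternatively, since the paper returns $\pi^{(T)}$, one shows $\pi^{(T)}$ inherits the bound up to a matching factor, which accounts for the constant $2$ in the statement (one factor of $\hat B\sqrt{\log|\We|/T}$ from Hedge on the $\rho$ side, one from the $\pi$ side when passing from the average-iterate game value to a single returned policy). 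The factor $2$ and the replacement of $\max_{\w\in\We}$ by $\max_{\w\in\W}$ (valid since $\We\subseteq\W$) then yield exactly the claimed inequality.

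The main obstacle I expect is the last step: carefully justifying that the \emph{last} iterate $\pi^{(T)}$ — rather than the uniform average over iterates or a randomized policy drawn from $\{\pi^{(t)}\}$ — satisfies the suboptimality bound, and bookkeeping the precise constant. Since the minimizing player plays exact best responses, the cleanest route is the asymmetric version of the Freund–Schapire analysis (best-response player vs.\ no-regret player), where one shows the no-regret player's average strategy $\bar\rho$ is an approximate maximin and the best-response player's empirical distribution is an approximate minimax; extracting a guarantee for a single returned $\pi^{(T)}$ then typically requires either a last-iterate argument or redefining $\pihat$ as the best policy among $\{\pi^{(t)}\}_{t\le T}$ evaluated on the empirical objective, which is cheap given we already call the OPO oracle each round. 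Everything else is routine: the Hedge regret bound, the best-response inequality, and $\We\subseteq\W$.
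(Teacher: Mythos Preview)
Your approach is essentially the paper's: invoke the exponentiated-gradient regret guarantee for the maximizing player, use that $\pi^{(t)}$ is the exact best response to $\rho^{(t)}$, and chain the inequalities. The paper cites Corollary~2.14 of \citet{shalev2012online} for the EG bound and then follows the \citet{freund1996game} game-value argument, arriving at
\[
\frac{1}{T}\sum_{t=1}^T \Rhatwp(\pi^{(t)}) \;\le\; \minPi\maxwW\Rhatw(\pi) + 2\hat B\sqrt{\frac{\log|\We|}{T}}.
\]
One minor correction: your attribution of the factor $2$ is off. It does \emph{not} come from a separate ``$\pi$-side'' last-iterate conversion; it comes entirely from optimizing the EG regret bound $\log|\We|/\eta + \eta \hat B^2 T$ over $\eta$, which yields $2\hat B\sqrt{T\log|\We|}$. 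The paper's chain of inequalities for the best-response side is also slightly different from yours (it interchanges $\min_\pi$ with the time average before taking $\max_\w$), but both orderings are valid and give the same bound.

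Your instinct about the last-iterate step is well-founded, and in fact the paper does not resolve it. The appendix proof of this lemma actually establishes the bound for $\opsE_{\pi\sim P_T}[\Rhatwp(\pi)]$ with $P_T=\mathrm{Uniform}(\pi^{(1)},\dots,\pi^{(T)})$, not for $\pihat=\pi^{(T)}$; the downstream Proposition and Corollary continue to work with $P_T$, and only in the proof of Theorem~\ref{thm:MainTheorem} does the paper write $\Rhatwp(\pihat)=\opsE_{\pi\sim P_T}[\Rhatwp(\pi)]$ without justification. So the ``obstacle'' you flag is a genuine gap in the paper as written; the clean fixes are exactly the ones you suggest (return a uniform draw from $\{\pi^{(t)}\}$, or return $\argmin_{t\le T}\max_{\w\in\We}\Rhatw(\pi^{(t)})$), but neither is what Algorithm~\ref{alg:HedgeOPO} actually does.
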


Next, the mixture empirical process can be further decomposed into \textit{oracle} and \textit{approximate} \textit{mixture empirical processes}. For the oracle empirical process, we make use of Talagrand concentration inequalities and symmetrization arguments \citep{koltchinskii2011oracle} to establish tight high-probability bounds based on the \textit{weighted Rademacher complexity} \citep{mohri2019agnostic} for the policy value function class. We then follow a Dudley chaining argument \citep{dudley1967sizes} to show that this weighted Rademacher complexity can be bounded by measures of policy class complexity, source mixture set skewness, and worst-case source AIPW score variance as they appear in Theorem~\ref{thm:MainTheorem}.

The approximate empirical process can simply be bounded directly by decomposing the differences and relating them to the assumptions on the data-generating process and the nuisance parameter estimation error stated in Assumptions~\ref{ass:dgp}, \ref{ass:LocalDataSizeScaling}, \ref{ass:FiniteSampleError}. For robustness across valid mixtures, for each of these empirical process bounds we take a union bound over the valid mixture weights set.

Moreover, the resulting bound on the mixture empirical process allows us to relate the minimax empirical mixture regret that appears in Lemma~\ref{lem:EG-OPO-SuboptimalityBound} to the minimax mixture regret that finally appears in Theorem~\ref{thm:MainTheorem}. Lastly, the additional $\epsilon$ term in the mixture-agnostic regret bound is due to the discretization error over the minimal cover $\We$ of the valid mixture weights set $\W$.

\subsection{Target Regret Bound}

The following result captures a regret bound on a target distribution that is not captured by the family of valid mixture distributions. Refer to Appendix~\ref{app:BoundingTargetRegret} for a proof of this result.

\begin{restatable}[Target Regret Bound]{theorem}{TargetTheorem}\label{thm:TargetTheorem}
    Let $R(\pihat)$ denote the target regret of $\pihat$ under a target distribution $\calD$ over $\calX\times\calY^d$.
    This is bounded as
    \begin{equation*}
        R(\pihat)\le 2\cdot\disc(\calD_\W, \calD) + \maxwW\Rw(\pihat),
    \end{equation*}
    where
    \begin{equation*}
        \disc(\calD_\W,\calD)=\minwW\maxPi\abs{\Qw(\pi)-Q(\pi)}
    \end{equation*}
    is the minimax mixture policy value discrepancy. Moreover, if the same boundedness condition also holds for $\calD$, then this discrepancy can be further bounded by
    \begin{equation*}
        \disc(\calD_\W,\calD)\le B\cdot\minwW\TV(\calDw,\calD),
    \end{equation*}
    where $\TV$ is the total variation distance.
\end{restatable}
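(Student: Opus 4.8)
The plan is to prove the three claims in sequence, starting from the definition of target regret and introducing an arbitrary reference mixture weight $\w \in \W$ as a pivot.

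\textbf{Step 1: The main decomposition.} First I would write $R(\pihat) = \max_{\pi' \in \Pi} Q(\pi') - Q(\pihat)$ and, for any fixed $\w \in \W$, insert the mixture policy value $\Qw$ around both the optimal target policy and $\pihat$. Writing $\pi^\dagger \in \argmax_{\pi' \in \Pi} Q(\pi')$, I would bound
\begin{align*}
    R(\pihat) &= Q(\pi^\dagger) - \Qw(\pi^\dagger) + \Qw(\pi^\dagger) - \Qw(\pihat) + \Qw(\pihat) - Q(\pihat) \\
    &\le 2\max_{\pi \in \Pi}\abs{\Qw(\pi) - Q(\pi)} + \Rw(\pihat),
\end{align*}
using that $\Qw(\pi^\dagger) - \Qw(\pihat) \le \max_{\pi'\in\Pi}\Qw(\pi') - \Qw(\pihat) = \Rw(\pihat)$ and bounding the two value-difference terms each by the supremum. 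Since this holds for every $\w \in \W$, I take the infimum over $\w \in \W$ on the right-hand side; the second term gets replaced by $\maxwW \Rw(\pihat)$ as a (loose but valid) upper bound independent of the minimizing $\w$, while the first term becomes $2\minwW\maxPi\abs{\Qw(\pi)-Q(\pi)} = 2\disc(\calD_\W,\calD)$. This yields the first displayed inequality.

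\textbf{Step 2: Bounding the discrepancy by total variation.} For the second part, I would fix any $\w \in \W$ and any $\pi \in \Pi$, and write $\Qw(\pi) - Q(\pi) = \E_{Z \sim \calDw}[Y(\pi(X))] - \E_{Z \sim \calD}[Y(\pi(X))]$. Since the integrand $z \mapsto y(\pi(x))$ is bounded in absolute value by $B = \maxC B_\s$ under both $\calDw$ (each source satisfies boundedness by Assumption~\ref{ass:dgp}(a), so the mixture does too) and $\calD$ (by the added hypothesis), the standard variational characterization of total variation distance gives $\abs{\Qw(\pi) - Q(\pi)} \le B \cdot \TV(\calDw, \calD)$. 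Taking $\maxPi$ on the left (the right side does not depend on $\pi$) and then $\minwW$ on both sides gives $\disc(\calD_\W,\calD) \le B\cdot\minwW\TV(\calDw,\calD)$.

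\textbf{Obstacles.} This proof is essentially a two-line insertion argument plus a textbook TV bound, so there is no deep obstacle. The only point requiring a little care is the order of quantifiers in Step 1: the bound $R(\pihat) \le 2\max_\pi\abs{\Qw(\pi)-Q(\pi)} + \Rw(\pihat)$ holds \emph{simultaneously} for all $\w$, so one may take the infimum of the sum over $\w$; relaxing $\Rw(\pihat)$ to $\maxwW\Rw(\pihat)$ before taking that infimum is what decouples the two terms and produces the stated form. One should also make sure the potential outcomes of the mixture $\calDw$ are genuinely bounded by $B$ uniformly in $\w$ — this follows since $\calDw$ is a convex combination of the $\calDc$ and each has support in $[-B_\s, B_\s]^d \subset [-B,B]^d$. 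A minor subtlety is that $\disc$ is defined via $\Qw$ over the completed/marginal distributions on $\calX\times\calY^d$, which is exactly the object appearing in the Mixture Policy Value definition, so no reconciliation of measure spaces is needed.
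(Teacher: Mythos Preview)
Your proposal is correct and follows essentially the same route as the paper: insert $\Qw$ for an arbitrary $\w\in\W$, bound the two value-difference terms by $\maxPi\abs{\Qw(\pi)-Q(\pi)}$ and the middle term by $\Rw(\pihat)$, then decouple by relaxing $\Rw(\pihat)$ to $\maxwW\Rw(\pihat)$ before minimizing over $\w$; the TV bound is likewise identical. Your discussion of the quantifier order and the boundedness of $\calDw$ matches what the paper uses implicitly.
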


This result immediately follows from a decomposition of the target regret into the mixture regret and its associated discrepancy. Note that the worst-case mixture regret here can be bounded using the result of Theorem~\ref{thm:MainTheorem}.

\section{EXPERIMENTS}\label{sec:Experiments}

\subsection{Setup}
In this section, we describe our experimental setup. We consider the source set $\sourceSet=[k]$ with $k=3$ sources, a binary action set $\calA=\{a_1,a_2\}$, and the context space $\calX=[-1, 1]^p$ with $p=2\times q$ where $q=4$. For every data source $\s\in\sourceSet$, we construct the following data-generating process:
\begin{compactitem}
    \item $\Ai\sim\Uniform(\calA)$,
    \item $\Xi\sim\Uniform(\calX)$,
    \item $\Yi(a)|\Xi\sim\Normal\bigrbra{\mui(\Xi;a),\sigma_\s^2}$ for all $a\in\calA$,
\end{compactitem}
where the source reward mean is $\mui(x;a)=x_a^T\theta_\s$ given source parameter $\theta_\s\sim\Normal(\vec{0}, \sigma^2 I_{q})$ with $\sigma^2=5$ and the source reward variance is $\sigma_\s^2=1$. For a given total sample size $n\in\N$, each source $\s\in\sourceSet$ is allocated a local sample sample size determined by the following increasing function $\nc=\nuc(n)=n/k$. By construction, this entire data-generating process satisfies our data assumptions stated in Section~\ref{sec:Preliminaries-Data}.

For the policy class $\Pi$, we choose the class of fixed depth-2 decision trees, and for the OPO oracle, we use the PolicyTree algorithm \citep{sverdrup2020policytree}. Refer to Appendix~\ref{app:AdditionalAlgorithmDetails} for additional details on the algorithm implementation, namely on the cross-fitting strategy for nuisance parameter and AIPW score estimation.

To assess the robustness of our approach, we will compare the empirical regret of different policies under two settings: a fixed source distribution (source $s=1$) and a nearby mixture distribution (mixture weights $\w=[0.9, 0.05, 0.05]$). Under both settings, we will compare the \textit{EG-OPO policy} against two baseline policies: the \textit{aggregate policy} trained on data aggregated from all sources and the \textit{source policy} trained on an equivalent amount of source data, each trained with a single call to the PolicyTree OPO oracle. The analysis considers a range of training sample sizes up to 500. Our plots will display the rolling mean and standard deviation bands of empirical regrets over three seed runs, with regrets computed relative to corresponding optimal models trained on 2,000 similarly sampled data points.

\subsection{Performance on Source Distribution}

We begin by comparing our algorithm against our baselines on the source distribution for a fixed source $s=1$. In Figure~\ref{fig:source_regret}, we plot the source regret $R_1(\pi)$ for the aggregate policy $\pihat_\calS$, source policy $\pihat_1$, and EG-OPO policy $\pihat_{EG}$. We see that the aggregate policy performs poorly due to distribution shift in the data from all other sources. For a similar reason, the EG-OPO policy does not perform as well as the source policy, but it still shows improvement as the sample size increases, consistent with our theoretical results.

\begin{figure}[ht]
    \centering
    \includegraphics[trim=5pt 0pt 50pt 23pt, clip, width=.75\linewidth]{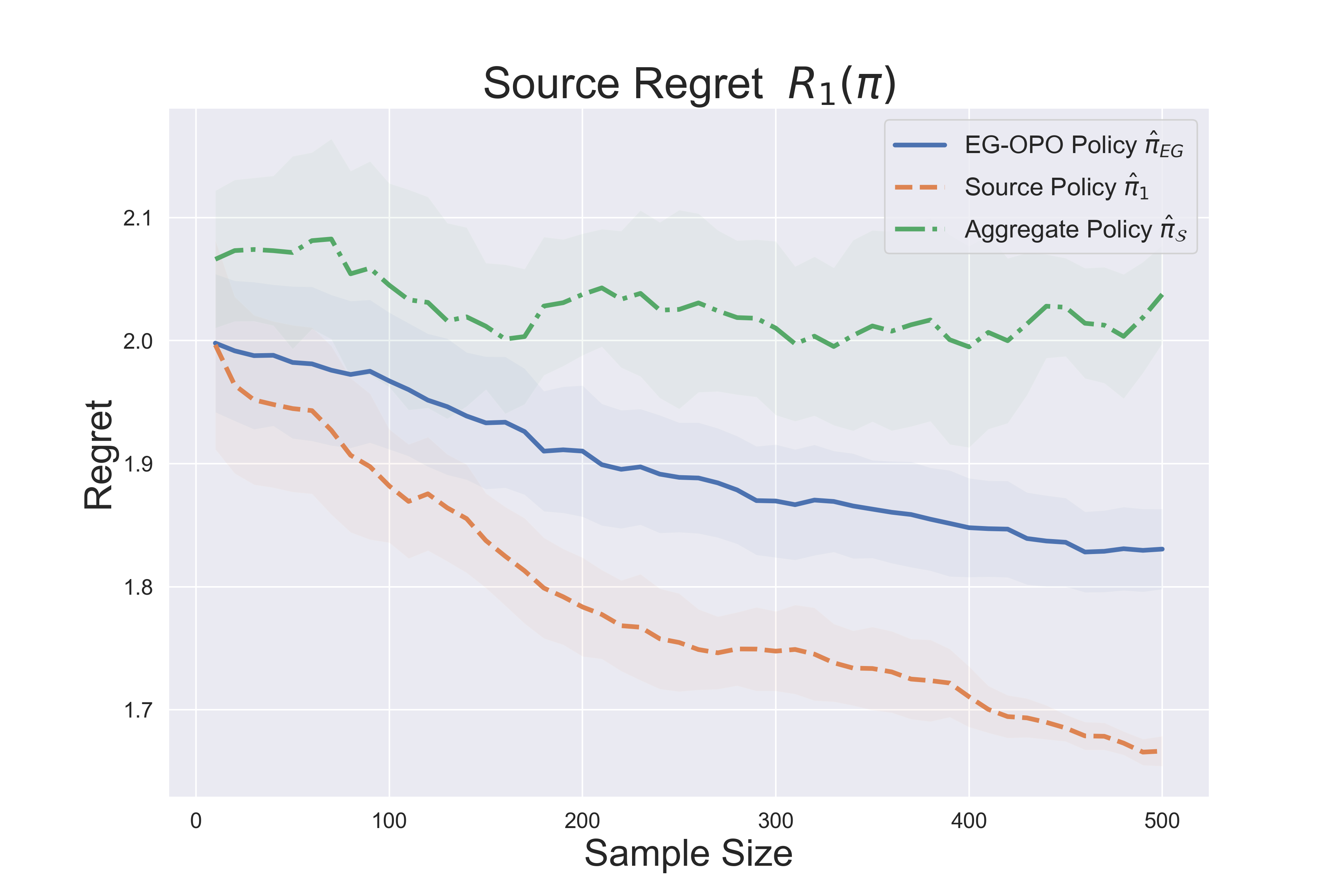}
    \vspace{-1em}
    \caption{Empirical source regret $R_1(\pi)$.}
    \label{fig:source_regret}
    \vspace{-1em}
\end{figure}

\subsection{Performance on Mixture Distribution}

Next, we compare performances on a mixture distribution close to the source distribution $\calD_1$, namely $\calDw$ where $\w=[0.9, 0.05, 0.05]$. In Figure~\ref{fig:mixture_regret}, we plot the mixture regret $R_\lambda(\pi)$ for the same policies. The aggregate policy again performs poorly, as the mixture distribution, although it accounts for other sources, is not close to the uniform distribution. Similarly, the local policy struggles due to the slight distribution shift, despite the mixture weights being close to the source degenerate weights, and performs even worse than the aggregate policy. In contrast, the EG-OPO policy remains robust and performs well in this setting, further supporting the consistent performance of our approach across different distributions uniformly at once.

\begin{figure}[ht]
    \centering
    \includegraphics[trim=5pt 0pt 50pt 23pt, clip, width=.75\linewidth]{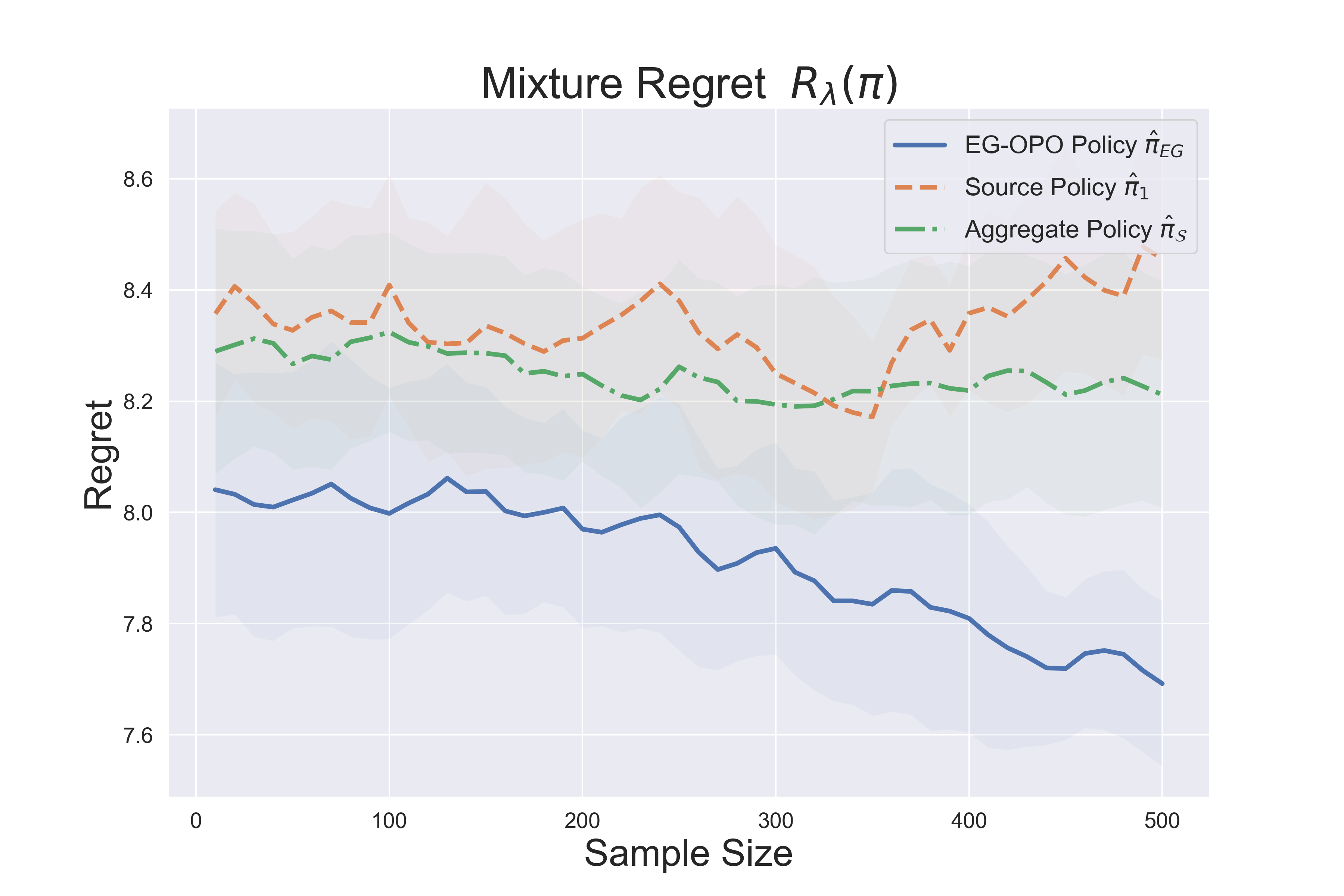}
    \vspace{-1em}
    \caption{Empirical mixture regret $\Rw(\pi)$.}
    \label{fig:mixture_regret}
    \vspace{-1em}
\end{figure}

\section{CONCLUSION}

We presented an approach for robust offline policy learning using observational data from multiple sources, formulated as minimax regret optimization to ensure low regret across target distributions modeled as source mixtures. Our algorithm, combining doubly robust evaluation and no-regret learning strategies, achieves minimal worst-case regret bounds up to a root-$n$ vanishing rate of the total data. Our theoretical analysis and experiments confirm its effectiveness in learning robust generalizable policies across different environments.

\bibliography{references}
\bibliographystyle{plainnat}

\appendix
\onecolumn
\section{AUXILIARY RESULTS}\label{app:AuxiliaryResults}

The following known results will be used in our regret bound proofs. See Chapter 2 of \cite{koltchinskii2011oracle} for discussions of these results.

\begin{lemma}[Hoeffding's inequality]\label{lem:HoeffdingInequality}
    Let $Z_1,\dots,Z_n$ be independent random variables with $Z_i\in[a_i,b_i]$ almost surely. For all $t>0$, the following inequality holds
    \begin{equation*}
        \P\rbra{\biggabs{\sum_{i=1}^nZ_i-\E\bra{Z_i}}\ge t}\le 2\exp\rbra{-\frac{2t^2}{\sum_{i=1}^n(b_i-a_i)^2}}.
    \end{equation*}
\end{lemma}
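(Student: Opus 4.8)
The plan is to prove this via the classical Chernoff exponential-moment method combined with Hoeffding's lemma. First I would reduce to the mean-zero case: set $W_i \coloneqq Z_i - \E[Z_i]$, so the $W_i$ are independent, mean zero, and each $W_i$ takes values in an interval of length exactly $b_i - a_i$. It then suffices to bound $\P(\sum_i W_i \ge t)$ and $\P(\sum_i W_i \le -t)$ separately, since a union bound over these two events yields the factor $2$ and the absolute value appearing in the statement.

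For the upper tail, I would fix $s > 0$, apply Markov's inequality to the nonnegative random variable $e^{s \sum_i W_i}$, and then use independence to factor the resulting moment generating function as $\P(\sum_i W_i \ge t) \le e^{-st} \prod_i \E[e^{s W_i}]$. The crucial ingredient is Hoeffding's lemma: any mean-zero random variable $W$ supported in $[c,d]$ satisfies $\E[e^{sW}] \le \exp(s^2 (d-c)^2 / 8)$. I would establish this by studying $\psi(s) \coloneqq \log \E[e^{sW}]$, observing that $\psi(0) = 0$ and $\psi'(0) = \E[W] = 0$, and showing $\psi''(s) \le (d-c)^2/4$ uniformly in $s$; the latter holds because $\psi''(s)$ equals the variance of $W$ under the exponentially tilted law with density proportional to $e^{sW}$, which is still supported on $[c,d]$, and any such distribution has variance at most $(d-c)^2/4$ by Popoviciu's inequality. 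A second-order Taylor expansion of $\psi$ then gives $\psi(s) \le s^2 (d-c)^2 / 8$.

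Putting the pieces together yields $\P(\sum_i W_i \ge t) \le \exp\!\big(-st + \tfrac{s^2}{8} \sum_i (b_i - a_i)^2\big)$, and minimizing the exponent over $s > 0$ (the optimum is $s = 4t / \sum_i (b_i - a_i)^2$) gives the bound $\exp(-2t^2 / \sum_i (b_i - a_i)^2)$. Applying the identical argument to the variables $-W_i$ controls the lower tail by the same quantity, and the union bound over the two tails finishes the proof.

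I expect the only genuinely delicate point to be Hoeffding's lemma, specifically the uniform second-derivative bound $\psi''(s) \le (d-c)^2/4$. The cleanest justification is the tilted-measure variance interpretation together with Popoviciu's inequality; a more hands-on alternative is to bound $e^{sx}$ on $[c,d]$ by its secant line, reducing $\E[e^{sW}]$ to an explicit function of $s$ whose logarithm one then bounds directly, but that route involves more calculation. Everything else — the Chernoff step and the final one-dimensional optimization — is routine.
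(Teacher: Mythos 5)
Your proposal is correct: the Chernoff bound combined with Hoeffding's lemma (with the tilted-measure variance argument and Popoviciu's inequality giving $\psi''(s)\le (d-c)^2/4$) is the standard and complete derivation, and the final optimization over $s$ checks out. The paper does not prove this lemma at all --- it is stated as a known auxiliary result with a citation to Chapter 2 of \cite{koltchinskii2011oracle} --- so there is nothing to compare against; your argument is exactly the textbook proof one would supply.
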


\begin{lemma}[Talagrand's inequality]\label{lem:TalagrandInequality}
    Let $Z_1,\dots,Z_n$ be independent random variables in $\calZ$. For any class of real-valued functions $\calH$ on $\calZ$ that is uniformly bounded by a constant $U>0$ and for all $t>0$, the following inequality holds
    \begin{equation*}
        \P\rbra{\biggabs{\supH\Bigabs{\sum_{i=1}^n h(Z_i)} - \E\Bigbra{\supH\Bigabs{\sum_{i=1}^n h(Z_i)}}}\ge t}\le C\exp\rbra{-\frac{t}{CU}\log\rbra{1+\frac{Ut}{D}}},
    \end{equation*}
    where $C>0$ is a universal constant and $D\ge\E\bra{\supH\sum_{i=1}^n h^2(Z_i)}$.
\end{lemma}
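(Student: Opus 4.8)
The plan is to prove this Bernstein--Bennett tail bound via the \emph{entropy method} (following Ledoux, Massart, and Bousquet), as this route delivers the sharp variance proxy $D$ rather than the crude bound $nU^2$. First I would invoke separability to reduce $\calH$ to a countable subclass so that $Z\coloneqq\supH\bigabs{\sumin h(Z_i)}$ is measurable, and I would study the centered log-moment generating function $L(\lambda)\coloneqq\log\E\bra{e^{\lambda(Z-\E Z)}}$; the two-sided tail in the statement then follows from Chernoff bounds applied to each side once $L$ is controlled.

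The engine of the argument is the \emph{subadditivity (tensorization) of entropy}: for independent $Z_1,\dots,Z_n$ and any $\lambda$,
\[
    \mathrm{Ent}\rbra{e^{\lambda Z}}\le\sumin\E\bra{\mathrm{Ent}^{(i)}\rbra{e^{\lambda Z}}},
\]
where $\mathrm{Ent}^{(i)}$ is entropy conditional on $(Z_j)_{j\neq i}$. For each $i$ I introduce the leave-one-out supremum $Z^{(i)}\coloneqq\supH\bigabs{\sum_{j\neq i}h(Z_j)}$, measurable with respect to $(Z_j)_{j\neq i}$. The uniform bound $\abs{h}\le U$ gives $0\le Z-Z^{(i)}\le U$, and evaluating at a near-maximizer $h^*$ of $Z$ yields the self-bounding estimate $\sumin(Z-Z^{(i)})^2\le\supH\sumin h^2(Z_i)$, whose expectation is at most $D$. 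A variational (Bousquet-type) bound on each conditional entropy term then converts these increment estimates into a usable bound on the right-hand side of the tensorization inequality.

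Assembling the pieces produces a first-order differential inequality for $L$ of Bennett type. Using the identity $\lambda L'(\lambda)-L(\lambda)=\mathrm{Ent}(e^{\lambda Z})/\E[e^{\lambda Z}]$ together with the entropy bound and the initial conditions $L(0)=L'(0)=0$, one integrates to the Poissonian log-MGF bound $L(\lambda)\le\tfrac{D}{U^2}\rbra{e^{\lambda U}-\lambda U-1}$. The Chernoff step $\P(Z-\E Z\ge t)\le\exp(-\sup_{\lambda>0}[\lambda t-L(\lambda)])$ then gives the Bennett tail $\exp(-\tfrac{D}{U^2}h(\tfrac{Ut}{D}))$ with $h(u)=(1+u)\log(1+u)-u$, and the elementary bound $h(u)\ge\tfrac{u}{2}\log(1+u)$ collapses this to the stated form $C\exp(-\tfrac{t}{CU}\log(1+\tfrac{Ut}{D}))$; the lower tail is symmetric. \textbf{The main obstacle} is the self-bounding step: one must control the conditional-entropy increments so the variance factor collapses exactly to $D\ge\E\bra{\supH\sumin h^2(Z_i)}$ and not to the pessimistic $nU^2$. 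Calibrating the roles of $U$ (which governs the heavy, exponential tail through the $\log(1+Ut/D)$ correction) and $D$ (which governs the sub-Gaussian core) is the delicate heart of Talagrand's inequality, and is exactly what the Bennett-type differential inequality is engineered to encode.
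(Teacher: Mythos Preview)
The paper does not prove this lemma at all: it is listed among the auxiliary results in Appendix~A with the remark ``See Chapter~2 of \cite{koltchinskii2011oracle} for discussions of these results,'' and no argument is given. So there is no paper proof to compare against; your proposal supplies what the paper simply cites.

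Your sketch via the entropy method (tensorization of entropy, self-bounding increments, Bennett-type differential inequality for the log-MGF, then Chernoff and the elementary bound $h(u)\ge\tfrac{u}{2}\log(1+u)$) is the standard modern route to Talagrand's concentration inequality, in the spirit of Ledoux, Massart, and Bousquet, and is exactly the kind of argument Koltchinskii's Chapter~2 presents. One small technical slip: with $Z=\supH\bigabs{\sumin h(Z_i)}$ and the leave-one-out $Z^{(i)}=\supH\bigabs{\sum_{j\neq i}h(Z_j)}$, the inequality $Z-Z^{(i)}\ge 0$ need not hold (e.g., a single $h$ with $h(Z_1)=1$, $h(Z_2)=-1$ gives $Z=0<1=Z^{(1)}$). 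What is true is $\abs{Z-Z^{(i)}}\le U$, and the usual fix is either to define $Z^{(i)}$ as the infimum over a replacement coordinate (so the one-sided bound holds by construction) or to run the argument on $\supH\sumin h(Z_i)$ and handle the absolute value by enlarging $\calH$ to $\calH\cup(-\calH)$. With that adjustment the self-bounding estimate $\sumin(Z-Z^{(i)})^2\le\supH\sumin h^2(Z_i)$ goes through and the rest of your outline is correct.
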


\begin{lemma}[Ledoux-Talagrand contraction inequality]\label{lem:ContractionInequality}
    Let $Z_1,\dots,Z_n$ be independent random variables in $\calZ$. For any class of real-valued functions $\calH$ on $\calZ$ and any $L$-Lipschitz function $\varphi$, the following inequality holds
    \begin{equation*}
        \E\bra{\supH\biggabs{\sum_{i=1}^n\eps_i(\varphi\circ h)(Z_i)}}\le 2L\E\bra{\supH\biggabs{\sum_{i=1}^n\eps_i h(Z_i)}},
    \end{equation*}
    where $\eps_1,\dots,\eps_n$ are independent Rademacher random variables.
\end{lemma}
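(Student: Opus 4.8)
The plan is to reduce to the case $L=1$ by homogeneity and then prove the classical contraction (comparison) principle by replacing $\varphi\circ h$ with $h$ one coordinate at a time. Both sides of the claimed inequality are homogeneous of degree one in $\varphi$: replacing $\varphi$ by $\varphi/L$ turns an $L$-Lipschitz map into a $1$-Lipschitz (contraction) map and divides the left-hand side by $L$. So I may assume throughout that $\varphi$ is a contraction. The substitution $\varphi\circ h\mapsto h$ will itself cost nothing; the factor $2$ in the statement is produced only at the very end, when the outer absolute value is converted into one-sided suprema.

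The heart of the argument is a single-coordinate conditioning step. Fix an index $k$ and condition on $Z_1,\dots,Z_n$ and on $\eps_i$ for all $i\neq k$, collecting the remaining terms into $W(h)\coloneqq\sum_{i\neq k}\eps_i\,\psi_i(h(Z_i))$, where each $\psi_i$ is either $\varphi$ or the identity depending on the stage of the induction. Averaging over the two equally likely values of $\eps_k$ and merging the two one-sided suprema into a supremum over an ordered pair $(h,h')$ gives
\[
    \E_{\eps_k}\sup_{h}\bra{\eps_k\,\varphi(h(Z_k))+W(h)}
    =\frac12\sup_{h,h'}\bra{\varphi(h(Z_k))-\varphi(h'(Z_k))+W(h)+W(h')}.
\]
Applying $\varphi(h(Z_k))-\varphi(h'(Z_k))\le\abs{h(Z_k)-h'(Z_k)}$, and then using that the supremum over ordered pairs is invariant under swapping $h\leftrightarrow h'$ (so the absolute value may be opened as $h(Z_k)-h'(Z_k)$), the right-hand side is at most $\frac12\sup_{h,h'}\bra{h(Z_k)-h'(Z_k)+W(h)+W(h')}=\E_{\eps_k}\sup_h\bra{\eps_k h(Z_k)+W(h)}$. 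Taking expectations over the conditioned variables and iterating this replacement for $k=1,\dots,n$ yields the one-sided contraction inequality
\[
    \E\sup_{h}\sum_{i=1}^n\eps_i(\varphi\circ h)(Z_i)\le\E\sup_{h}\sum_{i=1}^n\eps_i\,h(Z_i).
\]

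It then remains to reinstate the outer absolute value and account for the factor $2$. Writing $\abs{s}=\max(s,-s)$, the absolute-value supremum is controlled by applying the one-sided comparison to the two contractions $\varphi$ and $-\varphi$ (both $1$-Lipschitz); the distributional symmetry of the Rademacher variables, $(\eps_i)$ and $(-\eps_i)$ having the same law, identifies each of the two resulting one-sided terms with $\E\sup_h\sum_i\eps_i h(Z_i)\le\E\sup_h\abs{\sum_i\eps_i h(Z_i)}$, and summing the two contributions produces exactly the factor $2$. Restoring the scaling by $L$ gives the stated bound.

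The main obstacle is the single-coordinate step: the conditioning converts the expectation into a supremum over pairs $(h,h')$, and one must apply the Lipschitz bound and then de-symmetrize (drop the absolute value via the ordered-pair relabeling) without losing any constant, so that the entire cost of the substitution is deferred to the final absolute-value bookkeeping. Checking that the relabeling is valid uniformly over the conditioned randomness and that the supremum-over-pairs manipulations are well defined are the delicate points; the remaining steps are routine averaging and the symmetry argument that yields the constant $2$.
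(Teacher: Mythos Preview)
The paper does not supply its own proof of this lemma: it is listed among the ``known results'' in Appendix~A with a pointer to \cite{koltchinskii2011oracle}, so there is nothing to compare against directly. Your coordinate-by-coordinate argument for the one-sided inequality is exactly the classical Ledoux--Talagrand proof, and that part is clean.

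There is, however, a genuine gap in your final step, and it is tied to a missing hypothesis in the lemma as stated. To pass from the one-sided bound to the two-sided one you write $\sup_h|\,\cdot\,|=\max(\sup_h(\cdot),\sup_h(-\cdot))$ and then ``sum the two contributions''. But $\E\max(A,B)\le\E A+\E B$ is not valid in general; it requires $\min(A,B)\ge 0$. Without the standard hypothesis $\varphi(0)=0$ the statement is in fact false: take $n=1$, a singleton class $\calH=\{h\}$ with $h(Z_1)=1$, and $\varphi(x)=x+10$; then the left side equals $11$ while the right side equals $2$. The usual fix is to assume $\varphi(0)=0$, augment $\calH$ by the zero function so that the one-sided suprema are nonnegative, and only then split $\max$ as a sum; after applying your one-sided inequality on the enlarged class, the extra zero function can be dropped on the right because it does not affect $\sup_h|\sum_i\eps_i h(Z_i)|$. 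This is a small repair, and the paper only ever invokes the lemma with $\varphi(u)=u^2$, for which $\varphi(0)=0$ holds, but as written your last paragraph does not go through.
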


Lastly, we state an auxiliary inequality that serves as a typical candidate for the quantity denoted by $D$ above in Talagrand's inequality. This result follows as a corollary of the Ledoux-Talagrand contraction inequality and a symmetrization argument. We provide a proof for completeness.

\begin{lemma}\label{lem:ExpectedSupSumBound}
    Let $Z_1,\dots,Z_n$ be independent random variables in $\calZ$. For any class of real-valued functions $\calH$ on $\calZ$ and any $L$-Lipschitz function $\varphi$, the following inequality holds
    \begin{equation*}
        \E\bra{\supH\sum_{i=1}^n (\varphi\circ h)(Z_i)}\le\supH\sum_{i=1}^n\E\bigbra{(\varphi\circ h)(Z_i)}+4L\E\bra{\supH\biggabs{\sum_{i=1}^n\eps_ih(Z_i)}},
    \end{equation*}
    where $\eps_1,\dots,\eps_n$ are independent Rademacher random variables.
\end{lemma}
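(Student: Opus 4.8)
The plan is to combine a standard symmetrization step with the Ledoux--Talagrand contraction inequality (Lemma~\ref{lem:ContractionInequality}). First I would introduce an independent ghost sample $Z_1',\dots,Z_n'$, i.i.d.\ copies of $Z_1,\dots,Z_n$, and write, for each fixed realization of the $Z_i$,
\begin{equation*}
    \sum_{i=1}^n (\varphi\circ h)(Z_i) = \sum_{i=1}^n \E\bigbra{(\varphi\circ h)(Z_i')} + \E'\biggbra{\sum_{i=1}^n\bigrbra{(\varphi\circ h)(Z_i)-(\varphi\circ h)(Z_i')}},
\end{equation*}
where $\E'$ denotes expectation over the ghost sample only. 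Taking $\sup_{h\in\calH}$ and then $\E$, the first term is upper bounded by $\sup_{h\in\calH}\sum_{i=1}^n\E[(\varphi\circ h)(Z_i)]$ (moving $\sup$ inside the deterministic sum), and the second term, after pushing $\sup_h$ inside $\E'$ and applying Jensen, is at most $\E\bra{\supH\abs{\sum_{i=1}^n\bigrbra{(\varphi\circ h)(Z_i)-(\varphi\circ h)(Z_i')}}}$.

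Next I would symmetrize: since $(Z_i,Z_i')$ is exchangeable, multiplying the $i$-th difference by a Rademacher sign $\eps_i$ does not change the distribution, so that expectation equals $\E\bra{\supH\abs{\sum_{i=1}^n\eps_i\bigrbra{(\varphi\circ h)(Z_i)-(\varphi\circ h)(Z_i')}}}$. A triangle inequality and the fact that $(\eps_i Z_i')$ has the same law as $(\eps_i Z_i)$ then splits this into $2\,\E\bra{\supH\abs{\sum_{i=1}^n\eps_i(\varphi\circ h)(Z_i)}}$. Finally, applying the Ledoux--Talagrand contraction inequality (Lemma~\ref{lem:ContractionInequality}) with the $L$-Lipschitz map $\varphi$ bounds this last quantity by $2L\,\E\bra{\supH\abs{\sum_{i=1}^n\eps_i h(Z_i)}}$, yielding the claimed factor $4L$ overall.

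I do not expect a genuine obstacle here, as this is a textbook symmetrization-plus-contraction argument; the one point requiring mild care is the very first decomposition, where one must split $(\varphi\circ h)(Z_i)$ into its mean plus a centered part \emph{before} taking the supremum, so that the (non-random) mean term can be pulled out with $\sup$ passing harmlessly through a deterministic sum, rather than attempting to center after taking $\sup_h$ (which would not telescope cleanly). One should also note that Lemma~\ref{lem:ContractionInequality} as stated applies to the absolute value of the Rademacher sum, which is exactly the form produced by the triangle-inequality split above, so no symmetrized-without-absolute-value variant is needed.
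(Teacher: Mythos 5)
Your proposal is correct and follows essentially the same route as the paper's proof: center the sum by the (sup of the) expectations, bound the remainder by a symmetrized Rademacher average via a ghost sample (picking up the factor $2$), and then apply the Ledoux--Talagrand contraction inequality to remove $\varphi$ (picking up the factor $2L$), for $4L$ in total. The only difference is presentational—you spell out the ghost-sample symmetrization that the paper invokes as a "standard symmetrization argument."
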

\begin{proof}
    We have that
    \begin{align}
        &\E\bra{\supH\sumin(\varphi\circ h)(Z_i)}-\supH\sumin\E\bigbra{(\varphi\circ h)(Z_i)} \label{eq:Lemma-ExpectedSupSumBound-eq1} \\
        &=\E\bra{\supH\sumin(\varphi\circ h)(Z_i)-\supH\sumin\E\bigbra{(\varphi\circ h)(Z_i)}} \\
        &\le\E\bra{\supH\biggabs{\sumin(\varphi\circ h)(Z_i)-\sumin\E\bigbra{(\varphi\circ h)(Z_i)}}} \label{eq:Lemma-ExpectedSupSumBound-eq3} \\
        &=\E\bra{\supH\biggabs{\sumin\Bigrbra{(\varphi\circ h)(Z_i)-\E\bigbra{(\varphi\circ h)(Z_i)}}}} \\
        &\le2\E\bra{\supH\biggabs{\sumin\eps_i(\varphi\circ h)(Z_i)}} \label{eq:Lemma-ExpectedSupSumBound-eq5} \\
        &\le 4L\E\bra{\supH\biggabs{\sumin\eps_i h(Z_i)}}. \label{eq:Lemma-ExpectedSupSumBound-eq6}
    \end{align}
    Inequality \eqref{eq:Lemma-ExpectedSupSumBound-eq3} follows from the triangle inequality, inequality \eqref{eq:Lemma-ExpectedSupSumBound-eq5} follows from a standard symmetrization argument (see \cite{koltchinskii2011oracle}), and inequality \eqref{eq:Lemma-ExpectedSupSumBound-eq6} follows from the Ledoux-Talagrand contraction inequality (see Lemma \ref{lem:ContractionInequality}). The result follows by moving the second term in Equation \eqref{eq:Lemma-ExpectedSupSumBound-eq1} to the right-hand side in the last inequality.
\end{proof}

\section{COMPLEXITY AND HETEROGENEITY MEASURES}\label{app:PolicyClassComplexityMeasures}

In this section, we introduce important quantities of policy class complexity and source heterogeneity that appear in our analysis. All throughout, we let $n=\sumiM\ni$ be the total sample size across sources, $\nC=(\nc)_{\source\in\sourceSet}$ the vector of sample sizes across sources, and $\barn=(\ni/n)_{\source\in\sourceSet}$ the empirical distribution of samples across sources. 

\subsection{Weighted Rademacher Complexity}
Our learning bounds will rely on the following weighted multiple-source generalization of Rademacher complexity introduced in \cite{mohri2019agnostic}.

\begin{definition}[Weighted Rademacher complexity]
    Suppose there is a set of sources $\sourceSet$, where each source $\source\in\sourceSet$ possesses a data-generating distribution $\calDc$ defined over a common space $\calZ$.
    For each source $\source\in\sourceSet$, consider a collection of independently sampled random variables
    $Z_1^\source,\dots,Z_{\ni}^\source\sim\calDc$, and let $Z=\{Z_i^\source\mid \source\in\sourceSet, i\in[\ni]\}$ denote the set of samples across all sources. Additionally, let $\eps=\{\epsij\mid \source\in\sourceSet, i\in[n_\source]\}$ be a corresponding set of independent Rademacher random variables.

    The \textit{empirical weighted Rademacher complexity} of a function class $\calF$ on $\calZ$ given multi-source data $Z$ under fixed mixture weights $\w\in\Delta_\sourceSet$ and sample sizes $\nC$ is
    \begin{align*}
        \frakR_{\w,n_\sourceSet}(\calF;Z)\coloneqq\E\bra{\supF\biggabs{\sumijw\epsij f(\Zij)}\ \Big|\ Z},
    \end{align*}
    where the expectation is taken with respect to the Rademacher random variables $\eps$.
    Additionally, the \textit{weighted Rademacher complexity} of $\calF$ under fixed mixture weights $\w\in\Delta_\sourceSet$ and sample sizes $n_\sourceSet$ is
    \begin{align*}
        \frakR_{\w,\nC}(\calF)\coloneqq\E\bra{\supF\biggabs{\sumijw\epsij f(\Zij)}},
    \end{align*}
    where the expectation is taken with respect to the multi-source random variables $Z$ and the independent Rademacher random variables $\eps$.
\end{definition}

\subsection{Hamming Distance \& Entropy Integral}\label{app:HammingDistance}

We provide additional details on the definition of the entropy integral introduced in Section \ref{sec:ComplexityAndSkewness}.

\begin{definition}[Hamming distance, covering number, and entropy integral]\label{def:HammingDistance-CoveringNumber-EntropyIntegral}
    Consider a policy class $\Pi$ and a multi-source covariate set $x=\x\subset\calX$ across sources $\sourceSet$ with source sample sizes $\nC$. We define the following:
    \begin{enumerate}[(a)]
        \item the Hamming distance between any two policies $\pi_1,\pi_2\in\Pi$ given multi-source covariate set $x$  is
        \begin{equation*}
            \Ham(\pi_1,\pi_2;x)\coloneqq\frac{1}{\sumiM\nc}\sumiM\sumjni\ones\{\pi_1(\xij)\neq\pi_2(\xij)\};
        \end{equation*}
        
        \item an $\epsilon$-cover of $\Pi$ under the Hamming distance given covariate set $x$ is any policy set $\Pi_\epsilon\subset\Pi$ such that for any $\pi\in\Pi$ there exists some $\pi'\in \Pi_\epsilon$ such that $\Ham(\pi,\pi';x)\le\epsilon$;
    
        \item the $\epsilon$-covering number of $\Pi$ under the Hamming distance given covariate set $x$ is
        \begin{equation*}
            N_\Ham(\epsilon,\Pi;x)\coloneqq\min\{|\Pi_\epsilon|\mid \Pi_\epsilon\text{ is an $\epsilon$-cover of $\Pi$ w.r.t.~$\Ham(\cdot,\cdot;x)$}\};
        \end{equation*}
        
        \item the $\epsilon$-covering number of $\Pi$ under the Hamming distance is
        \begin{equation*}
            N_\Ham(\epsilon,\Pi)\coloneqq\sup\{N_\Ham(\epsilon,\Pi;x)\mid x\in\calX_\sourceSet\},
        \end{equation*}
        where $\calX_\sourceSet$ is the set of all covariate sets in $\calX$ across sources $\sourceSet$ with arbitrary sample sizes;
        
        \item the entropy integral of $\Pi$ is
        \begin{equation*}
            \kappa(\Pi)\coloneqq\int_0^1\sqrt{\log N_\Ham(\epsilon^2,\Pi)}d\epsilon.
        \end{equation*}
    \end{enumerate}
\end{definition}

\subsection{\texorpdfstring{$\ell_\lambda$}{ } Distance}\label{app:ellw2Distance}
Throughout our analysis, we will consider the function class
\begin{equation*}
    \calF_\Pi\coloneqq\{Q(\cdot,\pi):\calZ\to\R\mid\pi\in\Pi\},
\end{equation*}
where
\begin{equation*}
    Q(\zij;\pi)\coloneqq\gij(\pi(\xij))
\end{equation*}
for any covariate-score vector $\zij=(\xij,\gij)\in\calZ=\calX\times\R^d$ and $\pi\in\Pi$, where $\gij(a)$ is the $a$-th coordinate of the score vector $\gij$.

\begin{definition}[$\ellw$ distance and covering number]
    Consider a policy class $\Pi$, function class $\calF_\Pi$, and a multi-source covariate-score set $z=\{\zij\mid \source\in\sourceSet,i\in[\nc]\}\subset\calZ$ across sources $\sourceSet$ with source sample sizes $\nC$ and source mixture weights $\w$. We define:
    \begin{enumerate}[(a)]
        \item the $\ellw$ distance with respect to function class $\calF_\Pi$ between any two policies $\pi_1,\pi_2\in\Pi$ given covariate-score set $z$ is
        \begin{equation*}
            \ellw(\pi_1,\pi_2;z)=\sqrt{\frac{\sumijwsq\bigrbra{Q(\zij;\pi_1)-Q(\zij;\pi_2)}^2}{\sup_{\pi_a,\pi_b\in\Pi}\sumijwsq\bigrbra{Q(\zij;\pi_a)-Q(\zij;\pi_b)}^2}};
        \end{equation*}
        
        \item an $\epsilon$-cover of $\Pi$ under the $\ellw$ distance given covariate-score set $z$ is any policy set $\Pi_\epsilon$ such that for any $\pi\in\Pi$ there exists some $\pi'\in\Pi_\epsilon$ such that $\ellw(\pi,\pi';z)\le\epsilon$;
        
        \item the $\epsilon$-covering number of $\Pi$ under the $\ellw$ distance given covariate-score set $z$ is
        \begin{equation*}
            N_{\ellw}(\epsilon,\Pi;z)\coloneqq\min\{|\Pi_\epsilon|\mid \Pi_\epsilon\text{ is an $\epsilon$-cover of $\Pi$ w.r.t.~$\ellw(\cdot,\cdot;z)$}\}.
        \end{equation*}
    \end{enumerate}
\end{definition}

\vspace{1em}
The following lemma relates the covering numbers of the two policy distances we have defined.

\begin{lemma}
Let $z=\{\zij\mid \source\in\sourceSet,i\in[\nc]\}\subset\calZ$ be a multi-source covariate-score set across sources $\sourceSet$ with source sample sizes $\nC$ and source mixture weights $\w$. For any $\epsilon>0$,
\begin{equation*}
    N_{\ellw}(\epsilon,\Pi;z)\le N_{\Ham}(\epsilon^2,\Pi).
\end{equation*}
\end{lemma}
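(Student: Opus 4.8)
The plan is to show that any $\epsilon^2$-cover of $\Pi$ under the Hamming distance is automatically an $\epsilon$-cover under the $\ellw$ distance, which immediately gives the covering-number inequality. The key observation is that $\calF_\Pi$ consists of functions $Q(\cdot;\pi)$ that depend on $\pi$ only through the action $\pi(\xij)$ selected at each covariate, so whenever two policies agree at a covariate $\xij$ their corresponding $Q$-values coincide, and whenever they disagree the squared difference $(Q(\zij;\pi_1)-Q(\zij;\pi_2))^2$ is at most the worst-case squared gap in a single coordinate of the score vector, which we can bound uniformly.

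First I would fix a multi-source covariate-score set $z$ and, to compare the two distances, introduce the normalization constant in the $\ellw$ distance, namely $M \coloneqq \sup_{\pi_a,\pi_b\in\Pi}\sumijwsq\bigrbra{Q(\zij;\pi_a)-Q(\zij;\pi_b)}^2$. Next, for any two policies $\pi_1,\pi_2$, I would bound the numerator of $\ellw(\pi_1,\pi_2;z)^2$ by observing that $\bigrbra{Q(\zij;\pi_1)-Q(\zij;\pi_2)}^2 = 0$ whenever $\pi_1(\xij) = \pi_2(\xij)$, so that only the disagreement indices contribute. On those disagreement indices, the squared difference is at most $\sup_{a,a'}\bigrbra{\gij(a)-\gij(a')}^2$, which in turn is at most the maximal per-index squared gap that also drives the supremum $M$. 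This is exactly where I would need to be careful: I want to argue that the summand attained under full disagreement (which is what an adversarial $\pi_a,\pi_b$ pair could achieve coordinate-wise at index $(\s,i)$) dominates the summand under $\pi_1,\pi_2$, so that the Hamming-weighted partial sum is controlled by the same weighted gaps appearing in $M$.

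Concretely, I would write $\text{(numerator)} \le \sumij \fracwinisq \bigrbra{Q(\zij;\pi_1)-Q(\zij;\pi_2)}^2 \cdot \ones\{\pi_1(\xij)\ne\pi_2(\xij)\} \le \sumij \fracwinisq c_{i}^{\s} \cdot \ones\{\pi_1(\xij)\ne\pi_2(\xij)\}$, where $c_i^\s \coloneqq \max_{a,a'}(\gij(a)-\gij(a'))^2$ is the largest squared coordinate gap at index $(\s,i)$; crucially $M \ge \sumij \fracwinisq c_i^\s$ because a maximizing pair $\pi_a,\pi_b$ can be chosen (pointwise, since $\Pi$ need only realize the relevant action choices — or, if $\Pi$ is restricted, one replaces $c_i^\s$ by the best achievable gap within $\Pi$, and the same inequality $M\ge\sumij\fracwinisq c_i^\s$ still holds with this adjusted definition). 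Dividing by $M$, the ratio is bounded by a $c_i^\s$-weighted average of the disagreement indicators; then I would invoke the hypothesis that the Hamming distance is at most $\epsilon^2$ — but here the Hamming distance uses uniform weights $1/\sumiM\nc$ rather than the $\fracwinisq$-weights, so I need one more step. The resolution is that the Hamming cover is defined as a \emph{supremum over all covariate sets of arbitrary size}, so one can replicate each index $(\s,i)$ a number of times proportional to its weight $c_i^\s\cdot\fracwinisq$ (approximated by rationals, then cleared to integers) to realize the weighted average as a uniform Hamming average on an enlarged covariate set; since $N_{\Ham}(\epsilon^2,\Pi)$ takes the sup over all such sets, a Hamming $\epsilon^2$-cover on the enlarged set pulls back to the desired $\ellw$ $\epsilon$-cover on $z$.

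The main obstacle is precisely this reconciliation of the \emph{weighted} $\ellw$ distance with the \emph{uniformly-weighted} Hamming distance: one must exploit the ``arbitrary sample size'' clause in the definition of $N_{\Ham}(\epsilon,\Pi)$ (Definition~\ref{def:HammingDistance-CoveringNumber-EntropyIntegral}(d)) to absorb the data-dependent weights $\fracwinisq$ and the score gaps $c_i^\s$ into integer multiplicities, and then argue that a rational approximation of these multiplicities suffices (the covering number being monotone and the inequality being strict/non-strict as needed allows passing to the limit). Once that duplication argument is in place, the rest is the elementary chain of inequalities sketched above: $\ellw(\pi_1,\pi_2;z)^2 \le \Ham(\pi_1,\pi_2;\tilde x) \le \epsilon^2$ for the enlarged covariate set $\tilde x$, so the Hamming cover is an $\ellw$ cover and $N_{\ellw}(\epsilon,\Pi;z) \le N_{\Ham}(\epsilon^2,\Pi)$ follows by minimality of the covering numbers.
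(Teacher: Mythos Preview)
Your overall strategy---show that a Hamming $\epsilon^2$-cover is automatically an $\ellw$ $\epsilon$-cover via a replication/duplication argument---is exactly the paper's approach. But the specific inequality you rely on is false, and this is the gap.

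You claim that $M \ge \sumij \fracwinisq c_i^\s$, where $c_i^\s$ is the maximal (or best-achievable-in-$\Pi$) squared score gap at index $(\s,i)$. The inequality actually goes the other way: $M = \supPiab \sumij \fracwinisq (Q(\zij;\pi_a)-Q(\zij;\pi_b))^2 \le \sumij \fracwinisq c_i^\s$, since the supremum of a sum is at most the sum of pointwise suprema. A single pair $(\pi_a,\pi_b)\in\Pi$ generally cannot realize the per-index maximum simultaneously at every $(\s,i)$ when $\Pi$ is restricted; your parenthetical ``adjusted definition'' does not rescue this. With the inequality reversed, your bound becomes $\ellw^2(\pi_1,\pi_2;z) \le \Ham(\pi_1,\pi_2;\tilde x)\cdot \frac{\sumij \fracwinisq c_i^\s}{M}$, and the factor on the right can exceed $1$, so you cannot conclude $\ellw \le \epsilon$.

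The paper avoids this by making the replication multiplicities \emph{pair-dependent}: for the given $\pi$ and its Hamming neighbor $\pi'$, it sets $\tildenij \propto \fracwinisq (Q(\zij;\pi)-Q(\zij;\pi'))^2$. This is legitimate because $N_{\Ham}(\epsilon^2,\Pi)$ is a supremum over \emph{all} covariate sets, so once $\pi'$ is chosen it $\epsilon^2$-covers $\pi$ on every set, including one constructed afterwards from $(\pi,\pi')$. With these multiplicities the indicator $\ones\{\pi(\xij)\neq\pi'(\xij)\}$ equals $1$ whenever $\tildenij>0$, giving $\Ham(\pi,\pi';\tilde z)\ge \frac{m}{m+n}\ellw^2(\pi,\pi';z)$ directly, without ever needing to compare $M$ to a pointwise-sup sum. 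Letting $m\to\infty$ finishes the argument. Your replication idea is right; you just need to replicate with weights $(Q(\zij;\pi)-Q(\zij;\pi'))^2$ rather than $c_i^\s$.
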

\begin{proof}
    Fix $\epsilon>0$. 
    Without loss of generality, we assume $N_{\Ham}(\epsilon^2,\Pi)<\infty$, otherwise the result trivially holds. Let $\Pi_\epsilon=\{\pi_1,\dots,\pi_{N_0}\}$ be a corresponding Hamming $\epsilon^2$-cover of $\Pi$.

    Consider any arbitrary $\pi\in\Pi$. By definition, there exists a $\pi'\in \Pi_\epsilon$ such that for any multi-source covariate set $\tildex=\{\tildexij\mid \source\in\sourceSet, i\in[\tilden_c]\}$ with any given sample sizes $\tilden_c>0$ the following holds:
    \begin{equation*}
        \Ham(\pi,\pi';\tildex)=\frac{1}{\tilden}\sumiM\sum_{i=1}^{\tilden_c}\ones\{\pi(\tildexij)\neq\pi'(\tildexij)\}\le\epsilon^2,
    \end{equation*}
    where $\tilden=\sumiM\tilden_c$. Using this pair of policies $\pi, \pi'$ we generate an augmented data set $\tildez$ from $z$ as follows.
    Let $m$ be a positive integer and define $\tildez$ to be a collection of multiple copies of all covariate-score tuples $\zij\in z$, where each $\zij$ appears
    \begin{equation*}
        \tildenij=\left\lceil\frac{m\cdot\fracwinisq\bigrbra{Q(\zij;\pi)-Q(\zij;\pi')}^2}{\sup_{\pi_a,\pi_b}\sumijwsq\bigrbra{Q(\zij;\pi_a)-Q(\zij;\pi_b)}^2}\right\rceil    
    \end{equation*}
    times in $\tildez$. Therefore, the source sample sizes in this augmented data set are $\tilden_c=\sumjni\tildenij$ and the total sample size is $\tilden=\sumij\tildenij$. The total sample size is bounded as
    \begin{align*}
        \tilden&=\sumij\left\lceil\frac{m\cdot\fracwinisq\bigrbra{Q(\zij;\pi)-Q(\zij;\pi')}^2}{\sup_{\pi_a,\pi_b}\sumijwsq\bigrbra{Q(\zij;\pi_a)-Q(\zij;\pi_b)}^2}\right\rceil \\
        &\le\sumij\rbra{\frac{m\cdot\fracwinisq\bigrbra{Q(\zij;\pi)-Q(\zij;\pi')}^2}{\sup_{\pi_a,\pi_b}\sumijwsq\bigrbra{Q(\zij;\pi_a)-Q(\zij;\pi_b)}^2}+1} \\
        &\le\frac{m\cdot\sumijwsq\bigrbra{Q(\zij;\pi)-Q(\zij;\pi')}^2}{\sup_{\pi_a,\pi_b}\sumijwsq\bigrbra{Q(\zij;\pi_a)-Q(\zij;\pi_b)}^2}+n\le m+n.
    \end{align*}
    Then, we have
    \begin{align*}
        \Ham(\pi,\pi';\tildez)&=\frac{1}{\tilde{n}}\sumiM\sum_{i=1}^{\tilden_c}\ones\{\pi(\xij)\neq\pi'(\xij)\} \\
        &=\frac{1}{\tilden}\sumij\tildenij\cdot\ones\{\pi(\xij)\neq\pi'(\xij)\} \\
        &\ge\frac{1}{\tilden}\sumij\frac{m\cdot\fracwinisq\bigrbra{Q(\zij;\pi)-Q(\zij;\pi')}^2}{\sup_{\pi_a,\pi_b}\sumijwsq\bigrbra{Q(\zij;\pi_a)-Q(\zij;\pi_b)}^2}\ones\{\pi(\xij)\neq\pi'(\xij)\} \\
        &=\frac{m}{\tilden}\sumij\frac{\fracwinisq\bigrbra{Q(\zij;\pi)-Q(\zij;\pi')}^2}{\sup_{\pi_a,\pi_b}\sumijwsq\bigrbra{Q(\zij;\pi_a)-Q(\zij;\pi_b)}^2} \\
        &\ge\frac{m}{m+n}\ellw^2(\pi,\pi';z).
    \end{align*}
    
    This implies that
    \begin{equation*}
        \ellw(\pi,\pi';z)\le\sqrt{\frac{m+n}{m}\Ham(\pi,\pi';\tilde{z})}\le\sqrt{1+\frac{n}{m}}\cdot\epsilon.
    \end{equation*}
    Letting $m\to\infty$ yields $\ellw(\pi,\pi';z)\le\epsilon$. This establishes that for any $\pi\in\Pi$, there exists a $\pi'\in \Pi_\epsilon$ such that $\ellw(\pi,\pi';z)\le\epsilon$, and thus $N_{\ellw}(\epsilon,\Pi;z)\le N_{\Ham}(\epsilon^2,\Pi)$.
\end{proof}

\subsection{Mixture Skewness}\label{app:Preliminaries-sourceDistributionSkewness}

An important quantity that arises in our analysis is
\begin{equation*}
    \sumiM\fracwisqbarni=\opsE_{\source\sim\w}\bra{\fracwibarni},
\end{equation*}
which captures a measure of the imbalance of the mixture weight distribution $\w$ relative to the empirical distribution of samples $\barn$. The following result makes this interpretation more clear:
\begin{align*}
    \sumiM\fracwisqbarni&=\sumiM\fracwisqbarni+\sumiM\barni-2\sumiM\wi+1 \\
    &=\sumiM\rbra{\fracwisqbarni+\frac{\barni^2}{\barni}-\frac{2\wi\barni}{\barni}}+1 \\
    &=\sumiM\frac{(\wi-\barni)^2}{\barni} + 1 \\
    &=\chi^2(\w||\barn)+1,
\end{align*}
where $\chi^2(\w||\barn)$ is the chi-squared divergence from $\barn$ to $\w$.
Following \cite{mohri2019agnostic}, this quantity is referred to as skewness.
\begin{definition}[Skewness]
    The \textit{skewness} of a given set of mixture weights $\w$ relative to the empirical distribution of samples $\barn\coloneqq(\ni/n)_{\source\in\sourceSet}$ is
    \begin{equation*}
        \skewness\coloneqq 1+\chi^2(\w||\barn),
    \end{equation*}
    where $\chi^2(\w||\barn)$ is the chi-squared divergence of $\w$ from $\barn$. Additionally, the \textit{mixture-agnostic skewness} is
    \begin{equation*}
        \skewnessW\coloneqq\maxwW\skewness.
    \end{equation*}
\end{definition}

\section{BOUNDING MIXTURE REGRET}\label{app:BoundingGlobalRegret}

\subsection{Preliminaries}

\subsubsection{Function Classes}
As mentioned in Appendix \ref{app:ellw2Distance}, the function class we will be considering in our analysis is
\begin{equation}\label{eq:FunctionClass}
    \calF_\Pi\coloneqq\{Q(\cdot;\pi):\calZ\to\R\mid\pi\in\Pi\},
\end{equation}
where
\begin{equation}\label{eq:QFunction}
    Q(\zij;\pi)\coloneqq\gij(\pi(\xij)),
\end{equation}
for any covariate-score vector $\zij=(\xij,\gij)\in\calZ=\calX\times\R^d$ and $\pi\in\Pi$, where $\gij(a)$ is the $a$-th coordinate of the score vector $\gij$. It will also be useful to consider the Minkowski difference of $\calF_\Pi$ with itself,
\begin{equation}\label{eq:DeltaFunctionClass}
    \Delta\calF_\Pi\coloneqq\calF_\Pi-\calF_\Pi=\{\Delta(\cdot;\pia,\pib):\calZ\to\R\mid\pia,\pib\in\Pi\},
\end{equation}
where
\begin{equation}\label{eq:DeltaQFunction}
    \Delta(\zij;\pia,\pib)\coloneqq Q(\zij;\pia)-Q(\zij;\pib)=\gij(\pia(\xij))-\gij(\pib(\xij)),
\end{equation}
for any $\zij=(\xij,\gij)\in\calZ$ and $\pia,\pib\in\Pi$.

\subsubsection{Policy Value Estimators}\label{app:Preliminaries-PolicyValueEstimators}

\paragraph{Augmented Inverse Propensity Weighted Scores}
As discussed in Section \ref{sec:Approach-PolicyValueEstimators},
we used propensity-weighted scores to estimate the policy values.
We considered the construction of the oracle local AIPW scores
\begin{equation*}
    \Gij(a)=\mui(\Xij;a)+\bigrbra{\Yij(a)-\mui(\Xij;a)}\oi(\Xij;a)\ones\{\Aij=a\}
\end{equation*}
for each $a\in\calA$, constructed from the available observable samples $(\Xij,\Aij,\Yij)$ taken from the partially observable counterfactual samples $(\Xij,\Aij,\Yij(a_1),\dots,\Yij(a_d))\sim\calDcec$ for $i\in[\nc]$.
Similarly, we discussed the construction of the approximate local AIPW scores
\begin{equation*}
    \Ghatij(a)=\muhati(\Xij;a)+\bigrbra{\Yij-\muhati(\Xij;a)}\ohati(\Xij;a)\ones\{\Aij=a\}
\end{equation*}
for each $a\in\calA$, given fixed estimates $\muhati$ and $\ohati$ of $\mui$ and $\oi$, respectively. In practice, we use cross-fitting to make the estimates fixed and independent relative to the data on which they are evaluated (see further discussion in Appendix \ref{app:AdditionalAlgorithmDetails}). Note that only this second set of AIPW scores can be constructed from the observed data. The first set is only ``constructed" for analytic purposes for our proofs.

\paragraph{Policy Value Estimates and Policy Value Difference Estimates}

Using the source data and the constructed AIPW scores, we let $\Zij=(\Xij,\Gij(a_1),\dots,\Gij(a_d))$ and $\Zhatij=(\Xij,\Ghatij(a_1),\dots,\Ghatij(a_d))$ for each $i\in[\nc]$. Using these constructed vectors, we define the oracle and approximate mixture policy value estimates of $\Qw(\pi)$, respectively, as
\begin{align*}
    \Qtildew(\pi)&\coloneqq\sumijw\Gij(\pi(\Xij))=\sumijw Q(\Zij;\pi), \\
    \Qhatw(\pi)&\coloneqq\sumijw\Ghatij(\pi(\Xij))=\sumijw Q(\Zhatij;\pi),
\end{align*}
for any $\pi\in\Pi$, where we use the function class defined in Equation \ref{eq:QFunction} for these representations, which we will use throughout our proofs for notational convenience. It will also be very useful to define the following corresponding \textit{policy value difference} quantities:
\begin{align*}
    \Deltaw(\pia,\pib)&\coloneqq\Qw(\pia)-\Qw(\pib), \\
    \Deltatildew(\pia,\pib)&\coloneqq\Qtildew(\pia)-\Qtildew(\pib)=\sumijw\Delta(\Zij;\pia,\pib), \\
    \Deltahatw(\pia,\pib)&\coloneqq\Qhatw(\pia)-\Qhatw(\pib)=\sumijw\Delta(\Zhatij;\pia,\pib),
\end{align*}
for any $\pia,\pib\in\Pi$, where we use the function class defined in Equation \eqref{eq:DeltaQFunction} for these representations, which we will use throughout our proofs for notational convenience.

\paragraph{Unbiased Estimates}
The following result can be used to readily show that the oracle estimator for the mixture policy value is unbiased.
\begin{lemma}\label{lem:ExpectedOracleEqualsLocalPolicyValue}
    Suppose Assumption \ref{ass:dgp} holds.
    For any $\pi\in\Pi$,
    \begin{equation*}
        \opsE_{\Zi\sim\calDcec}\regbra{\Gi(\pi(\Xi))}=\Qi(\pi)\quad\text{and}\quad \opsE_{\source\sim\w}\opsE_{\Zi\sim\calDcec}[\Gi(\pi(\Xi))]=\Qw(\pi),
    \end{equation*}
    where $\Zi=(\Xi,\Ai,\Yi(a_1),\dots,\Yi(a_d))\sim\calDcec$.
\end{lemma}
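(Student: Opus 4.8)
The plan is to verify the identity directly by conditioning on the source covariate $\Xi$ and invoking unconfoundedness and overlap (Assumption~\ref{ass:dgp}(b)--(c)). Fix a source $\s\in\sourceSet$ and a policy $\pi\in\Pi$. Since $\pi$ is a deterministic map, $\pi(\Xi)$ is $\Xi$-measurable, so it suffices to compute $\opsE_{\calDcec}[\Gi(\pi(\Xi))\mid\Xi]$ and then take the outer expectation over $\Xi$. Using $\Gi(a)=\mui(\Xi;a)+\bigrbra{\Yi(a)-\mui(\Xi;a)}\oi(\Xi;a)\ones\{\Ai=a\}$ and writing $\Gi(\pi(\Xi))=\sumaA\ones\{\pi(\Xi)=a\}\Gi(a)$, the regression part contributes $\mui(\Xi;\pi(\Xi))$, which is already $\Xi$-measurable, so the only work is to show the AIPW correction term has vanishing conditional mean.

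For that, I would note that $\mui(\Xi;a)$ and $\oi(\Xi;a)$ are $\Xi$-measurable for every $a$, with $\oi(\Xi;a)$ finite and well defined because overlap~\ref{ass:dgp}(c) forces $\P(\Ai=a\mid\Xi)\ge\eta_\s>0$. By unconfoundedness~\ref{ass:dgp}(b), $\ones\{\Ai=a\}$ is conditionally independent of $\Yi(a)$ given $\Xi$, so
\[
\opsE\bigbra{\bigrbra{\Yi(a)-\mui(\Xi;a)}\oi(\Xi;a)\ones\{\Ai=a\}\Bigmid\Xi}
=\oi(\Xi;a)\,\P(\Ai=a\mid\Xi)\,\opsE\bigbra{\Yi(a)-\mui(\Xi;a)\Bigmid\Xi}=0,
\]
since $\oi(\Xi;a)\,\P(\Ai=a\mid\Xi)=1$ by the definition of $\oi$ and $\opsE[\Yi(a)\mid\Xi]=\mui(\Xi;a)$ by the definition of $\mui$. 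Multiplying by the $\Xi$-measurable indicators $\ones\{\pi(\Xi)=a\}$ and summing over $a$ then gives $\opsE_{\calDcec}[\Gi(\pi(\Xi))\mid\Xi]=\mui(\Xi;\pi(\Xi))=\opsE_{\calDcec}[\Yi(\pi(\Xi))\mid\Xi]$.

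It then remains to take the outer expectation over $\Xi\sim\calDc$ and apply the tower property, which yields $\opsE_{\Zi\sim\calDcec}[\Gi(\pi(\Xi))]=\opsE_{\calDcec}[\Yi(\pi(\Xi))]=\Qi(\pi)$, the first claim. For the second, I would average over $\source\sim\w$ and use the mixture definition $\calDw=\sumiM\wi\calDc$ with linearity of expectation: $\opsE_{\source\sim\w}\opsE_{\Zi\sim\calDcec}[\Gi(\pi(\Xi))]=\sumiM\wi\Qi(\pi)=\sumiM\wi\opsE_{\calDc}[\Yi(\pi(\Xi))]=\opsE_{Z\sim\calDw}[Y(\pi(X))]=\Qw(\pi)$. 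I expect no substantive obstacle here; the only thing to be careful about is that $\pi(\Xi)$ depends on the covariate, which is handled cleanly by conditioning on $\Xi$ throughout rather than fixing an action, together with checking that the unconfoundedness assumption genuinely licenses the conditional-independence factorization used above.
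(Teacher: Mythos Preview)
Your proof is correct and follows essentially the same approach as the paper's: condition on $\Xi$, use unconfoundedness to factor the expectation of the correction term, cancel $\oi(\Xi;a)$ against the propensity, and then take the outer expectation and average over $\source\sim\w$. The only cosmetic difference is that the paper computes $\E[\Gi(a)\mid\Xi]=\E[\Yi(a)\mid\Xi]$ directly, whereas you phrase it as the correction term having zero conditional mean so the total collapses to $\mui(\Xi;\pi(\Xi))$; these are the same computation.
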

\begin{proof}
    First, observe that for any $a\in\calA$,
    \begin{align*}
        \Gi(a)&=\mui(\Xi;a)+\rbra{\Yi(\Ai)-\mui(\Xi;a)}\oi(\Xi;a)\ones\{\Ai=a\} \\
        &=\mui(\Xi;a)+\rbra{\Yi(a)-\mui(\Xi;a)}\oi(\Xi;a)\ones\{\Ai=a\}
    \end{align*}
    and so for any $(\Xi,\Ai,\vecYi)=(\Xi,\Ai,\Yi(a_1),\dots,\Yi(a_d))\sim\calDcec$,
    \begin{align*}
        \E_{\Ai,\vecYi}\bra{\Gi(a)\mid\Xi}&=\mui(\Xi;a)+\E_{\Ai,\vecYi}\bra{\rbra{\Yi(a)-\mui(\Xi;a)}\oi(\Xi;a)\ones\{\Ai=a\}\mid\Xi}  \\
        &=\mui(\Xi;a)+\E_{\vecYi}\bra{\Yi(a)-\mui(\Xi;a)\mid\Xi}\cdot\E_{\Ai}\bra{\oi(\Xi;a)\ones\{\Ai=a\}\mid\Xi} \\
        &=\mui(\Xi;a)+\E_{\vecYi}\bra{\Yi(a)-\mui(\Xi;a)\mid\Xi}\cdot\oi(\Xi;a)\ei(\Xi;a) \\
        &=\mui(\Xi;a)+\E_{\vecYi}\bra{\Yi(a)\mid\Xi}-\mui(\Xi;a) \\
        &=\E_{\vecYi}\bra{\Yi(a)\mid\Xi}.
    \end{align*}
    The second equality follows from the unconfoundedness assumption stated in Assumption \ref{ass:dgp}. This immediately implies that
    \begin{align*}
        \opsE_{\Zi\sim\calDcec}\regbra{\Gi(\pi(\Xi))}
        &=\opsE_{\Zi\sim\calDcec}\bra{\sumaA\ones\{\pi(\Xi)=a\}\Gi(a)} \\
        &=\E_{\Xi}\bra{\sumaA\ones\{\pi(\Xi)=a\}\E_{\Ai,\vecYi}\bra{\Gi(a)\mid\Xi}} \\
        &=\E_{\Xi}\bra{\sumaA\ones\{\pi(\Xi)=a\}\E_{\Ai,\vecYi}\bra{\Yi(a)\mid\Xi}} \\
        &=\E_{\Xi}\bra{\E_{\Ai,\vecYi}\bra{\Yi(\pi(\Xi))\mid\Xi}} \\
        &=\opsE_{\Zi\sim\calDcec}\bra{\Yi(\pi(\Xi)} \\
        &=\Qi(\pi),
    \end{align*}
    and thus,
    \begin{align*}
        \opsE_{\source\sim\w}\opsE_{\Zi\sim\calDcec}\regbra{\Gi(\pi(\Xi))}=\opsE_{\source\sim\w}\opsE_{\Zi\sim\calDc}\regbra{\Yi(\pi(\Xi))}=\Qw(\pi).
    \end{align*}
\end{proof}

\subsubsection{Data-generating Distributions and Sufficient Statistics}\label{app:BoundingGlobalRegret-Preliminaries-DGP&SS}

Note that the contexts and AIPW scores are \textit{sufficient statistics} for the corresponding oracle and approximate estimators of the policy values. Moreover, our results will mostly directly depend on properties of the sufficient statistics.
Therefore, it will sometimes be simpler for our analysis to work with the distribution of the sufficient statistics directly. For any $(\Xi,\Ai,\Yi(a_1),\dots,\Yi(a_d))\sim\calDcec$, let $(\Xi,\Gi(a_1),\dots,\Gi(a_d))$
be the sufficient statistic of contexts and oracle AIPW scores, and denote its induced distribution as $\smash[t]{\tildecalDc}$ defined over $\calZ=\calX\times\R^d$.

For simplicity and without loss of generality, when proving results that only involve the contexts and AIPW scores, we will assume the data is sampled from the distributions of the sufficient statistics, e.g., $\Zi\sim\tildecalDc$. When we have a discussion involving constructing the AIPW scores from the observable data, we will be more careful about the source distributions and typically assume the data is sampled from the complete data-generating distributions, e.g., $\Zi\sim\calDcec$.

\subsection{Bounding Weighted Rademacher Complexity}

First, to simplify our analysis, we can easily bound the weighted Rademacher complexity of $\Delta\calF_\Pi$ by that of $\calF_\Pi$ as follows.
\begin{lemma}\label{lem:RademacherComplexityDifferenceBound}
    \begin{align*}
        \frakR_{\w,\nC}(\Delta\calF_\Pi)\le2\frakRwnF.
    \end{align*}
\end{lemma}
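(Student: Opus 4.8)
The plan is to bound the weighted Rademacher complexity of the difference class $\Delta\calF_\Pi$ by that of $\calF_\Pi$ using the standard trick of splitting the supremum over pairs of policies and exploiting symmetry of the Rademacher random variables. First I would write, for any fixed realization of the data $Z$ and Rademacher variables $\eps$,
\begin{equation*}
    \supPiab\biggabs{\sumijw\epsij\Delta(\Zij;\pia,\pib)}
    =\supPiab\biggabs{\sumijw\epsij\bigrbra{Q(\Zij;\pia)-Q(\Zij;\pib)}}.
\end{equation*}
By the triangle inequality inside the absolute value, this is at most
\begin{equation*}
    \supPi\biggabs{\sumijw\epsij Q(\Zij;\pia)}+\suppi\biggabs{\sumijw\epsij Q(\Zij;\pib)},
\end{equation*}
where I am relabeling the two independent suprema; each of these equals $\supF\bigabs{\sumijw\epsij f(\Zij)}$ since $\calF_\Pi=\{Q(\cdot;\pi):\pi\in\Pi\}$. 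Taking expectations over $\eps$ and then over $Z$ gives the factor of $2$ directly.

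Alternatively — and this is the cleaner route if one wants to avoid any subtlety about whether the two suprema decouple — I would use the identity $\Delta(\Zij;\pia,\pib)=Q(\Zij;\pia)-Q(\Zij;\pib)$ and note that $\{-f:f\in\calF_\Pi\}$ has the same empirical Rademacher complexity as $\calF_\Pi$ (replace $\epsij$ by $-\epsij$, which has the same distribution). Then $\Delta\calF_\Pi\subseteq\calF_\Pi+(-\calF_\Pi)$, and the empirical Rademacher complexity of a Minkowski sum of two classes is at most the sum of their empirical Rademacher complexities (again by the triangle inequality applied to the supremum over the product index set). This yields $\frakR_{\w,\nC}(\Delta\calF_\Pi;Z)\le\frakR_{\w,\nC}(\calF_\Pi;Z)+\frakR_{\w,\nC}(-\calF_\Pi;Z)=2\frakR_{\w,\nC}(\calF_\Pi;Z)$, and taking expectations over $Z$ finishes the proof.

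There is essentially no hard part here; the only thing to be careful about is the order of supremum and absolute value. The bound $\abs{\sup_{a,b}(u_a-v_b)}$ is not immediately $\le\sup_a\abs{u_a}+\sup_b\abs{v_b}$ without a small argument, but since we may write $\sup_{a,b}\abs{u_a-v_b}\le\sup_a\abs{u_a}+\sup_b\abs{v_b}$ by the triangle inequality applied pointwise before taking suprema, and the outer absolute value of a supremum of nonnegative-or-signed quantities is handled by noting $\sup_{a,b}(u_a-v_b)\ge 0$ is not guaranteed — so I would instead bound $\abs{\sup_{a,b}(u_a-v_b)}\le\sup_{a,b}\abs{u_a-v_b}$ first, which is always valid, and then proceed. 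This is a one-line observation and the rest is routine, so I expect the entire proof to be three or four lines.
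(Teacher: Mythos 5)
Your first argument is exactly the paper's proof: expand $\Delta(\Zij;\pia,\pib)=Q(\Zij;\pia)-Q(\Zij;\pib)$, apply the triangle inequality inside the supremum over pairs, decouple the two suprema, and use symmetry of the Rademacher variables to get the factor of $2$. The concern in your final paragraph is moot since the paper's definition of $\frakR_{\w,\nC}$ already places the absolute value inside the supremum, so no interchange of $\sup$ and $\abs{\cdot}$ is ever needed; the proof is correct as written.
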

\begin{proof}
    \begin{align*}
        \frakR_{\w,\nC}(\Delta\calF_\Pi)&=\E\bra{\supPiab\Biggabs{\sumijw\epsij\Delta(\Zij;\pia,\pib)}} \\
        &=\E\bra{\supPiab\Biggabs{\sumijw\epsij\Bigrbra{Q(\Zij;\pia)-Q(\Zij;\pib)}}} \\
        &\le\E\bra{\supPiab\Biggabs{\sumijw\epsij Q(\Zij;\pia)}+\Biggabs{\sumijw\epsij Q(\Zij;\pib)}} \\
        &=2\E\bra{\supPi\Biggabs{\sumijw\epsij Q(\Zij;\pi)}} \\
        &=2\frakR_{\w,\nC}(\calF_\Pi).
    \end{align*}
\end{proof}

Therefore, we can simply focus on bounding the weighted Rademacher complexity of $\calF_\Pi$.

\begin{proposition}\label{prop:WeightedRademacherBound}
Suppose Assumptions \ref{ass:dgp} and \ref{ass:LocalDataSizeScaling} hold. Then,
\begin{equation*}
    \frakR_{\w,\nC}(\calF_\Pi)\le\rbra{14+6\kappa(\Pi)}\sqrtfracVwn + \littleo{\sqrt{\frac{\skewness}{n}}},
\end{equation*}
where
\begin{equation*}
    \Vwn=\supPiab\sumiM\fracwisqbarni\opsE_{\Zi\sim\tildecalDc}\regbra{\Delta^2(\Zi;\pia,\pib)}.
\end{equation*}
\end{proposition}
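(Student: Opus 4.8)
The plan is to bound the empirical weighted Rademacher complexity $\frakR_{\w,\nC}(\calF_\Pi; Z)$ conditionally on the multi-source sample $Z$ via a Dudley chaining argument, and then take expectations. First I would observe that the Rademacher process $f\mapsto \sumijw\epsij Q(\Zij;\pi)$ has sub-Gaussian increments with respect to the (conditional) $\ellw$ pseudometric, since for two policies $\pia,\pib$ the increment is a weighted sum of Rademacher variables with coefficients $\fracwini Q(\Zij;\pia)-\fracwini Q(\Zij;\pib)$; the variance proxy of this increment is exactly $\sumijwsq(Q(\Zij;\pia)-Q(\Zij;\pib))^2$, which is $\hat{V}_\w(Z)\cdot\ellw^2(\pia,\pib;z)$ where $\hat{V}_\w(Z)\coloneqq\sup_{\pia,\pib}\sumijwsq(Q(\Zij;\pia)-Q(\Zij;\pib))^2$ is the normalizing denominator in the definition of $\ellw$. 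Dudley's entropy integral then gives
\begin{equation*}
    \frakR_{\w,\nC}(\calF_\Pi;Z)\lesssim \sqrt{\hat{V}_\w(Z)}\int_0^1\sqrt{\log N_{\ellw}(\epsilon,\Pi;z)}\,d\epsilon,
\end{equation*}
and by the covering-number comparison lemma already proved in Appendix~\ref{app:ellw2Distance}, $N_{\ellw}(\epsilon,\Pi;z)\le N_{\Ham}(\epsilon^2,\Pi)$, so the integral is at most $\int_0^1\sqrt{\log N_{\Ham}(\epsilon^2,\Pi)}\,d\epsilon=\kappa(\Pi)$. Care must be taken with the diameter of $\Pi$ under $\ellw$ (it is at most $1$ by construction of the normalized distance) and with the constant $14+6\kappa(\Pi)$, which suggests the chaining is done carefully with an explicit base level and the "$14$" collects lower-order chaining remainders; I would follow the standard argument (as in \cite{koltchinskii2011oracle, zhou2023offline}) to track these constants.

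Next I would pass from the empirical to the population quantity. Taking expectations over $Z$ and using Jensen (concavity of $\sqrt{\cdot}$) gives $\E[\sqrt{\hat{V}_\w(Z)}]\le\sqrt{\E[\hat{V}_\w(Z)]}$, so it remains to bound $\E_Z[\hat{V}_\w(Z)]$. Here the supremum over $\pia,\pib$ is inside the expectation, so I cannot simply swap $\E$ and $\sup$; instead I would bound
\begin{equation*}
    \E_Z\Bigbra{\sup_{\pia,\pib}\sumijwsq\bigrbra{Q(\Zij;\pia)-Q(\Zij;\pib)}^2}
\end{equation*}
by applying Lemma~\ref{lem:ExpectedSupSumBound} (with $\varphi(\cdot)=(\cdot)^2$ restricted to the bounded range of the scores, which is Lipschitz there by the boundedness in Assumption~\ref{ass:dgp}) to the function class $\Delta\calF_\Pi$ with the weights $\fracwinisq$ absorbed appropriately. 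This splits the expected supremum into (i) $\sup_{\pia,\pib}\sumijwsq\E[\Delta^2(\Zij;\pia,\pib)]=\fracVwn$ up to the $\barn$-normalization bookkeeping — note $\sumijwsq = \sumiM\fracwisqni = \frac1n\sumiM\fracwisqbarni\cdot\frac{1}{?}$, so I need to reconcile the $1/\nc^2$ weighting with the $\frac{\w_\s^2}{\barn_\s}/n$ form, using $\nc=n\barn_\s$ — and (ii) a residual Rademacher-complexity term of order $\frac{1}{n}\frakR_{\w,\nC}(\Delta\calF_\Pi)$, which by Lemma~\ref{lem:RademacherComplexityDifferenceBound} is $\lesssim\frac1n\frakR_{\w,\nC}(\calF_\Pi)$. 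This is a self-referential bound of the form $\frakR\lesssim\sqrt{V/n}+\sqrt{\frakR/n}$, which I would resolve by noting the residual term is lower order: plugging the crude bound $\frakR_{\w,\nC}(\calF_\Pi)=O(\sqrt{\skewness/n})$ back in yields a contribution $o(\sqrt{\skewness/n})$, matching the stated little-$o$ term. Assumption~\ref{ass:LocalDataSizeScaling} is what guarantees this residual genuinely vanishes relative to $\sqrt{\skewness/n}$ with sensible constants, since it controls how the $\nc$ scale with $n$.

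The main obstacle is the bookkeeping in step (ii): handling the supremum over pairs of policies inside the expectation of a quadratic functional, correctly identifying the leading term as $\Vwn/n$ with $\Vwn=\sup_{\pia,\pib}\sumiM\fracwisqbarni\E_{\tildecalDc}[\Delta^2(\Zi;\pia,\pib)]$ rather than something with the supremum misplaced, and showing the Rademacher residual from Lemma~\ref{lem:ExpectedSupSumBound} is absorbed into the $o(\sqrt{\skewness/n})$ term via the self-bounding argument. The chaining itself and the covering-number comparison are routine given the lemmas already established; the delicate part is the variance normalization — making the ratio structure of the $\ellw$ distance interact correctly with Dudley's theorem so that the $\hat{V}_\w(Z)$ prefactor comes out cleanly and then concentrates around $\Vwn$ — together with tracking the explicit constants $14$ and $6$ through the chaining.
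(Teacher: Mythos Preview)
Your proposal is correct and follows essentially the same route as the paper: a Dudley-type chaining (the paper does it explicitly with dyadic scales $\epsilon_k=2^{-k}$, Hoeffding plus union bounds, and tail-integration to extract the constants $14+6\kappa(\Pi)$) yields $\frakR_{\w,\nC}(\calF_\Pi)\lesssim\kappa(\Pi)\sqrt{B_\w}$ with $B_\w=\E[\hat V_\w(Z)]$, and then Lemma~\ref{lem:ExpectedSupSumBound} with $\varphi(u)=u^2$ gives the self-referential bound $B_\w\le V_{\w,\nC}/n + O(\frakR_{\w,\nC}(\calF_\Pi)\cdot s_{\w,\nC})$ which is resolved by a case split using Assumption~\ref{ass:LocalDataSizeScaling}. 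Your normalization worry is unfounded: $\sumjni\fracwinisq=\fracwisqni=\tfrac{1}{n}\fracwisqbarni$, so the leading term is exactly $\Vwn/n$.
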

\begin{proof}
We follow a chaining argument to bound the weighted Rademacher complexity of $\calF_\Pi$.
\vspace{-.5em}
\paragraph{Constructing the policy approximation chain.}
First, for each source $\source\in\sourceSet$, let $Z_1^c,\dots,Z_{\ni}^\source$ be $\nc$ independent random variables sampled from $\smash[t]{\tildecalDc}$, where each $\Zij=(\Xij,\smash[t]{\vecGij})\in\calZ=\calX\times\R^d$. Additionally, let $Z=\{\Zij\mid \source\in\sourceSet, i\in[n_c]\}$ represent the corresponding set of samples across all sources.

Next, set $K=\lceil\log_2 n\rceil$. We will construct a sequence $\{\Psi_k:\Pi\to\Pi\}_{k=0}^K$ of policy approximation operators that satisfies the following properties. For any $k=0,\dots,K$,
\begin{itemize}
    \item[(P1)] $\maxPi\ellw(\Psi_{k+1}(\pi),\Psi_k(\pi);Z)\le \epsilon_k\coloneqq 2^{-k}$
    
    \item[(P2)] $\abs{\{\Psi_k(\pi)\mid\pi\in\Pi\}}\le N_{\ellw}(\epsilon_k,\Pi;Z)$
\end{itemize}
We use the notational shorthand that $\Psi_{K+1}(\pi)=\pi$ for any $\pi\in\Pi$. We will construct the policy approximation chain via a backward recursion scheme. First, let $\Pi_k$ denote the smallest $\epsilon_k$-covering set of $\Pi$ under the $\ellw$ distance given data $Z$. Note, in particular, that $\abs{\Pi_0}=1$ since the $\ellw$ distance is never more than 1 and so any single policy is enough to 1-cover all policies in $\Pi$. Then, the backward recursion is as follows: for any $\pi\in\Pi$,
\begin{enumerate}
    \item define $\Psi_K(\pi)=\argmin_{\pi'\in\Pi_K}\ellw(\pi,\pi';Z)$;
    
    \item for each $k=K-1,\dots,1$, define $\Psi_k(\pi)=\argmin_{\pi'\in\Pi_k}\ellw(\Psi_{k+1}(\pi),\pi';Z)$;
    
    \item define $\Psi_0(\pi)\equiv 0$.
\end{enumerate}

Note that although $\Psi_0(\pi)$ is not in $\Pi$, it can still serve as a $1$-cover of $\Pi$ since the $\ellw$ distance is always bounded by 1. Before proceeding, we check that each of the stated desired properties of the constructed operator chain is satisfied:
\begin{itemize}
    \item[(P1)] Pick any $\pi\in\Pi$. Clearly, $\Psi_{k+1}(\pi)\in\Pi$. Then, by construction of $\Pi_k$, there exists a $\pi'\in\Pi_k$ such that $\ellw(\Psi_{k+1}(\pi),\pi';Z)\le\epsilon_k$. Therefore, by construction of $\Psi_k(\pi)$, we have $\ellw(\Psi_{k+1}(\pi),\Psi_k(\pi);Z)\le\ellw(\Psi_{k+1}(\pi),\pi';Z)\le\epsilon_k$.
    
    \item[(P2)] By construction of $\Psi_k$, we have that $\Psi_k(\pi)\in\Pi_k$ for every $\pi\in\Pi$. Therefore, $\abs{\{\Psi_k(\pi)\mid\pi\in\Pi\}}\le\abs{\Pi_k}=N_{\ellw}(\epsilon_k,\Pi;Z)$.
\end{itemize}

Thus, the constructed chain satisfies the desired properties.
Next, we observe that since $\Psi_0(\pi)\equiv 0$, we have that $Q(\Zij;\Psi_0(\pi))=0$ and
\begin{align*}
    Q(\Zij;\pi)&=Q(\Zij;\pi)-Q(\Zij;\Psi_0(\pi)) \\
    &=Q(\Zij;\pi)-Q(\Zij;\Psi_K(\pi)) + \sumkK Q(\Zij;\Psi_k(\pi)-Q(\Zij;\Psi_{k-1}(\pi)) \\
    &=\Delta(\Zij;\pi,\Psi_K(\pi)) + \sumkK\Delta(\Zij;\Psi_k(\pi),\Psi_{k-1}(\pi))
\end{align*}
Therefore, we can decompose the weighted Rademacher complexity of $\calF_\Pi$ as follows:
\begin{align*}
    \frakRwnF\le&\E\bra{\supPi\Biggabs{\sumijw\epsij\Delta(\Zij;\pi,\Psi_K(\pi))}} \\
    &+\E\bra{\supPi\Biggabs{\sumijw\epsij\biggrbra{\sumkK \Delta(\Zij;\Psi_k(\pi),\Psi_{k-1}(\pi))}}}
\end{align*}
We will obtain bounds separately for these two terms, which we refer to as the \textit{negligible regime} term and the \textit{effective regime} term, respectively.

\paragraph{Bounding the negligible regime.}
First, we define the following quantities:
\begin{equation*}
    \BwW\coloneqq\sup_{\pi_a,\pi_b\in\Pi}\sumij\fracwinisq\Delta^2(\Zij;\pia,\pib)\quad\text{and}\quad\Bw\coloneqq\E\bra{\BwW}
\end{equation*}

Given any realization of independent Rademacher random variables $\eps=\{\epsij\mid \source\in\sourceSet, i\in[\nc]\}$ and multi-source data $Z$, by the Cauchy-Schwarz inequality, we have
\begin{align*}
    \biggabs{\sumijw\epsij\Delta(\Zij;\pi,\Psi_K(\pi)}&\le\sqrt{\sumij(\epsij)^2}\cdot\sqrt{\sumij\fracwinisq\Delta^2(\Zij;\pi,\Psi_K(\pi)} \\
    &=\sqrt{n}\cdot\sqrt{\BwW}\ell_2(\pi,\Psi_K(\pi);Z) \\
    &\le\sqrt{n\BwW}\epsilon_K \\
    &\le\sqrt{\frac{\BwW}{n}}.
\end{align*}
Then, by Jensen's inequality,
\begin{align*}
    \E\bra{\supPi\biggabs{\sumijw\epsij\Delta(\Zij;\pi,\Psi_K(\pi)}}&\le\E\bra{\sqrt{\frac{\BwW}{n}}}\le\sqrt{\frac{\Bw}{n}}.
\end{align*}

\paragraph{Bounding the effective regime.}

For any $k\in[K]$, let $$t_{k,\delta}=\sqrt{\BwW}\epsilon_k\tau_{k,\delta}$$ where $\tau_{k,\delta}>0$ is some constant to be specified later. By Hoeffding's inequality (in Lemma \ref{lem:HoeffdingInequality}),
\begin{align*}
    &\P\rbra{\biggabs{\sumijw\epsij\Delta(\Zij;\Psi_k(\pi),\Psi_{k-1}(\pi))}>t_{k,\delta}\Bigmid Z} \\
    &\le 2\exp\rbra{-\frac{t_{k,\delta}^2}{2\sumij\fracwinisq\Delta^2(\Zij;\Psi_k(\pi),\Psi_{k-1}(\pi))}} \\
    &=2\exp\rbra{-\frac{t_{k,\delta}^2}{2\BwW\ell_2^2(\Psi_k(\pi),\Psi_{k-1}(\pi);Z)}} \\
    &\le2\exp\rbra{-\frac{t_{k,\delta}^2}{2\BwW\epsilon_{k-1}^2}} \\
    &=2\exp\rbra{-\frac{t_{k,\delta}^2}{8\BwW\epsilon_k^2}} \\
    &=2\exp\rbra{-\frac{\tau_{k,\delta}^2}{8}}.
\end{align*}
Here, we used the fact that $\epsilon_{k-1}=2\epsilon_k$. Setting
\begin{equation*}
    \tau_{k,\delta}=\sqrt{8\log\rbra{\frac{\pi^2k^2}{3\delta}N_{\ellw}(\epsilon_k,\Pi;Z)}}
\end{equation*}
and applying a union bound over the policy space, we obtain
\begin{align*}
    &\P\rbra{\supPi\biggabs{\sumijw\epsij\Delta(\Zij;\Psi_k(\pi),\Psi_{k-1}(\pi))}>t_{k,\delta}\Bigmid Z} \\
    &\le2\abs{\Pi_k}\cdot\exp\rbra{-\frac{\tau_{k,\delta}^2}{8}} \\
    &\le2N_{\ell_2}(\epsilon_k,\Pi;Z)\cdot\exp\rbra{-\frac{\tau_{k,\delta}^2}{8}} \\
    &=\frac{6\delta}{\pi^2k^2}.
\end{align*}
By a further union bound over $k\in[K]$, we obtain
\begin{align*}
    &\P\rbra{\supPi\biggabs{\sumijw\epsij\biggrbra{\sumkK \Delta(\Zij;\Psi_k(\pi),\Psi_{k-1}(\pi))}}>\sumkK t_{k,\delta}\Bigmid Z} \\
    &\le\sumkK\P\rbra{\supPi\biggabs{\sumijw\epsij\Delta(\Zij;\Psi_k(\pi),\Psi_{k-1}(\pi))}>t_{k,\delta}\Bigmid Z} \\
    &\le\sumkK\frac{6\delta}{\pi^2k^2} \le\delta.
\end{align*}
Therefore, given multi-source data $Z$, with probability at least $1-\delta$, we have
\begin{align*}
    &\supPi\Biggabs{\sumijw\epsij\biggrbra{\sumkK \Delta(\Zij;\Psi_k(\pi),\Psi_{k-1}(\pi))}} \\
    &\le\sumkK t_{k,\delta} \\
    &=\sqrt{\BwW}\sumkK\epsilon_k\sqrt{8\log\rbra{\frac{\pi^2k^2}{3\delta}N_{\ell_2}(\epsilon_k,\Pi;Z)}} \\
    &=\sqrt{\BwW}\sumkK\epsilon_k\rbra{\sqrt{8\log\frac{\pi^2}{3\delta}+16\log k + 8\log N_{\ell_2}(\epsilon_k,\Pi;Z)}} \\
    &\le\sqrt{\BwW}\sumkK\epsilon_k\rbra{\sqrt{8\log\frac{\pi^2}{3\delta}}+\sqrt{16\log k} + \sqrt{8\log N_{\ell_2}(\epsilon_k,\Pi;Z)}} \\
    &\le\sqrt{\BwW}\sumkK\epsilon_k\rbra{\sqrt{8\log(4/\delta)}+\sqrt{16\log k} + \sqrt{8\log N_{\Ham}(\epsilon_k,\Pi)}} \\
    &\le\sqrt{\BwW}\rbra{\sqrt{8\log(4/\delta)}+2+\sqrt{8}\sum_{k=1}^\infty\epsilon_k\sqrt{\log N_{\Ham}(\epsilon_k,\Pi)}} \\
    &\le\sqrt{\BwW}\rbra{\sqrt{8\log(4/\delta)}+2+\sqrt{8}\kappa(\Pi)}.
\end{align*}
Next, we turn this high-probability bound into a bound on the conditional expectation. First, let $F_R(\cdot\mid Z)$ be the cumulative distribution of the random variable
\begin{equation*}
    R\coloneqq \supPi\Biggabs{\sumijw\epsij\biggrbra{\sumkK \Delta(\Zij;\Psi_k(\pi),\Psi_{k-1}(\pi))}}
\end{equation*}
conditional on $Z$. Above, we have shown that
\begin{equation*}
    1-F_R\rbra{\sqrt{\BwW}\rbra{\sqrt{8\log(4/\delta)}+2+\sqrt{8}\kappa(\Pi)}\ \big|\ Z}\le\delta.
\end{equation*}

For any non-negative integer $l$, let $\Delta_l=\sqrt{\BwW}\cdot\bigrbra{\sqrt{8\log(4/\delta_l)}+2+\sqrt{8}\kappa(\Pi)}$ where $\delta_l=2^{-l}$.
Since $R$ is non-negative, we can compute and upper bound the conditional expectation of $R$ given $Z$ as follows:
\begin{align*}
    &\E\bra{\supPi\Biggabs{\sumijw\epsij\biggrbra{\sumkK \Delta(\Zij;\Psi_k(\pi),\Psi_{k-1}(\pi))}}\Bigmid Z} \\
    &=\int_0^\infty\rbra{1-F_R(r|Z)}dr \\
    &\le\sum_{l=0}^\infty\rbra{1-F_R(\Delta_l|Z)}\cdot\Delta_l \\
    &\le\sum_{l=0}^\infty\delta_l\cdot\Delta_l \\
    &=\sum_{l=0}^\infty 2^{-l}\cdot \sqrt{\BwW}\rbra{\sqrt{8(l+2)\log 2}+2+\sqrt{8}\kappa(\Pi)} \\
    &\le\sqrt{\BwW}\rbra{4\sqrt{8\log 2}+4+2\sqrt{8}\kappa(\Pi)} \\
    &\le\sqrt{\BwW}\rbra{14+6\kappa(\Pi)}.
\end{align*}
Taking the expectation with respect to $Z$ and using Jensen's inequality, we obtain
\begin{align*}
    &\E\bra{\supPi\Biggabs{\sumijw\epsij\biggrbra{\sumkK \Delta(\Zij;\Psi_k(\pi),\Psi_{k-1}(\pi))}}} \\
    &\le\rbra{14+6\kappa(\Pi)}\E\bra{\sqrt{\BwW}} \\
    &\le\rbra{14+6\kappa(\Pi)}\sqrt{\Bw}.
\end{align*}

\paragraph{Refining the upper bound.}
We see that the quantity $\Bw$ appears in the bounds for the negligible and effective regimes. Therefore, the task is to find an appropriate bound for this quanitity.
One could easily bound $\Bw$ using worst-case bounds on the AIPW element. Instead, we use Lemma \ref{lem:ExpectedSupSumBound} to get a more refined bound on $\Bw$.

To make use this result, we first identify the set of independent random variables $\tildeZij=T(\Zij)=(\Xij,\fracwini\vecGij)$ for each $\source\in\sourceSet$ and $i\in[\nc]$ and the function class $\calH=\{\Delta(\cdot;\pia,\pib)\mid\pia,\pib\in\Pi\}$. We also identify the Lipschitz function $\varphi:u\mapsto u^2$ defined over the set $\calU\subset\R$ containing all possible outputs of any $\Delta(\cdot;\pia,\pib)$ given any realization of $\tildeZij$ for any $\source\in\sourceSet$ as input. To further capture this domain, note that by the boundedness and overlap assumptions in Assumption \ref{ass:dgp}, it is easy to verify that there exists some $U>0$ for all $\source\in\sourceSet$ such that $|\Gij(a)|\le U$ for any $a\in\calA$ and any realization of $\Zij$. This implies that
\begin{align*}
    \regabs{\Delta(\tildeZij;\pia,\pib)}&=\fracwini\bigabs{\Gij(\pia(\Xij))-\Gij(\pib(\Xij))}\le 2U\fracwini,
\end{align*}
for any realization of $\tildeZij$ and any $\pia,\pib\in\Pi$. Moreover, note that
\begin{align}
    \fracwini&\le\sqrt{\sumiM\fracwinisq}\le\frac{1}{\sqrt{\min_{\source\in\sourceSet}\nc}}\sqrt{\sumiM\fracwisqni}=\frac{1}{\sqrt{\min_{\source\in\sourceSet}\nc}}\sqrtfracskewnessn\eqqcolon\swnC \label{eq:sourceDistributionRatioBound}
\end{align}
Therefore, $\calU\subset[-2U\swnC,2U\swnC]$, and thus, for any $u,v\in\calU$, we have that
\begin{align*}
    \regabs{\varphi(u)-\varphi(v)}&=\regabs{u^2-v^2}=\abs{u+v}\cdot\abs{u-v}\le4U\swnC\abs{u-v}.
\end{align*}
Therefore, $L=4U\swnC$ is a valid Lipschitz constant for $\varphi$. Then, through these identifications, Lemma \ref{lem:ExpectedSupSumBound} guarantees the following upper bound:
\begin{align*}
    \Bw&=\E\bra{\supPiab\sumij\fracwinisq\Delta^2(\Zij;\pia,\pib)} \\
    &=\E\bra{\supPiab\sumij\varphi\circ\Delta(\tildeZij;\pia,\pib)} \\
    &\le\supPiab\sumij\E\bra{\varphi\circ\Delta(\tildeZij;\pia,\pib)} + 16U\swnC\E\bra{\supPiab\biggabs{\sumij\epsij\Delta(\tildeZij;\pia,\pib)}} \\
    &=\supPiab\sumij\fracwinisq\E\bra{\Delta^2(\Zij;\pia,\pib)} + 16U\swnC\E\bra{\supPiab\biggabs{\sumijw\epsij\Delta(\Zij;\pia,\pib)}} \\
    &=\supPiab\sumiM\fracwisqni\E\bra{\Delta^2(\Zij;\pia,\pib)} + 16U\frakRwn(\Delta\calF_\Pi)\swnC \\
    &\le\supPiab\sumiM\fracwisqni\E\bra{\Delta^2(\Zij;\pia,\pib)} + 32U\frakRwnF\swnC \\
    &=\fracVwn + 32U\frakRwnF\swnC.
\end{align*}

\paragraph{Establishing the skewness bound.}
Before proceeding, note that by the local data size scaling assumption stated in Assumption \ref{ass:LocalDataSizeScaling}, $\nc=\Omega(\nu_c(n))$ for some increasing function $\nu_c$ for any $\source\in\sourceSet$. This immediately implies that $\swnC$ is dominated as
\begin{align*}
    \swnC=\frac{1}{\sqrt{\min_{\source\in\sourceSet}\nc}}\sqrtfracskewnessn\le\littleo{\sqrtfracskewnessn}.
\end{align*}

\paragraph{Combining results.}
Thus, combining the bounds for the negligible and effective regime and including the refined bound and the skewness bound, we have
\begin{align*}
    &\frakRwnF \\
    &\le\sqrt{\frac{\Bw}{n}}+\rbra{14+6\kappa(\Pi)}\sqrt{\Bw} \\
    &\le\sqrt{\frac{\Vwn}{n^2}+32U\frakRwnF\frac{\swnC}{n}} + (14+6\kappa(\Pi))\sqrt{\fracVwn + 32U\frakRwnF\swnC} \\
    &\le\sqrt{\frac{\Vwn}{n^2}} + \sqrt{32U\frakRwnF\frac{\swnC}{n}} + (14+6\kappa(\Pi))\rbra{\sqrtfracVwn + \sqrt{32U\frakRwnF\swnC}} \\
    &\le(14+6\kappa(\Pi))\sqrtfracVwn + \sqrt{\frac{\Vwn}{n^2}} + \bigO{\sqrt{\frakRwnF\swnC}} \numberthis \label{eq:prop-WeightedRademacherBound-eq1}
\end{align*}
This gives an upper bound on $\frakRwnF$ in terms of itself. To decouple this dependence, we express
\begin{align*}
    \frakRwnF&\le \bigO{\sqrtfracVwn} + \bigO{\sqrt{\frakRwnF\swnC}} \\
    &\le A_1\sqrtfracVwn + A_2\sqrt{\frakRwnF\swnC} \numberthis \label{eq:prop-WeightedRademacherBound-eq2}
\end{align*}
for some constants $A_1, A_2$, and we split this inequality into the following two exhaustive cases.

\textit{\underline{Case 1}}: $A_2\sqrt{\swnC}\le\frac{1}{2}\sqrt{\frakRwnF}$

In this case, we can bound the second term in the right-hand side of inequality \eqref{eq:prop-WeightedRademacherBound-eq2} to get
\begin{align*}
    \frakRwnF\le A_1\sqrtfracVwn + \frac{\frakRwnF}{2},
\end{align*}
and so
\begin{align}\label{eq:prop-WeightedRademacherBound-eq3}
    \frakRwnF\le 2A_1\sqrtfracVwn.
\end{align}
Moreover,
\begin{align*}
    \Vwn&=\supPiab\sumiM\fracwisqbarni\opsE_{\Zi\sim\tildecalDc}\regbra{\Delta^2(\Zi;\pia,\pib)} \\
    &\le\supPiab\max_{\source\in\sourceSet}\opsE_{\Zi\sim\tildecalDc}\regbra{\Delta^2(\Zi;\pia,\pib)}\cdot\sumiM\fracwisqbarni =\barV\skewness,
\end{align*}
where $\barV=\supPiab\max_{\source\in\sourceSet}\opsE_{\Zi\sim\tildecalDc}\bra{\Delta(\Zi;\pia,\pib)}$, which is a constant value. Note that the last equality holds by the skewness identity in established in Appendix \ref{app:Preliminaries-sourceDistributionSkewness}. Plugging this into inequality \eqref{eq:prop-WeightedRademacherBound-eq3}, we get
\begin{align*}
    \frakRwnF\le2A_1\sqrt{\frac{\barV\skewness}{n}}\le\bigO{\sqrtfracskewnessn}.
\end{align*}

\textit{\underline{Case 2}}: $A_2\sqrt{\swnC}>\frac{1}{2}\sqrt{\frakRwnF}$

In this case, one can easily rearrange terms to get that
\begin{align*}
    \frakRwnF<4A_2^2\swnC\le\littleo{\sqrtfracskewnessn}.
\end{align*}

\vspace{.5em}
Therefore, in either case, we have that at least $\frakRwnF\le\calO\Bigrbra{\sqrtfracskewnessn\,}$. We can plug this asymptotic bound into inequality \eqref{eq:prop-WeightedRademacherBound-eq1} to arrive at the desired result:
\begin{align*}
    &\frakRwnF \\
    &\le \rbra{14+6\kappa(\Pi)}\sqrtfracVwn + \sqrt{\frac{\Vwn}{n^2}} + \bigO{\sqrt{\bigO{\sqrtfracskewnessn}\swnC}} \\
    &\le \rbra{14+6\kappa(\Pi)}\sqrtfracVwn + \sqrt{\frac{\barV\skewness}{n^2}} + \bigO{\sqrt{\bigO{\sqrtfracskewnessn}\littleo{\sqrtfracskewnessn}}} \\
    &\le \rbra{14+6\kappa(\Pi)}\sqrtfracVwn + \littleo{\sqrtfracskewnessn} + \littleo{\sqrtfracskewnessn} \\
    &\le \rbra{14+6\kappa(\Pi)}\sqrtfracVwn + \littleo{\sqrtfracskewnessn}.
\end{align*}
\end{proof}

\subsection{Bounding Oracle Mixture Empirical Process}

\begin{proposition}\label{prop:OracleRegretBound}
Suppose Assumptions \ref{ass:dgp} and \ref{ass:LocalDataSizeScaling} hold.
Fix $\w\in\W$. For any $\delta\in(0,1)$, with probability at least $1-\delta$,
\begin{equation*}
    \supPiab|\Deltaw(\pia,\pib)-\Deltatildew(\pia,\pib)|\le C\kappa(\Pi)\sqrt{V\cdot\fracskewnessn\cdot\log\frac{1}{\delta}} + \littleop{\!\sqrtfracskewnessn},
\end{equation*}
where $C$ is a universal constant and
\begin{equation*}
    V=\maxC\supPi\opsE_{\Zi\sim\calDcec}\regbra{{\Gi(\pi(\Xi))^2}}.
\end{equation*}
\end{proposition}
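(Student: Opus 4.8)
The plan is to view the oracle mixture empirical process $\supPiab\regabs{\Deltaw(\pia,\pib)-\Deltatildew(\pia,\pib)}$ as a \emph{centered} weighted empirical process indexed by the Minkowski-difference class $\Delta\calF_\Pi$, control its expectation by symmetrization together with the weighted-Rademacher estimate of Proposition~\ref{prop:WeightedRademacherBound}, and then promote it to a high-probability statement via Talagrand's concentration inequality (Lemma~\ref{lem:TalagrandInequality}). First I would observe that, by Lemma~\ref{lem:ExpectedOracleEqualsLocalPolicyValue} and linearity of the mixture value, $\E\bigbra{\Deltatildew(\pia,\pib)}=\Deltaw(\pia,\pib)$ for every $\pia,\pib\in\Pi$, so the object of interest equals $W\coloneqq\supPiab\bigabs{\sumij\fracwini\bigrbra{\Delta(\Zij;\pia,\pib)-\E[\Delta(\Zij;\pia,\pib)]}}$, a supremum of centered weighted sums of independent summands.

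For the expectation, a standard symmetrization argument gives $\E[W]\le 2\frakR_{\w,\nC}(\Delta\calF_\Pi)\le 4\frakRwnF$, the last step by Lemma~\ref{lem:RademacherComplexityDifferenceBound}, and Proposition~\ref{prop:WeightedRademacherBound} then bounds this by $\bigO{\kappa(\Pi)\sqrt{\Vwn/n}}+\littleo{\sqrtfracskewnessn}$ (for any non-degenerate $\Pi$, $\kappa(\Pi)$ is bounded below so $14+6\kappa(\Pi)=\bigO{\kappa(\Pi)}$; if $\Pi$ is a singleton then $W\equiv 0$). To pass from $\Vwn$ to $V$, I would use $\Delta^2(\Zij;\pia,\pib)\le 2\Gij(\pia(\Xij))^2+2\Gij(\pib(\Xij))^2$ to get $\opsE_{\Zi\sim\tildecalDc}\regbra{\Delta^2(\Zi;\pia,\pib)}\le 4V$ for all $\s,\pia,\pib$, whence by the skewness identity of Appendix~\ref{app:Preliminaries-sourceDistributionSkewness}, $\Vwn\le 4V\sumiM\fracwisqbarni=4V\cdot\skewness$. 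This yields $\E[W]\le\bigO{\kappa(\Pi)\sqrt{V\cdot\fracskewnessn}}+\littleo{\sqrtfracskewnessn}$.

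Next I would apply Talagrand's inequality to the independent array $\{\Zij\}$ and the function class $\{z\mapsto\fracwini\bigrbra{\Delta(z;\pia,\pib)-\E[\Delta(\Zij;\pia,\pib)]}\mid\pia,\pib\in\Pi\}$. Boundedness and overlap (Assumption~\ref{ass:dgp}) give $\regabs{\Gij(a)}\le U_0$ for a constant $U_0$, hence a uniform envelope $U\coloneqq 4U_0\swnC$ after invoking the ratio estimate~\eqref{eq:sourceDistributionRatioBound}, and Assumption~\ref{ass:LocalDataSizeScaling} makes $U=\littleo{\sqrtfracskewnessn}$. For the weak-variance parameter $D\ge\E\bigbra{\supPiab\sumij\fracwinisq(\Delta(\Zij;\pia,\pib)-\E[\Delta(\Zij;\pia,\pib)])^2}$, I would re-run the ``refining the upper bound'' step of Proposition~\ref{prop:WeightedRademacherBound} --- Lemma~\ref{lem:ExpectedSupSumBound} with $\varphi(u)=u^2$, now on the centered class --- to obtain $D\le\Vwn/n+\bigO{U_0\swnC\frakRwnF}=\Vwn/n+\littleo{\fracskewnessn}=\bigO{V\cdot\skewness/n}$, using $\frakRwnF=\bigO{\sqrtfracskewnessn}$. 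Feeding $U$ and $D$ into Lemma~\ref{lem:TalagrandInequality} and inverting the tail at confidence $1-\delta$ gives, with probability at least $1-\delta$, $W\le\E[W]+C'\bigrbra{\sqrt{D\log(1/\delta)}+U\log(1/\delta)}=\E[W]+\bigO{\sqrt{V\cdot\fracskewnessn\cdot\log\tfrac1\delta}}+\littleop{\sqrtfracskewnessn}$, the $U\log(1/\delta)$ term being $\littleo{\sqrtfracskewnessn}$ for fixed $\delta$. Combining with the expectation bound and absorbing the $\kappa(\Pi)\sqrt{V\skewness/n}$ contribution into $\sqrt{V\skewness\log(1/\delta)/n}$ (harmless up to enlarging the universal constant) delivers the stated bound.

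The main obstacle is securing the \emph{variance-type} weak-variance parameter $D\asymp V\cdot\skewness/n$ rather than the crude worst-case $U_0^2\cdot\skewness/n$: only this refinement produces the $\sqrt{V}$ factor in the statement, and it requires re-deploying the Lemma~\ref{lem:ExpectedSupSumBound}-based contraction argument of Proposition~\ref{prop:WeightedRademacherBound} on the centered function class. A secondary, pervasive difficulty is bookkeeping --- checking that every residual contribution (the sub-exponential Talagrand tail $U\log(1/\delta)$, the Rademacher correction inside $D$, and the $\littleo{\cdot}$ inherited from Proposition~\ref{prop:WeightedRademacherBound}) is genuinely $\littleop{\sqrtfracskewnessn}$, which hinges entirely on Assumption~\ref{ass:LocalDataSizeScaling} forcing $\swnC=\littleo{\sqrtfracskewnessn}$.
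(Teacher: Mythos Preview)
Your proposal is correct and follows essentially the same route as the paper: symmetrization to bound the expectation by $4\frakRwnF$, Talagrand's inequality with the envelope $U=4U_0\swnC$ and a refined weak-variance parameter obtained by re-running the Lemma~\ref{lem:ExpectedSupSumBound} contraction argument with $\varphi(u)=u^2$, then the passage $\Vwn\le 4V\cdot\skewness$ via the skewness identity. The only cosmetic difference is that the paper justifies the final absorption of the $\kappa(\Pi)\sqrt{V\skewness/n}$ term into the $\kappa(\Pi)\sqrt{V\skewness\log(1/\delta)/n}$ term by assuming $\kappa(\Pi)\ge 1$ and remapping $\delta/C\mapsto\delta$ (so that $\log(C/\delta)\ge\log C>0$), which is worth spelling out since $\log(1/\delta)$ alone is not bounded below for $\delta\in(0,1)$.
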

\begin{proof}
    First, for each source $\source\in\sourceSet$, let $Z_1^c,\dots,Z_{\ni}^\source$ be $\nc$ independent random variables sampled from $\smash[t]{\tildecalDc}$, where each $\Zij=(\Xij,\smash[t]{\vecGij})\in\calZ=\calX\times\R^d$. Additionally, let $Z=\{\Zij\mid \source\in\sourceSet, i\in[n_c]\}$ represent the corresponding set of samples across all sources. 
    
    In Lemma \ref{lem:ExpectedOracleEqualsLocalPolicyValue}, we showed that $\E_{\Zi\sim\tildecalDc}\bra{Q(\Zi;\pi)}=\Qi(\pi)$.    
    This implies that
    \begin{align*}
        \E\regbra{\Qtilde_\w(\pi)}&=\sumijw\opsE\bra{Q(\Zij;\pi)}=\sumijw\Qi(\pi)=\sumiM\wi\Qi(\pi)=\Qw(\pi).
    \end{align*}
    Additionally,
    \begin{align*}
        \E\bigbra{\Deltatildew(\pia,\pib)}=\E\bigbra{\Qtildew(\pia)}-\E\bigbra{\Qtildew(\pib)}=\Qw(\pia)-\Qw(\pib)=\Deltaw(\pia,\pib).
    \end{align*}
    Therefore, we can follow a symmetrization argument to upper bound the expected oracle regret in terms of a Rademacher complexity, namely the weighted Rademacher complexity. Let $Z'$ be an independent copy of $Z$ and let $\eps=\{\epsij\mid \source\in\sourceSet,i\in[\nc]\}$ be a set of independent Rademacher random variables. Then,
    \begin{align*}
        &\E\bra{\supPiab|\Deltaw(\pia,\pib)-\Deltatildew(\pia,\pib)|} \\
        &=\E_Z\bra{\supPiab\biggabs{\E_{Z'}\biggbra{\sumijw \Delta(\Zijprime;\pia,\pib)}-\sumijw \Delta(\Zij;\pia,\pib)}} \\
        &=\E_Z\bra{\supPiab\biggabs{\E_{Z'}\biggbra{\sumijw \Delta(\Zijprime;\pia,\pib)-\sumijw \Delta(\Zij;\pia,\pib)}}} \\
        &\le\E_Z\bra{\E_{Z'}\bra{\supPiab\biggabs{\sumijw\Bigrbra{\Delta(\Zijprime;\pia,\pib)-\Delta(\Zij;\pia,\pib)}}}} \\
        &=\E_{Z,Z',\eps}\bra{\supPiab\biggabs{\sumijw\epsij\Bigrbra{\Delta(\Zijprime;\pia,\pib)-\Delta(\Zij;\pia,\pib)}}} \\
        &\le2\E_{Z,\eps}\bra{\supPiab\biggabs{\sumijw\epsij \Delta(\Zij;\pia,\pib)}} \\
        &=2\frakR_{\w,\barn}(\Delta\calF_\Pi) \\
        &\le 4\frakR_{\w,\barn}(\calF_\Pi).
    \end{align*}
    The first equalities and inequalities follow from standard symmetrization arguments, and the last inequality follows from Lemma \ref{lem:RademacherComplexityDifferenceBound}.
    Next, we use this bound on the expected oracle regret and Talagrand's inequality (Lemma \ref{lem:TalagrandInequality}), to establish a high-probability bound on the oracle regret.
    In particular, we identify the set of independent random variables $\tilde Z=\{\tildeZij=(\Xij,\fracwini\Gij)\mid \source\in\sourceSet, i\in[\nc]\}$ and the function class $\calH=\{h(\cdot;\pia,\pib)\mid\pia,\pib\in\Pi\}$ where
    \begin{align}
        h(\tildeZij;\pia,\pib)=\E\regbra{\Delta(\tildeZij;\pia,\pib)}-\Delta(\tildeZij;\pia,\pib),
    \end{align}
    which is uniformly bounded for any $\source\in\sourceSet$ and $i\in[\nc]$ by
    \begin{align*}
        \regabs{h(\tildeZij;\pia,\pib)}&=\fracwini\Bigabs{\E\bigbra{\Gij(\pia(\Xij))-\Gij(\pib(\Xij))}-\bigrbra{\Gij(\pia(\Xij))-\Gij(\pib(\Xij))}} \\
        &\le\fracwini 4U \\
        &\le4U\swnC\eqqcolon\UwnC,
    \end{align*}
    where $U>0$ is a uniform upper bound on $|\Gij(a)|$ for any $\source\in\sourceSet$ and $a\in\calA$ guaranteed by Assumption \ref{ass:dgp}, and where the last inequality follows from Inequality \eqref{eq:sourceDistributionRatioBound},
    Additionally, we have
    \begin{equation*}
        \swnC=\frac{1}{\sqrt{\min_{\source\in\sourceSet}n_c}}\sqrtfracskewnessn\le\littleo{\sqrtfracskewnessn},
    \end{equation*}
    as discussed in the proof of Proposition \ref{prop:WeightedRademacherBound}.
    Lastly, to use Talagrand's inequality, we set the constant $D$ (specified in Lemma \ref{lem:TalagrandInequality}) to be
    \begin{align*}
        D=\supPiab\sumij\E\bigbra{h^2(\tildeZij;\pia,\pib)} + 8\UwnC\E\bra{\supPiab\biggabs{\sumij\epsij h(\tildeZij;\pia,\pib)}}.
    \end{align*}
    By Lemma \ref{lem:ExpectedSupSumBound}, this choice of $D$ meets the required condition to use in Talagrand's inequality.
    In particular, we identify $\varphi:u\mapsto u^2$ defined over the set $\calU\subset\R$ containing all possible outputs of any function in $\calH$ given any realization of $\tildeZij$ for any $\source\in\sourceSet$ as input.
    The uniform bound established above on realizable outputs of $h$ given input $\tildeZij$ implies that $\calU\subset[-\UwnC,\UwnC]$, and therefore, the Lipschitz constant of $\varphi$ is $L=2\UwnC$, as required.

    Next, after setting $t$ to be the positive solution of
    \begin{align*}
        \frac{t^2}{CD+C\UwnC t}=\log(C/\delta),
    \end{align*}
    Talagrand's inequality guarantees
    \begin{align*}
        &\P\rbra{\Bigabs{\supPiab\bigabs{\Deltaw(\pia,\pib)-\Deltatildew(\pia,\pib)}-\E\Bigbra{\supPiab\bigabs{\Deltaw(\pia,\pib)-\Deltatildew(\pia,\pib)}}}\ge t} \\
        &=\P\rbra{\biggabs{\supPi\Bigabs{\sumij h(\tildeZij;\pia,\pib)} - \E\bra{\supPi\biggabs{\sumij h(\tildeZij;\pia,\pib)} }} \ge t} \\
        &\le C\exp\rbra{-\frac{t}{C\UwnC}\log\rbra{1+\frac{\UwnC t}{D}}} \\
        &\le C\exp\rbra{-\frac{t^2}{CD+C\UwnC t}}=\delta.
    \end{align*}
    Here, we used the inequality $\log(1+x)\ge\frac{x}{1+x}$ for any $x\ge0$.
    Observe that, by construction,
    \begin{align*}
        t&=\frac{1}{2}C\UwnC\log(C/\delta)+\sqrt{\frac{1}{4}C^2\UwnC^2\log^2(C/\delta)+CD\log(C/\delta)} \\
        &\le C\UwnC\log(C/\delta) + \sqrt{CD\log(C/\delta)}
    \end{align*}
    and
    \begin{align*}
        D&=\supPiab\sumij\E\bigbra{h^2(\tildeZij;\pia,\pib)} + 8\Uwn\E\bra{\supPiab\biggabs{\sumij\epsij h(\tildeZij;\pia,\pib)}} \\
        &=\supPiab\sumij\fracwinisq\E\bra{\bigrbra{\E\bra{\Delta(\Zij;\pia,\pib)}-\Delta(\Zij;\pia,\pib)}^2} \\
        &\hspace{1em} + 8\UwnC\E\bra{\supPiab\Biggabs{\sumij\fracwini\epsij\bigrbra{\E\bra{\Delta(\Zij;\pia,\pib)}-\Delta(\Zij;\pia,\pib)}}} \\
        &=\supPiab\sumij\fracwinisq\Bigrbra{\E\bra{\Delta^2(\Zij;\pia,\pib)}-\E\bra{\Delta(\Zij;\pia,\pib)}^2} \\
        &\hspace{1em} + 8\UwnC\E\bra{\supPiab\Biggabs{\sumij\fracwini\epsij\bigrbra{\E\bra{\Delta(\Zij;\pia,\pib)}-\Delta(\Zij;\pia,\pib)}}} \\
        &\le \supPiab\sumij\fracwinisq\E\bra{\Delta^2(\Zij;\pia,\pib)} + 16\UwnC\E\bra{\supPiab\Biggabs{\sumij\fracwini\epsij\Delta(\Zij;\pia,\pib)}} \\
        &\le\supPiab\sumiM\fracwisqni\E\bra{\Delta^2(\Zij;\pia,\pib)} + 16\UwnC\frakRwn(\Delta\calF_\Pi) \\
        &\le\supPiab\sumiM\fracwisqni\E\bra{\Delta^2(\Zij;\pia,\pib)} + 32\UwnC\frakRwnF \\
        &=\fracVwn + 128U\frakRwnF\swnC.
    \end{align*}

    Therefore, with this setup, Talagrand's inequality guarantees that with probability at least $1-\delta$
    \begin{align*}
        &\supPiab\bigabs{\Deltaw(\pia,\pib)-\Deltatildew(\pia,\pib)} \\
        &\le\E\bra{\supPiab\bigabs{\Deltaw(\pia,\pib)-\Deltatildew(\pia,\pib)}} + t \\
        &=4\frakRwnF + \sqrt{CD\log(C/\delta)} + C\UwnC\log(C/\delta) \\
        &\le4\frakRwnF + \sqrt{C\rbra{\fracVwn + 128U\frakRwnF\swnC}\log\rbra{C/\delta}} + 4CU\swnC\log\rbra{C/\delta} \\
        &\le4\frakRwnF + \sqrt{C\log(C/\delta)\fracVwn } + \sqrt{128UC\log(C/\delta) \frakRwnF\swnC} + 4UC\log(C/\delta)\swnC\\
        &\le \rbra{\rbra{56+24\kappa(\Pi)}\sqrtfracVwn + \littleo{\sqrtfracskewnessn}} + \sqrt{C\log(C/\delta)\fracVwn} \\
        &\hspace{1.5em}+ \sqrt{\bigOp{\sqrtfracskewnessn}\littleo{\sqrtfracskewnessn}} + \littleop{\sqrtfracskewnessn} \\
        &\le\rbra{56+24\kappa(\Pi) + \sqrt{C\log(C/\delta)}}\sqrtfracVwn + \littleop{\sqrtfracskewnessn}.
    \end{align*}
    It holds that the constant in Talagrand's inequality satisfies $C\ge 1$.
    Moreover, without significant loss of generality, we can consider $\kappa(\Pi)\ge 1$. Lastly, we can map $\delta/C\mapsto\delta$ to get
    \begin{align*}
        \supPiab\bigabs{\Deltaw(\pia,\pib)-\Deltatildew(\pia,\pib)}\le C'\kappa(\Pi)\sqrt{\fracVwn\log\rbra{\frac{1}{\delta}}} + \littleop{\sqrtfracskewnessn}
    \end{align*}
    where we set $C'=56 + 25 + \sqrt{C}$.
    Here, we used the bounds previously established in the proof of Proposition \ref{prop:WeightedRademacherBound} that $\frakRwnF\le\calO\bigrbra{\sqrt{\skewness/n}\,}$ and $\swnC\le o\bigrbra{\sqrt{\skewness/n}\,}$.

    Lastly, we decompose the weighted variance term $\Vwn$ as
    \begin{align*}
        \Vwn&=\supPiab\sumiM\fracwisqbarni\opsE_{\Zi\sim\tildecalDc}\bra{\Delta^2(\Zi;\pia,\pib)} \\
        &=\supPiab\sumiM\fracwisqbarni\opsE_{\Zi\sim\calDcec}\bigbra{\bigrbra{\Gi(\pia(\Xi))-\Gi(\pib(\Xi))}^2} \\
        &\le\maxC\supPiab\opsE_{\Zi\sim\calDcec}\bigbra{\bigrbra{\Gi(\pia(\Xi))-\Gi(\pib(\Xi))}^2}\cdot\sumiM\fracwisqbarni \\
        &\le4\cdot\maxC\supPi\opsE_{\Zi\sim\calDcec}\regbra{{\Gi(\pi(\Xi))^2}}\cdot\sumiM\fracwisqbarni \\
        &=4V\cdot\skewness.
    \end{align*}
    We absorb the factor of $\sqrt{4}$ into the universal constant to get the desired result.
    
\end{proof}

\subsection{Bounding Approximate Mixture Empirical Process}\label{app:BoundingApproximateRegret}

First, we state a more general form of Assumption~\ref{ass:FiniteSampleError} that is sufficient for bounding the approximate mixture empirical process.

\begin{assumption}
    For any source $\s\in\sourceSet$, the source estimates $\muhati$ and $\ohati$ of the nuisance parameters $\mui$ and $\oi$, respectively, trained on $\nc$ source data points satisfy
    the following squared error bounds:
    \begin{gather*}
        \E_{\calDc}\!\bigbra{\norm{\muhati(\Xi)-\mui(\Xi)}_2^2}\le\frac{o(1)}{\nc^{\zeta_\mu}}, \\
        \E_{\calDc}\!\bigbra{\norm{\ohati(\Xi)-\oi(\Xi)}_2^2}\le\frac{o(1)}{\nc^{\zetao}},
    \end{gather*}
    for some $0<\zeta_\mu,\zetao<1$ with $\zeta_\mu+\zetao\ge 1$.
\end{assumption}

This assumption subsumes Assumption~\ref{ass:FiniteSampleError} and allows flexibility in the estimation of nuisance parameters. It allows less accurate estimation of the inverse conditional propensities if we can estimate the response response functions at a faster rate, and vice versa. Now, we can return to bounding the approximate mixture empirical process with this more general assumption on the error rates of the nuisance parameter estimates.

\begin{proposition}\label{prop:ApproximateRegretBound}
Suppose Assumptions \ref{ass:dgp}, \ref{ass:LocalDataSizeScaling}, and \ref{ass:FiniteSampleError} hold. Fix $\w\in\W$. Then,
\begin{equation*}
    \supPiab|\Deltatildew(\pia,\pib)-\Deltahatw(\pia,\pib)|\le \littleop{\!\sqrtfracskewnessn}.
\end{equation*}
\end{proposition}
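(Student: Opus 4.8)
The plan is the following. First I would observe that, writing $\delta_i^\s(a)\coloneqq\Gij(a)-\Ghatij(a)$, each increment satisfies $\Delta(\Zij;\pia,\pib)-\Delta(\Zhatij;\pia,\pib)=\delta_i^\s(\pia(\Xij))-\delta_i^\s(\pib(\Xij))$, so it suffices to bound $\supPi\bigabs{\sumijw\delta_i^\s(\pi(\Xij))}$ by $\littleop{\sqrtfracskewnessn}$ (the supremum over pairs of policies only costs a factor of $2$). Then, using $\Yij=\Yij(\Aij)$ and the standard doubly robust algebra, I would decompose, with the shorthand $\mu=\mui(\Xij;a)$, $\hat\mu=\muhati(\Xij;a)$, $\omega=\oi(\Xij;a)$, $\hat\omega=\ohati(\Xij;a)$,
\[
\delta_i^\s(a)=\underbrace{(\mu-\hat\mu)\bigrbra{1-\ones\{\Aij=a\}\omega}}_{T_1^{i,a}}+\underbrace{\ones\{\Aij=a\}(\Yij(a)-\mu)(\omega-\hat\omega)}_{T_2^{i,a}}+\underbrace{\ones\{\Aij=a\}(\mu-\hat\mu)(\omega-\hat\omega)}_{T_3^{i,a}},
\]
reducing the task to bounding $\supPi\bigabs{\sumijw T_r^{i,\pi(\Xij)}}$ by $\littleop{\sqrtfracskewnessn}$ for each $r\in\{1,2,3\}$.

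For the product term $r=3$ no cancellation is needed: $|T_3^{i,a}|\le\norm{\muhati(\Xij)-\mui(\Xij)}_2\norm{\ohati(\Xij)-\oi(\Xij)}_2$ uniformly in $a$, so after Cauchy--Schwarz over $i$ the term is bounded by $\sumiM\fracwini\sqrt{\sumjni\norm{\muhati-\mui}_2^2}\,\sqrt{\sumjni\norm{\ohati-\oi}_2^2}$. Assumption~\ref{ass:FiniteSampleError} gives $\E\bigbra{\sumjni\norm{\muhati(\Xij)-\mui(\Xij)}_2^2}\le\nc\cdot o(1)/\sqrt{\nc}=\littleo{\sqrt{\nc}}$ (and likewise for $\ohati$), so by Markov each square root is $\littleop{\nc^{1/4}}$ and each source contributes $\fracwini\cdot\littleop{\sqrt{\nc}}=(\wi/\sqrt{\nc})\littleop{1}$. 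Summing and using the elementary inequality $\sumiM\wi/\sqrt{\nc}=\tfrac{1}{\sqrt n}\sumiM\wi/\sqrt{\barni}\le\sqrt{\abs{\sourceSet}}\,\sqrtfracskewnessn$ (Cauchy--Schwarz together with the skewness identity of Appendix~\ref{app:Preliminaries-sourceDistributionSkewness}) closes this case, since $\abs{\sourceSet}$ is a fixed constant.

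For the first-order terms $r\in\{1,2\}$ the key point is that they are \emph{conditionally mean zero}: conditional on the out-of-fold sample used to fit $\muhati$ (resp.\ $\ohati$) under the cross-fitting construction, $\E[\,1-\ones\{\Aij=a\}\oi(\Xij;a)\mid\Xij\,]=1-\oi(\Xij;a)\ei(\Xij;a)=0$ by definition of $\oi$, and $\E[\,\ones\{\Aij=a\}(\Yij(a)-\mui(\Xij;a))\mid\Xij\,]=0$ by unconfoundedness (Assumption~\ref{ass:dgp}). Hence, working fold by fold, each $(T_r^{i,a})_{a\in\calA}$ is a mean-zero score vector in $\R^d$, and $\supPi\bigabs{\sumijw T_r^{i,\pi(\Xij)}}$ is exactly a weighted empirical process over $\Pi$ of the type treated in Proposition~\ref{prop:WeightedRademacherBound}. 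Rerunning the same symmetrization and Dudley chaining argument verbatim (the scores are uniformly bounded once the propensity estimates are clipped, as is standard), its expectation is $\lesssim\kappa(\Pi)\sqrt{\tilde V_{\w,\nC}/n}+\littleo{\sqrtfracskewnessn}$ with $\tilde V_{\w,\nC}=\supPi\sumiM\fracwisqbarni\,\E[(T_r^{i,\pi(\Xij)})^2]$. Crucially, $\E[(T_1^{i,a})^2]\lesssim\E\norm{\muhati(\Xij)-\mui(\Xij)}_2^2\le o(1)/\sqrt{\nc}$ and $\E[(T_2^{i,a})^2]\lesssim B_\s^2\,\E\norm{\ohati(\Xij)-\oi(\Xij)}_2^2\le o(1)/\sqrt{\nc}$ by Assumption~\ref{ass:FiniteSampleError}, so
\[
\tilde V_{\w,\nC}\lesssim o(1)\cdot\frac{1}{\sqrt{\min_{\s\in\sourceSet}\nc}}\sumiM\fracwisqbarni=o(1)\cdot\skewness,
\]
using the skewness identity and $\min_{\s\in\sourceSet}\nc\to\infty$ (Assumption~\ref{ass:LocalDataSizeScaling}). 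Therefore $\tilde V_{\w,\nC}/n=\littleo{\skewness/n}$, the chaining bound is $\littleo{\sqrtfracskewnessn}$, and Markov's inequality upgrades this expectation bound to $\supPi\bigabs{\sumijw T_r^{i,\pi(\Xij)}}=\littleop{\sqrtfracskewnessn}$. Combining the three cases by the triangle inequality gives the claim.

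I expect the main obstacle to be the supremum over $\Pi$ in the first-order terms $T_1,T_2$: the naive term-by-term absolute-value bound used for $T_3$ would only give $\littleop{\nc^{-1/4}}$-type rates, which are far too weak, so one genuinely must retain the conditional-mean-zero structure and re-invoke the chaining argument with the much smaller $o(1)/\sqrt{\nc}$ variance proxy supplied by Assumption~\ref{ass:FiniteSampleError}. The accompanying technical nuisance is the cross-fitting bookkeeping needed to make ``conditionally mean zero'' rigorous, i.e.\ conditioning fold by fold on the out-of-fold sample so that each nuisance estimate is independent of the data at which it is evaluated.
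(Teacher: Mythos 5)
Your proposal is correct and follows essentially the same route as the paper's proof: the identical three-term doubly robust decomposition, the observation that the two first-order terms are conditionally mean-zero under cross-fitting so that the oracle empirical-process/chaining machinery (Proposition~\ref{prop:OracleRegretBound}) applies with an $o(1)$ variance proxy, and a Cauchy--Schwarz argument for the product term. The only cosmetic differences are that you close the first-order terms with Markov's inequality on the expected supremum rather than the Talagrand high-probability bound, and you route the product term through $\sumiM\wi/\sqrt{\nc}\le\sqrt{\abs{\sourceSet}}\sqrt{\skewness/n}$ instead of the paper's $\max_\s\wi^2/\nc\le\skewness/n$; neither changes the substance.
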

\begin{proof}
    Recall that $\{(\Xij,\Aij,\Yij)\}_{i=1}^{\nc}$ is the data collected by source $\source\in\sourceSet$ as described in Section \ref{sec:Preliminaries-Data}.
    We assume each source estimates the local nuisance parameters using a cross-fitting strategy, as discussed in Algorithm \ref{alg:CAIPW}. Under this strategy, each source $\source\in\sourceSet$ divides their local dataset into $K$ folds, and for each fold $k$, the source estimates $\mui$ and $\oi$ using the rest $K-1$ folds. Let $k_c:[\nc]\to[K]$ denote the surjective mapping that maps a data point index to its corresponding fold containing the data point. We let $\muhatikij$ and $\ohatikij$ denote the estimators of $\mui$ and $\oi$ fitted on the $K-1$ folds of source $\source$ other than $\kij$.

    As discussed Section \ref{sec:Approach}, recall the oracle AIPW scores
    \begin{equation*}
        \Gij(a)=\mui(\Xij;a)+\bigrbra{\Yij(a)-\mui(\Xij;a)}\oi(\Xij;a)\ones\{\Aij=a\}
    \end{equation*}
    and approximate AIPW scores
    \begin{equation*}
        \Ghatij(a)=\muhatikij(\Xij;a)+\bigrbra{\Yij-\muhatikij(\Xij;a)}\ohatikij(\Xij;a)\ones\{\Aij=a\}
    \end{equation*}
    for any $a\in\calA$, where $\kij$ is the fold corresponding to data point $i$ of source $\source$. One can verify that the difference between the oracle and approximate AIPW scores can be expressed as
    \begin{align*}
        \Ghatij(a)-\Gij(a)=\Gijp(a)+\Gijpp(a)+\Gijppp(a),
    \end{align*}
    where
    \begin{align*}
        \Gijp(a)&=\rbra{\muhatikij(\Xij;a)-\mui(\Xij;a)}\bigrbra{1-\oi(\Xij;a)\ones\{\Aij=a\}}, \\
        \Gijpp(a)&=\bigrbra{\Yij(a)-\mui(\Xij;a)}\rbra{\ohatikij(\Xij;a)-\oi(\Xij;a)}\ones\{\Aij=a\}, \\
        \Gijppp(a)&=\rbra{\mui(\Xij;a)-\muhatikij(\Xij;a)}\rbra{\ohatikij(\Xij;a)-\oi(\Xij;a)}\ones\{\Aij=a\}.
    \end{align*}

    This induces the following decomposition of the approximate regret:
    \begin{align*}
        \Deltahatw(\pia,\pib)-\Deltatildew(\pia,\pib)=S_1(\pia,\pib)+S_2(\pia,\pib)+S_3(\pia,\pib),
    \end{align*}
    where
    \begin{align*}
        S_1(\pia,\pib)&=\sumijw\Gijp(\pia(\Xij))-\Gijp(\pib(\Xij)) ,\\
        S_2(\pia,\pib)&=\sumijw\Gijpp(\pia(\Xij))-\Gijpp(\pib(\Xij)), \\
        S_3(\pia,\pib)&=\sumijw\Gijppp(\pia(\Xij))-\Gijppp(\pib(\Xij)).
    \end{align*}
    We further decompose $S_1$ and $S_2$ by folds as follows:
    \begin{align*}
        S_1(\pia,\pib)&=\sumkK S_1^k(\pia,\pib), \\
        S_2(\pia,\pib)&=\sumkK S_2^k(\pia,\pib),
    \end{align*}
    where
    \begin{align*}
        S_1^k(\pia,\pib)=\sumiM\fracwini\sum_{\{i|\kij=k\}}\Gijp(\pia(\Xij))-\Gijp(\pib(\Xij)), \\
        S_2^k(\pia,\pib)=\sumiM\fracwini\sum_{\{i|\kij=k\}}\Gijpp(\pia(\Xij))-\Gijpp(\pib(\Xij)),
    \end{align*}
    for each $k\in[K]$. To determine a bound on the approximate regret, we will establish high probability bounds for the worst-case absolute value over policies of each term in this decomposition. For convenience, for any policy $\pi$, we will denote $\pi(x;a)=\ones\{\pi(x)=a\}$.

    \paragraph{Bounding $S_1$.}
    We wish to bound the quantity $\supPiab\abs{S_1(\pia,\pib)}$. We first bound the quantity $\supPiab\abs{S_1^k(\pia,\pib)}$ for any $k\in[K]$.
    
    First, note that since $\muhatikij$ is estimated using data outside fold $\kij$, when we condition on the data outside fold $\kij$, $\muhatikij$ is fixed and each term in $S_1(\pia,\pib)$ is independent. This allows us to compute
    \begin{align*}
        &\E\bra{\Gijp(\pia(\Xij))-\Gij(\pib(\Xij))} \\
        &=\sumaA\E\bra{\bigrbra{\pia(\Xij;a)-\pib(\Xij;a)}\rbra{\muhatikij(\Xij;a)-\mui(\Xij;a)}\bigrbra{1-\oi(\Xij;a)\ones\{\Aij=a\}}} \\
        &=\sumaA\E\bra{\E\bra{\bigrbra{\pia(\Xij;a)-\pib(\Xij;a)}\rbra{\muhatikij(\Xij;a)-\mui(\Xij;a)}\bigrbra{1-\oi(\Xij;a)\ones\{\Aij=a\}}\Bigmid \Xij}} \\
        &=\sumaA\E\bra{\bigrbra{\pia(\Xij;a)-\pib(\Xij;a)}\rbra{\muhatikij(\Xij;a)-\mui(\Xij;a)}\E\bra{1-\oi(\Xij;a)\ones\{\Aij=a\}\Bigmid \Xij}}=0
    \end{align*}
    Therefore,
    \begin{align*}
        &K\supPiab\abs{S_1^k(\pia,\pib)} \\
        &\le\supPiab\Biggabs{\sumiM\frac{\wi}{\nc/K}\sum_{\{i|\kij=k\}}\Gijp(\pia(\Xij))-\Gijp(\pib(\Xij))} \\
        &=\supPiab\Biggabs{\sumiM\frac{\wi}{\nc/K}\sum_{\{i|\kij=k\}}\bigrbra{\Gijp(\pia(\Xij))-\Gijp(\pib(\Xij))}-\E\bra{\Gijp(\pia(\Xij))-\Gijp(\pib(\Xij))}}.
    \end{align*}
    Identifying $\Gijp$ with $\Gij$ and sample sizes $\nC/K$ with $\nC$, the right-hand side in the above inequality is effectively an oracle regret and so we can apply Proposition \ref{prop:OracleRegretBound} to obtain that with probability at least $1-\delta$,
    \begin{align*}
        &K\supPiab\abs{S_1^k(\pia,\pib)} \\
        &\le\supPiab\biggabs{\sumiM\frac{\wi}{\nc/K}\sum_{\{j|k_i(j)=k\}}\bigrbra{\Gijp(\pia(\Xij))-\Gijp(\pib(\Xij))}-\E\bra{\Gijp(\pia(\Xij))-\Gijp(\pib(\Xij))}} \\
        &\le C_{\Pi,\delta}\sqrt{\maxC\supPi\E\bra{\Gijp(\pi(\Xij))^2\mid\muhatikij}\cdot\frac{\skewness}{n/K}} + \littleop{\sqrt{\frac{\skewness}{n/K}}} \\
        &\le C_{\Pi,\delta}\rbra{1/\eta-1}\sqrt{K\maxC\E\bra{||\muhatikij(\Xij)-\mui(\Xij)||_2^2\mid\muhatikij}\cdot\fracskewnessn}+ \littleop{\sqrtfracskewnessn},
    \end{align*}
    where $C_{\Pi,\delta}=C\kappa(\Pi)\sqrt{\log(1/\delta)}$ for some universal constant $C$, and $\eta=\min_{\source\in\sourceSet}\eta_\source$ for $\eta_\source$ in the overlap assumption stated in in Assumption \ref{ass:dgp}. The last inequality follows from a uniform bound on $\Gijp(\pi(\Xij))$ and the overlap assumption.
    
    By the assumption on finite sample error bounds for the nuisance functions stated above, for every $\source\in\sourceSet$
    \begin{align*}
        \E\bra{||\muhatikij(\Xij)-\mui(\Xij)||^2\mid\muhatikij}\le\frac{g_\source\rbra{\alpha_K\nc}}{\rbra{\alpha_K\nc}^{\zeta_\mu}},
    \end{align*}
    where $\alpha_K=1-K^{-1}$, $g_\source$ is some decreasing function, and $0<\zeta_\mu<1$. Then,
    \begin{align*}
        \maxC\E\bra{||\muhatikij(\Xij)-\mui(\Xij)||^2\mid\muhatikij}&\le\maxC\frac{g_\source(\alphaK\nc)}{(\alphaK\nc)^\zetamu} \\
        &\le\frac{\max_{\source\in\sourceSet} g_\source(
        \alphaK\nc)}{\alphaK^\zetamu\cdot\min_{\source\in\sourceSet}\nc^\zetamu} \\
        &\le\frac{\max_{\source\in\sourceSet} g_\source(
        \alphaK\nc)}{\alphaK^\zetamu\cdot\min_{\source\in\sourceSet}\nc^\zetamu}.
    \end{align*}
    By the local data size scaling assumption in Assumption \ref{ass:LocalDataSizeScaling}, for any $\source\in\sourceSet$, we have that $\nc=\Omega(\nu_c(n))$ where $\nu_c$ is an increasing function.
    In other words, there exists a constant $\tau>0$ such that $\nc\ge \tau\nuc(n)$ for sufficiently large $n$. Then, since $\gc$ is decreasing, $\gc(\alphaK n_c)< \gc(\tau\alphaK\nuc(n))$ for sufficiently large $n$.
    Moreover, since $\nuc$ is increasing and $\tau\alpha_K>0$, $\tilde\nu_c=\tau\alphaK\nuc$ is also increasing, and since $\gc$ is decreasing, the composition $\tilde g_\source=\gc\circ\tilde\nu_c$ is decreasing. Therefore, $\gc(\alphaK\nc)$ is asymptotically bounded by a decreasing function $\tilde g_\source$ of $n$.
    This observation and the fact that the maximum of a set of decreasing functions is itself decreasing imply that $\max_{\source\in\sourceSet}\gc(\alphaK\nc)$ is asymptotically bounded by the decreasing function $\tilde g$ defined by $\tilde g(n)=\max_{\source\in\sourceSet}\tilde g_\source(n)$.
    In other words,
    \begin{align*}
        \max_{\source\in\sourceSet}\gc(\alphaK\nc)\le\tilde g(n)\le o(1).
    \end{align*}
    Additionally, since $\nc=\Omega(\nuc(n))$ and $\zetamu>0$, we also have that
    \begin{align*}
        \frac{1}{\min_{\source\in\sourceSet}\nc^\zetamu}\le o(1).
    \end{align*}
    These two observations imply
    \begin{align*}
        \maxC\E\bra{||\muhatikij(\Xij)-\mui(\Xij)||^2\mid\muhatikij} &\le\frac{\max_{\source\in\sourceSet} g_\source(
        \alphaK\nc)}{\alphaK^\zetamu\cdot\min_{\source\in\sourceSet}\nc^\zetamu}\le \littleo{1}.
    \end{align*}
    Therefore,
    \begin{align*}
        &\supPiab\abs{S_1^k(\pia,\pib)} \\
        &\le C_{\Pi,\delta}\rbra{1/\eta-1}\sqrt{\frac{1}{K}\maxC\E\bra{||\muhatikij(\Xij)-\mui(\Xij)||_2^2\mid\muhatikij}\cdot\fracskewnessn}+ \littleop{\sqrtfracskewnessn} \\
        &\le C_{\Pi,\delta}\rbra{1/\eta-1}\sqrt{\frac{1}{K}\cdot o\rbra{\fracskewnessn}}+ \littleop{\sqrtfracskewnessn} \\
        &\le\littleop{\sqrtfracskewnessn},
    \end{align*}
    and
    \begin{align*}
        \supPiab\abs{S_1(\pia,\pib)}&\le\sumkK\supPiab\abs{S_1^k(\pia,\pib)}\le\littleop{\sqrtfracskewnessn}.
    \end{align*}

    \paragraph{Bounding $S_2$.}
    The bound for $\supPiab\abs{S_2(\pia,\pib)}$ follows the same argument as that of $S_1$. We first bound $\supPiab\abs{S_2^k(\pia,\pib)}$ for any $k\in[K]$.
    
    First, note that since $\ohatikij$ is estimated using data outside fold $\kij$, when we condition on the data outside fold $\kij$, $\ohatikij$ is fixed and each term in $S_2(\pia,\pib)$ is independent. This allows us to compute
    \begin{align*}
        &\E\bra{\Gijpp(\pia(\Xij))-\Gijpp(\pib(\Xij))} \\
        &=\E\bra{\sumaA\rbra{\pia(\Xij;a)-\pib(\Xij;a)}\rbra{\Yij(a)-\mui(\Xij;a)}\rbra{\ohatikij(\Xij;a)-\oi(\Xij;a)}\ones\{\Aij=a\}} \\
        &=\E\bra{\rbra{\pia(\Xij;\Aij)-\pib(\Xij;\Aij)}\rbra{\Yij(\Aij)-\mui(\Xij;\Aij)}\rbra{\ohatikij(\Xij;a)-\oi(\Xij;a)}} \\
        &=\E\bra{\E\bra{\rbra{\pia(\Xij;\Aij)-\pib(\Xij;\Aij)}\rbra{\Yij(\Aij)-\mui(\Xij;\Aij)}\rbra{\ohatikij(\Xij;a)-\oi(\Xij;a)}\Bigmid \Xij,\Aij}} \\
        &=\E\bra{\rbra{\pia(\Xij;\Aij)-\pib(\Xij;\Aij)}\E\Bigbra{\Yij(\Aij)-\mui(\Xij;\Aij)\mid\Xij,\Aij}\rbra{\ohatikij(\Xij;a)-\oi(\Xij;a)}}=0
    \end{align*}
    Therefore, we can follow the exact same argument as above, eliciting Proposition \ref{prop:OracleRegretBound}, to obtain that with probability at least $1-\delta$,
    \begin{align*}
        &K\supPiab\abs{S_2^k(\pia,\pib)} \\
        &\le\supPiab\Biggabs{\sumiM\frac{\wi}{\nc/K}\sum_{\{i|\kij=k\}}\Gijpp(\pia(\Xij))-\Gijpp(\pib(\Xij))} \\
        &=\supPiab\Biggabs{\sumiM\frac{\wi}{\nc/K}\sum_{\{i|\kij=k\}}\bigrbra{\Gijpp(\pia(\Xij))-\Gijpp(\pib(\Xij))}-\E\bra{\Gijpp(\pia(\Xij))-\Gijpp(\pib(\Xij))}} \\
        &\le\supPiab\biggabs{\sumiM\frac{\wi}{\nc/K}\sum_{\{j|k_i(j)=k\}}\bigrbra{\Gijpp(\pia(\Xij))-\Gijpp(\pib(\Xij))}-\E\bra{\Gijpp(\pia(\Xij))-\Gijpp(\pib(\Xij))}} \\
        &\le C_{\Pi,\delta}\sqrt{\maxC\E\bra{\Gijpp(\pi(\Xij))^2\mid\ohatikij}\cdot\frac{\skewness}{n/K}} + \littleop{\sqrt{\frac{\skewness}{n/K}}} \\
        &\le C_{\Pi,\delta}\sqrt{2BK\maxC\E\bra{||\ohatikij(\Xij)-\oi(\Xij)||_2^2\mid\ohatikij}\cdot\fracskewnessn}+ \littleop{\sqrtfracskewnessn},
    \end{align*}
    where $C_{\Pi,\delta}=C\kappa(\Pi) \sqrt{\log(1/\delta)}$ for some universal constant $C$, and $B=\maxC B_\source$ for the bounds $B_\source$ on the outcomes defined in Assumption \ref{ass:dgp}.
    The last inequality follows from a uniform bound on $\Gijpp(\pi(\Xij))$.

    We follow the exact same argument as above to get
    \begin{align*}
        \maxC\E\bra{||\ohatikij(\Xij)-\oi(\Xij)||^2\mid\ohatikij} &\le o\rbra{\fracskewnessn}.
    \end{align*}

    Therefore,
    \begin{align*}
        &\supPiab\abs{S_2^k(\pia,\pib)} \\
        &\le C_{\Pi,\delta}\sqrt{\frac{2B}{K}\maxC\E\bra{||\ohatikij(\Xij)-\oi(\Xij)||_2^2\mid\ohatikij}}+ \littleop{\sqrtfracskewnessn} \\
        &\le C_{\Pi,\delta}\sqrt{\frac{2B}{K}\cdot o\rbra{\fracskewnessn}}+ \littleop{\sqrtfracskewnessn} \\
        &\le\littleop{\sqrtfracskewnessn},
    \end{align*}
    and
    \begin{align*}
        \supPiab\abs{S_2(\pia,\pib)}&\le\sumkK\supPiab\abs{S_2^k(\pia,\pib)}\le\littleop{\sqrtfracskewnessn}.
    \end{align*}

    \paragraph{Bounding $S_3$.}
    Next, we bound the contribution from $S_3$. We have that
    \begin{align*}
        &\supPiab\abs{S_3(\pia,\pib)} \\
        &=\supPiab\abs{\sumijw\Gijppp(\pia(\Xij))-\Gijppp(\pib(\Xij))} \\
        &\le2\abs{\sumijw\sum_{a\in\calA}\rbra{\mui(\Xij;a)-\muhatikij(\Xij;a)}\rbra{\ohatikij(\Xij;a)-\oi(\Xij;a)}} \\
        &\le2\sqrt{\sumijw\bignorm{\mui(\Xij)-\muhatikij(\Xij)}_2^2}\sqrt{\sumijw\bignorm{\ohatikij(\Xij)-\oi(\Xij)}_2^2} \\
        &\le2\sqrt{\sumiM\wi\frac{\gc(\alphaK\nc)}{(\alphaK\nc)^\zetamu}}\sqrt{\sumiM\wi\frac{\gc(\alphaK\nc)}{(\alphaK\nc)^\zetao}} \\
        &\le\frac{2}{\alphaK^{(\zetamu+\zetao)/2}}\sqrt{\maxC\frac{\wi}{\nc^\zetamu}\sumiM\gc(\alphaK\nc)}\sqrt{\maxC\frac{\wi}{\nc^\zetao}\sumiM\gc(\alphaK\nc)} \\
        &=\frac{2}{\alphaK^{(\zetamu+\zetao)/2}}\sumiM\gc(\alphaK\nc)\sqrt{\maxC\frac{\wi^2}{\nc^{\zetamu+\zetao}}} \\
        &\le\frac{2}{\alphaK^{(\zetamu+\zetao)/2}}\sumiM\gc(\alphaK\nc)\sqrt{\maxC\frac{\wi^2}{\nc}} \\
        &\le\frac{2}{\alphaK^{(\zetamu+\zetao)/2}}\sumiM\gc(\alphaK\nc)\sqrt{\sumiM\frac{\wi^2}{\nc}} \\
        &\le\frac{2}{\alphaK^{(\zetamu+\zetao)/2}}\sumiM\gc(\alphaK\nc)\sqrtfracskewnessn.
    \end{align*}
    As discussed earlier, $\gc(\alphaK\nc)$ is asymptotically bounded by a decreasing function of $n$. Since the sum of decreasing functions is decreasing, $\sumiM\gc(\alphaK\nc)$ is asymptotically bounded by a decreasing function $\tilde g$ in $n$. In other words, $\sumiM\gc(\alphaK\nc)\le\tilde g(n)\le o(1)$.
    Therefore,
    \begin{align*}
        \supPiab\abs{S_3(\pia,\pib)}\le\frac{2}{\alphaK^{(\zetamu+\zetao)/2}}\cdot o(1)\cdot\sqrtfracskewnessn\le\littleo{\sqrtfracskewnessn}.
    \end{align*}

    \paragraph{Combine results.}
    Putting all the above bounds together, we have
    \begin{align*}
        \supPiab|\Deltatildew(\pia,\pib)-\Deltahatw(\pia,\pib)|&\le\supPiab\abs{S_1(\pia,\pib)+S_2(\pia,\pib)+S_3(\pia,\pib)} \\
        &\le\supPiab\abs{S_1(\pia,\pib)}+\supPiab\abs{S_2(\pia,\pib)}+\supPiab\abs{S_3(\pia,\pib)} \\
        &\le\littleop{\sqrtfracskewnessn}.
    \end{align*}
\end{proof}

\subsection{Bounding Mixture Empirical Process}

\begin{proposition}\label{prop:BoundingMixtureEmpiricalProcess}
    Suppose Assumptions \ref{ass:dgp}, \ref{ass:LocalDataSizeScaling}, and \ref{ass:FiniteSampleError} hold.
    For any $\epsilon>0$, let $\W_\epsilon$ denote the minimal $\epsilon$-covering set of $\W$ under the $\ell_1$ distance. For any $\epsilon>0$, any $\delta\in(0,1)$, and any $\w\in\W_\epsilon$, with probability at least $1-\delta$,
    \begin{equation*}
        \supPiab\regabs{\Deltaw(\pia,\pib)-\Deltahatw(\pia,\pib)}\le \rateW,
    \end{equation*}
    where
    \begin{align*}
        \rateW\coloneqq C\kappa(\Pi)\sqrt{V\cdot\fracskewnessWn\cdot\log\frac{\abs{\W_\epsilon}}{\delta}} +  \littleop{\!\sqrtfracskewnessWn},
    \end{align*}
    with $C$ being a universal constant and
    \begin{equation*}
        V=\maxC\supPi\E_{\calDcec}\!\regbra{\Gi(\pi(\Xi))^2}.
    \end{equation*}
\end{proposition}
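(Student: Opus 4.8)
The plan is to reduce to the two empirical-process bounds already in hand — Proposition~\ref{prop:OracleRegretBound} for the oracle piece and Proposition~\ref{prop:ApproximateRegretBound} for the nuisance-error piece — and then patch them together via a triangle inequality and a union bound over the finite cover $\W_\epsilon$. Since $\W\subset\Delta(\sourceSet)$ lies in a compact simplex over a finite source set, $\W_\epsilon$ is a finite set whose cardinality $\abs{\W_\epsilon}$ depends on $\epsilon$ but not on $n$, which is all we will need.

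First I would insert the oracle estimator $\Deltatildew$ and split, for every $\w\in\W$ and every $\pia,\pib\in\Pi$,
\begin{equation*}
    \bigabs{\Deltaw(\pia,\pib)-\Deltahatw(\pia,\pib)}\le\bigabs{\Deltaw(\pia,\pib)-\Deltatildew(\pia,\pib)}+\bigabs{\Deltatildew(\pia,\pib)-\Deltahatw(\pia,\pib)},
\end{equation*}
so that taking $\supPiab$ bounds the mixture empirical process by the sum of the oracle mixture empirical process and the approximate mixture empirical process. Fixing $\w\in\W_\epsilon$, I would apply Proposition~\ref{prop:OracleRegretBound} with confidence level $\delta/\abs{\W_\epsilon}$ to control the first term on an event of probability at least $1-\delta/\abs{\W_\epsilon}$, which picks up a $\log(\abs{\W_\epsilon}/\delta)$ inside the square root, and apply Proposition~\ref{prop:ApproximateRegretBound} to control the second term by $\littleop{\sqrtfracskewnessn}$. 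Since $\skewness\le\skewnessW$ for every $\w\in\W$ by definition of the mixture-agnostic skewness, I may replace every occurrence of $\fracskewnessn$ by $\fracskewnessWn$ throughout, which only inflates the bounds; the constant $V$ and the entropy integral $\kappa(\Pi)$ are already exactly those appearing in the statement, so the two pieces add up to $\rateW$ for this $\w$ after absorbing numerical factors into the universal constant $C$.

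Finally, taking a union bound over the $\abs{\W_\epsilon}$ elements of $\W_\epsilon$ gives, with probability at least $1-\delta$, the stated inequality simultaneously for all $\w\in\W_\epsilon$, and in particular for any fixed $\w\in\W_\epsilon$ as claimed. I expect no genuine analytic obstacle here, as the heavy lifting — Talagrand concentration, the Dudley-chaining bound on the weighted Rademacher complexity in Proposition~\ref{prop:WeightedRademacherBound}, and the double-robustness decomposition of the nuisance error in Proposition~\ref{prop:ApproximateRegretBound} — is all already done. The only bookkeeping to be careful about is that a finite maximum over the fixed-size set $\W_\epsilon$ of $\littleop{\sqrtfracskewnessn}$ remainders is still $\littleop{\sqrtfracskewnessWn}$, which is where the cardinality-independence of $\abs{\W_\epsilon}$ in $n$ is used.
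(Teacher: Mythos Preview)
Your proposal is correct and matches the paper's proof essentially step for step: split via the triangle inequality into the oracle and approximate empirical processes, invoke Propositions~\ref{prop:OracleRegretBound} and~\ref{prop:ApproximateRegretBound} for a fixed $\w$, union-bound over the finite cover $\W_\epsilon$ (the paper does this by first stating the bound at level $\delta$ and then mapping $\delta\mapsto\delta/\abs{\W_\epsilon}$, which is equivalent to your applying level $\delta/\abs{\W_\epsilon}$ upfront), and finally replace $\skewness$ by $\skewnessW$. Your explicit remark about the finite maximum of $o_p$ remainders is a touch more careful than the paper, which passes over this point silently.
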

\begin{proof}
    By Propositions \ref{prop:OracleRegretBound} and \ref{prop:ApproximateRegretBound}, for any fixed choice of $\w\in\We$, with probability at least $1-\delta$,
    \begin{align*}
        &\supPiab\regabs{\Deltaw(\pia,\pib)-\Deltahatw(\pia,\pib)} \\
        &\le \supPiab\regabs{\Deltaw(\pia,\pib)-\Deltatildew(\pia,\pib)} + \supPiab\regabs{\Deltatildew(\pia,\pib)-\Deltahatw(\pia,\pib)} \\
        &\le C\kappa(\Pi)\sqrt{V\cdot\fracskewnessn\cdot\log\frac{1}{\delta}} + \littleop{\!\sqrtfracskewnessn},
    \end{align*}
    where $C$ is a universal constant.
    In other words, for any fixed $\w\in\We$,
    \begin{align*}
        \P\rbra{\supPiab\regabs{\Deltaw(\pia,\pib)-\Deltahatw(\pia,\pib)}>C\kappa(\Pi)\sqrt{V\cdot\fracskewnessn\cdot\log\frac{1}{\delta}} + \littleop{\!\sqrtfracskewnessn}}\le\delta.
    \end{align*}
    Then, by a union bound over $\We$, it follows that
    \begin{align*}
        &\P\rbra{\forall\w\in\We, \supPiab\regabs{\Deltaw(\pia,\pib)-\Deltahatw(\pia,\pib)}>C\kappa(\Pi)\sqrt{V\cdot\fracskewnessn\cdot\log\frac{1}{\delta}} + \littleop{\!\sqrtfracskewnessn}} \\
        &\le \sum_{\w\in\We}\P\rbra{ \supPiab\regabs{\Deltaw(\pia,\pib)-\Deltahatw(\pia,\pib)}>C\kappa(\Pi)\sqrt{V\cdot\fracskewnessn\cdot\log\frac{1}{\delta}} + \littleop{\!\sqrtfracskewnessn}} \\
        &\le\regabs{\We}\delta.
    \end{align*}
    Mapping $\delta\mapsto\delta/\regabs{\We}$, this implies that for every $\w\in\We$, with probability at least $1-\delta$,
    \begin{align*}
        \supPiab\regabs{\Deltaw(\pia,\pib)-\Deltahatw(\pia,\pib)}&\le C\kappa(\Pi)\sqrt{V\cdot\fracskewnessn\cdot\log\frac{\regabs{\We}}{\delta}} + \littleop{\!\sqrtfracskewnessn} \\
        &\le C\kappa(\Pi)\sqrt{V\cdot\fracskewnessWn\cdot\log\frac{\regabs{\We}}{\delta}} + \littleop{\!\sqrtfracskewnessWn}
    \end{align*}

\end{proof}

\subsection{EG-OPO Suboptimality Bound}
\label{app:EGOPO-Bound}

\SuboptimalityBoundLemma*

\begin{proof}
    The regret guarantee of the exponentiated gradient algorithm, specifically Corollary 2.14 of  \citep{shalev2012online}, states that for any $\wp\in\We$
    \begin{align*}
        -\sum_{t=1}^T\opsE_{\w\sim\rho_t}\bra{\frac{\Rw(\pi_t)}{\Bhat}} \le -\sum_{t=1}^T\frac{\Rhatwp(\pi_t)}{\Bhat} + \frac{\log\regabs{\We}}{\eta\Bhat} + \eta\Bhat T.
    \end{align*}
    Substituting $\eta\Bhat=\sqrt{\log\regabs{\We}/T}$ gives
    \begin{align*}
        -\sum_{t=1}^T\opsE_{\w\sim\rho_t}\bra{\frac{\Rw(\pi)}{\Bhat}} \le -\sum_{t=1}^T\frac{\Rhatwp(\pi)}{\Bhat} + 2\sqrt{T\log\regabs{\We}}.
    \end{align*}

    Recalling that $P_t=\Uniform(\pi_1,\dots,\pi_t)$, following the proof techniques of \cite{freund1996game} gives that
    \begin{align*}
        \opsE_{\pi\sim P_T}\bra{\frac{\Rhatwp(\pi)}{\Bhat}}&=\frac{1}{T}\sum_{t=1}^T\frac{\Rhatwp(\pi_t)}{\Bhat} \\
        &\le\frac{1}{T}\sum_{t=1}^T \opsE_{\w\sim\rho_t}\bra{\frac{\Rw(\pi_t)}{\Bhat}} + 2\sqrt{\frac{\log\regabs{\We}}{T}} \\
        &=\frac{1}{T}\sum_{t=1}^T \minPi\opsE_{\w\sim\rho_t}\bra{\frac{\hat{R}_{\w}(\pi)}{\Bhat}} + 2\sqrt{\frac{\log\regabs{\We}}{T}} \\
        &\le\minPi\frac{1}{T}\sum_{t=1}^T\opsE_{\w\sim\rho_t}\bra{\frac{\Rw(\pi)}{\Bhat}} + 2\sqrt{\frac{\log\regabs{\We}}{T}} \\
        &\le\minPi\max_{\w\in\W}\frac{\Rhatw(\pi)}{\Bhat} + 2\sqrt{\frac{\log\regabs{\We}}{T}}.
    \end{align*}

    Multiplying through by $\Bhat$ gives
    \begin{align*}
        \opsE_{\pi\sim P_T}\bra{\Rhatwp(\pi)}&\le\minPi\max_{\w\in\W}\Rhatw(\pi) + 2\Bhat\sqrt{\frac{\log\regabs{\We}}{T}}.
    \end{align*}
\end{proof}

\begin{proposition}\label{prop:HedgeOPOSuboptimalityBound}
    For any $T$ and any $\wp\in\We$,
    \begin{equation*}
        \opsE_{\pi\sim P_T}\bra{\Rhatwp(\pi)}\le \minPi\max_{\w\in\W}\Rw(\pi) + \rateW + 2\hat B\sqrt{\frac{\log\regabs{\We}}{T}}
    \end{equation*}
    where $\hat B$ is a uniform bound on $\Rhatwp(\pi)$.
\end{proposition}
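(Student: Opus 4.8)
The plan is to derive Proposition~\ref{prop:HedgeOPOSuboptimalityBound} by gluing together two results already available: the algorithmic no-regret guarantee of Lemma~\ref{lem:EG-OPO-SuboptimalityBound}, which controls the \emph{empirical} minimax regret of the averaged EG-OPO iterates, and the uniform empirical-process bound of Proposition~\ref{prop:BoundingMixtureEmpiricalProcess}, which converts empirical mixture regret into true mixture regret. The proof of Lemma~\ref{lem:EG-OPO-SuboptimalityBound} in fact delivers
\begin{equation*}
    \opsE_{\pi\sim P_T}\bigbra{\Rhatwp(\pi)}\le\minPi\max_{\w\in\We}\Rhatw(\pi)+2\hat B\sqrt{\frac{\log\regabs{\We}}{T}},
\end{equation*}
with the inner maximum over the finite cover $\We$ rather than all of $\W$, since the exponentiated-gradient weights $\rho^{(t)}$ are supported on $\We$; this is precisely the form compatible with the concentration event used below. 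It then remains to show $\minPi\max_{\w\in\We}\Rhatw(\pi)\le\minPi\maxwW\Rw(\pi)+\rateW$ on a high-probability event.

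First I would record the deterministic inequality that, for every $\w$ and every $\pi\in\Pi$,
\begin{equation*}
    \Rhatw(\pi)-\Rw(\pi)=\max_{\pi'\in\Pi}\Deltahatw(\pi',\pi)-\max_{\pi'\in\Pi}\Deltaw(\pi',\pi)\le\max_{\pi'\in\Pi}\bigabs{\Deltahatw(\pi',\pi)-\Deltaw(\pi',\pi)}\le\supPiab\regabs{\Deltaw(\pia,\pib)-\Deltahatw(\pia,\pib)},
\end{equation*}
which only uses $\Rhatw(\pi)=\max_{\pi'\in\Pi}\Deltahatw(\pi',\pi)$, $\Rw(\pi)=\max_{\pi'\in\Pi}\Deltaw(\pi',\pi)$, and $\abs{\max_j a_j-\max_j b_j}\le\max_j\abs{a_j-b_j}$. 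Next I apply Proposition~\ref{prop:BoundingMixtureEmpiricalProcess}: on an event of probability at least $1-\delta$, the bound $\supPiab\regabs{\Deltaw(\pia,\pib)-\Deltahatw(\pia,\pib)}\le\rateW$ holds simultaneously for all $\w\in\We$, the union bound over $\We$ being already absorbed into the $\log\regabs{\We}$ factor inside $\rateW$. On that event, $\Rhatw(\pi)\le\Rw(\pi)+\rateW$ for every $\w\in\We$ and every $\pi\in\Pi$.

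Taking $\max$ over $\w\in\We$ and then $\min$ over $\pi\in\Pi$ on this event gives
\begin{equation*}
    \minPi\max_{\w\in\We}\Rhatw(\pi)\le\minPi\max_{\w\in\We}\Rw(\pi)+\rateW\le\minPi\maxwW\Rw(\pi)+\rateW,
\end{equation*}
the last step using $\We\subseteq\W$; substituting this into the EG-OPO bound from the first step finishes the argument. I do not expect a genuine obstacle here --- the proposition is essentially bookkeeping --- but two points need care: invoking the sharper $\max_{\w\in\We}$ form of Lemma~\ref{lem:EG-OPO-SuboptimalityBound}, so that the concentration of Proposition~\ref{prop:BoundingMixtureEmpiricalProcess}, which is uniform only over the finite cover, actually applies; and keeping the orientation of the $\min$--$\max$ and $\We$-versus-$\W$ inequalities straight when passing from empirical to true minimax regret (only the one-sided estimate $\Rhatw\le\Rw+\rateW$, uniform in the outer policy $\pi$, is needed, which is why the supremum over \emph{pairs} of policies appears).
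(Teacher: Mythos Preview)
Your proposal is correct and follows essentially the same route as the paper: invoke Lemma~\ref{lem:EG-OPO-SuboptimalityBound} and then convert the empirical minimax regret to the true one via the uniform bound of Proposition~\ref{prop:BoundingMixtureEmpiricalProcess}. The paper establishes $\Rhatwp(\pi)\le\Rwp(\pi)+\rateW$ by writing $\Rhatwp(\pi)=\Deltahatwp(\pihatwp,\pi)\le\Deltawp(\pihatwp,\pi)+\rateW\le\Deltawp(\pistarwp,\pi)+\rateW=\Rwp(\pi)+\rateW$, which is the same inequality you obtain via the $\abs{\max_j a_j-\max_j b_j}\le\max_j\abs{a_j-b_j}$ argument; you are in fact slightly more careful than the paper about keeping the inner maximum over $\We$ rather than $\W$ until the final step, which is exactly what is needed for Proposition~\ref{prop:BoundingMixtureEmpiricalProcess} to apply.
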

\begin{proof}
    By Proposition \ref{prop:BoundingMixtureEmpiricalProcess},
    \begin{align*}
        \Rhatwp(\pi)&=\Deltahatwp(\pihatwp,\pi) \\
        &\le\Deltawp(\pihatwp,\pi)+\rateW \\
        &\le\Deltawp(\pistarwp,\pi)+\rateW \\
        &=\Rwp(\pi)+\rateW.
    \end{align*}
    Therefore, using the result of \ref{lem:EG-OPO-SuboptimalityBound}, it follows that for any $\wp\in\We$
    \begin{align*}
        \opsE_{\pi\sim P_T}\bra{\Rhatwp(\pi)}&\le\minPi\maxwW\Rhatw(\pi) + 2\Bhat\sqrt{\frac{\log\regabs{\We}}{T}} \\
        &\le\minPi\maxwW\Rw(\pi) + \rateW + 2\Bhat\sqrt{\frac{\log\regabs{\We}}{T}}.
    \end{align*}
\end{proof}

Next, we show that choosing $T=(n/\skewnessW)^{1+\alpha}$ for some $\alpha>0$ suffices to ensure that the optimization error is no larger than the statistical error.

\begin{corollary}\label{cor:HedgeOPOSuboptimalityBound}
    For any $T=\Omega(n/\skewnessW)$ and any $\wp\in\We$, the distribution $P_T$ satisfies
    \begin{equation*}
        \opsE_{\pi\sim P_T}\bra{\Rhatwp(\pi)}\le \minPi\maxwW\Rw(\pi) + \rateW.
    \end{equation*}
\end{corollary}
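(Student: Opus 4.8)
The plan is to start from Proposition~\ref{prop:HedgeOPOSuboptimalityBound}, which already gives, for any $T$ and any $\wp\in\We$,
\[
\opsE_{\pi\sim P_T}\bra{\Rhatwp(\pi)}\le \minPi\maxwW\Rw(\pi) + \rateW + 2\Bhat\sqrt{\frac{\log|\We|}{T}},
\]
and then to show that, with the stated choice $T=(n/\skewnessW)^{1+\alpha}$, the surplus optimization-error term $2\Bhat\sqrt{\log|\We|/T}$ is $\littleo{\sqrtfracskewnessWn}$. Since $\rateW$ already carries a $\littleop{\sqrtfracskewnessWn}$ remainder, and the optimization term is deterministic given $T$ and $\We$, this deterministic $\littleo{\cdot}$ bound folds directly into that remainder, which is exactly the claimed inequality.

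First I would record the elementary bound $\skewnessW\le n/\minC\nc$: for every $\w\in\Delta(\sourceSet)$, $\chi^2(\w\|\barn)=\sum_{\s}\wi^2/\barni-1\le\max_{\s}\barni^{-1}-1=n/\minC\nc-1$, and taking the maximum over $\w\in\W$ gives $\skewnessW\le n/\minC\nc$, i.e.\ $\skewnessW/n\le 1/\minC\nc$ and $n/\skewnessW\ge\minC\nc$. Assumption~\ref{ass:LocalDataSizeScaling} forces $\minC\nc\to\infty$ as $n\to\infty$, hence $\skewnessW/n\to 0$ and $n/\skewnessW\to\infty$. I would also note two $n$-free facts: with $\epsilon$ held fixed, $\We$ is a finite cover whose size depends only on $\epsilon$ and $|\sourceSet|$, so $\log|\We|$ is a constant; and $\Bhat$, a uniform bound on $\Rhatwp(\pi)$, can be taken constant, being controlled by the almost-sure outcome bounds and overlap constants of Assumption~\ref{ass:dgp} after clipping the nuisance estimates (this is the high-probability bound referenced after Algorithm~\ref{alg:HedgeOPO}).

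Next, plugging in $T=(n/\skewnessW)^{1+\alpha}$,
\begin{align*}
2\Bhat\sqrt{\frac{\log|\We|}{T}}
&= 2\Bhat\sqrt{\log|\We|}\cdot\Bigrbra{\frac{\skewnessW}{n}}^{(1+\alpha)/2} \\
&= \Bigrbra{2\Bhat\sqrt{\log|\We|}\cdot\Bigrbra{\frac{\skewnessW}{n}}^{\alpha/2}}\cdot\sqrtfracskewnessWn .
\end{align*}
Since $\alpha>0$ and $\skewnessW/n\to 0$, the factor $(\skewnessW/n)^{\alpha/2}\to 0$, while $2\Bhat\sqrt{\log|\We|}$ is a constant, so the bracketed prefactor is $\littleo{1}$ and hence $2\Bhat\sqrt{\log|\We|/T}=\littleo{\sqrtfracskewnessWn}$. (The same computation works for any $T$ with $T\cdot(\skewnessW/n)\to\infty$, which is the intended reading of $T=\Omega(n/\skewnessW)$ here.) Substituting this back into the bound of Proposition~\ref{prop:HedgeOPOSuboptimalityBound} and absorbing the term into the $\littleop{\sqrtfracskewnessWn}$ component of $\rateW$ gives the corollary.

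I do not expect a genuine obstacle: the argument is purely a rate comparison. The only points that need care are the bookkeeping facts that neither $\Bhat$ nor $\log|\We|$ grows with $n$ — which is where Assumption~\ref{ass:dgp} and the fixed discretization granularity enter — and the observation that the exponent $(1+\alpha)/2$ is strictly larger than $1/2$, so the extra power of $n/\skewnessW$ supplies the vanishing factor needed for absorption.
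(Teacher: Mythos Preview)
Your argument is correct and follows the same route as the paper: invoke Proposition~\ref{prop:HedgeOPOSuboptimalityBound}, then show that with $T=(n/\skewnessW)^{1+\alpha}$ the optimization term $2\Bhat\sqrt{\log|\We|/T}$ is $o\bigl(\sqrt{\skewnessW/n}\bigr)$ and can be absorbed into $\rateW$. The only minor difference is in how $\Bhat$ is controlled: you appeal to clipping of the nuisance estimates to assert $\Bhat$ is an $n$-free constant, whereas the paper does not assume clipping and instead uses the empirical process bound of Proposition~\ref{prop:BoundingMixtureEmpiricalProcess} to obtain $\Rhatwp(\pi)\le\Rwp(\pi)+\rateW\le 2B+O_p\bigl(\sqrt{\skewnessW/n}\bigr)$, taking this high-probability quantity as $\Bhat$ before multiplying through by $(\skewnessW/n)^{(1+\alpha)/2}$. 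Either way the product is $o_p\bigl(\sqrt{\skewnessW/n}\bigr)$, so both justifications close the argument.
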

\begin{proof}
    Since $B\ge\abs{\Yi(a)}$, for any $\wp\in\We$,
    \begin{align*}
        \Rhatwp(\pi)&\le \Rwp(\pi)+\rateW \\
        &\le\Qwp(\pistarwp) - \Qwp(\pi) + \rateW \\
        &\le 2B + \rateW \\
        &\le 2B + \bigOp{\!\sqrtfracskewnessWn}.
    \end{align*}
    This last inequality gives a candidate bound for $\Rhatwp(\pi)$ to apply Proposition \ref{prop:HedgeOPOSuboptimalityBound}. Then, if we set $T=(n/\skewnessW)^{1+\alpha}$ for any choice of $\alpha>0$, we have that
    \begin{align*}
        2\cdot\rbra{2B+\bigOp{\!\sqrtfracskewnessWn}}\cdot\sqrt{\frac{\log|\We|}{T}}\le\littleop{\!\sqrtfracskewnessWn},
    \end{align*}
    and so
    \begin{align*}
        \opsE_{\pi\sim P_T}\bra{\Rhatwp(\pi)}\le\minPi\maxwW\Rw(\pi) + \rateW.
    \end{align*}
    
\end{proof}

\subsection{Proof of Theorem \ref{thm:MainTheorem}}

\MainTheorem*
\begin{proof}
    Fix $\epsilon>0$.
    For any $\wp\in\We$,
    \begin{align*}
        \Rwp(\pihat)&=\Deltawp(\pistarwp,\pihat) \\
        &=\Deltawp(\pistarwp,\pihat)-\Deltahatwp(\pistarwp,\pihat)+\Deltahatwp(\pistarwp,\pihat) \\
        &\le\supPiab\regabs{\Deltaw(\pia,\pib)-\Deltahatwp(\pia,\pib)}+\Deltahatwp(\pistarwp,\pihat).
    \end{align*}
    By Proposition \ref{prop:BoundingMixtureEmpiricalProcess}, the first term is bounded as
    \begin{align*}
        \supPiab\regabs{\Deltawp(\pia,\pib)-\Deltahatwp(\pia,\pib)}\le \rateW.
    \end{align*}

    For the second term, we have
    \begin{align*}
        \Deltahatwp(\pistarwp,\pihat)&=\Qhatwp(\pistarw)-\Qhatwp(\pihat) \\
        &\le\Qhatwp(\pihatwp)-\Qhatwp(\pihat) \\
        &=\Rhatwp(\pihat) \\
        &=\opsE_{\pi\sim P_T}\regbra{\Rhatwp(\pi)} \\
        &\le\minPi\maxwW\Rw(\pi) + \rateW,
    \end{align*}
    where the last inequality follows from the suboptimality bound of the HedgeOPO algorithm established in Corollary \ref{cor:HedgeOPOSuboptimalityBound}.

    Putting it all together, we have that for any $\wp\in\We$,
    \begin{align*}
        \Rwp(\pihat)\le\minPi\maxwW\Rw(\pi) + 2\rateW.
    \end{align*}

    Lastly, note that for any $\w''\in\W$, by construction there exists a $\w'\in\We$ such that $\norm{\w''-\w'}_1\le\eps$.
    Therefore, for any $\w''\in\W$,
    \begin{align*}
        R_{\w''}(\pihat)&=Q_{\w''}(\pistar_{\w''})-Q_{\w''}(\pihat) \\
        &=\sumiM\wi''\rbra{\Qi(\pistar_{\w''})-\Qi(\pihat)} \\
        &=\sumiM(\wi''-\wi')\rbra{\Qi(\pistar_{\w''})-\Qi(\pihat)}+\sumiM\wi'\rbra{\Qi(\pistar_{\w''})-\Qi(\pihat)} \\
        &\le 2B \sumiM(\wi''-\wi') + \bigrbra{Q_{\wp}(\pistar_{\w''}) - Q_{\w'}(\pihat)} \\
        &\le 2B \regnorm{\w''-\wp}_1 + \bigrbra{Q_{\wp}(\pistar_{\wp}) - Q_{\wp}(\pihat)} \\
        &\le 2B\epsilon + \Rwp(\pihat) \\
        &\le 2B\epsilon + \minPi\maxwW\Rw(\pi) + 2\rateW.
    \end{align*}

    Thus,
    \begin{align*}
        \maxwW\Rw(\pihat)&\le\minPi\maxwW\Rw(\pi) + 2B\epsilon + 2\rateW \\
        &=\minPi\maxwW\Rw(\pi) + 2B\epsilon + 2C\kappa(\Pi)\sqrt{V\cdot\fracskewnessWn\cdot\log\frac{\abs{\W_\epsilon}}{\delta}} + \littleop{\!\sqrtfracskewnessWn}.
    \end{align*}
    Redefining $C\mapsto C/2$ gives the result.
\end{proof}

\section{BOUNDING TARGET REGRET}\label{app:BoundingTargetRegret}

\subsection{Proof of Theorem \ref{thm:TargetTheorem}}

\TargetTheorem*

\begin{proof}
    First, we have that
    \begin{align*}
        R(\pihat)&=Q(\pistar) - Q(\pihat) \\
        &=Q(\pistar)-Q(\pihat)+\Qw(\pistar)-\Qw(\pistar) +\Qw(\pihat)-\Qw(\pihat) \\
        &=\bigrbra{Q(\pistar)-\Qw(\pistar)} + \bigrbra{\Qw(\pihat)-Q(\pihat)}+\Qw(\pistar)-\Qw(\pi) \\
        &\le 2\maxPi\abs{Q(\pi)-\Qw(\pi)} + \Qw(\pistar)-\Qw(\pihat) \\
        &\le 2\maxPi\abs{Q(\pi)-\Qw(\pi)} + \Qw(\pistarw)-\Qw(\pihat) \\
        &=2\maxPi\abs{Q(\pi)-\Qw(\pi)} + \Rw(\pihat).
    \end{align*}
    Since $R(\pihat)$ does not depend on $\w$, we can freely take the minimum over $\w\in\W$ on the first term on the right-hand side of this last inequality and the maximum over $\w\in\W$ on the second term. This gives
    \begin{align*}
        R(\pihat)&\le 2\minwW\maxPi\abs{Q(\pi)-\Qw(\pi)} + \maxwW\Rw(\pihat).
    \end{align*}

    Next, let $\calF_\infty^B$ be the space of functions uniformly bounded by $B$. 
    Since we assume the potential outcomes under the source distributions and the target distribution are bounded by $B$, it follows by the definition of integral probability metric distances under uniformly bounded test functions \citep{sriperumbudur2009integral} that
    \begin{align*}
        \maxPi\abs{Q(\pi)-\Qw(\pi)}&=\maxPi\Bigabs{\opsE_{Z\sim\calD}\bra{Y(\pi(X))}-\opsE_{Z\sim\calDw}\bra{Y(\pi(X))}} \\
        &\le\max_{f\in\calF_\infty^B}\Bigabs{\opsE_{Z\sim\calD}\bra{f(Z)}-\opsE_{Z\sim\calDw}\bra{f(Z)}} \\
        &\le B\cdot \TV(\calD,\calDw),
    \end{align*}
    and thus,
    \begin{align*}
        \disc(\calD,\calD_\W)\le B\cdot\minwW\TV(\calD,\calDw)=B\cdot\TV(\calD,\calD_\W).
    \end{align*}
\end{proof}

\section{ADDITIONAL ALGORITHM DETAILS}\label{app:AdditionalAlgorithmDetails}

\subsection{Oracle Implementation Details \& Complexity}
\label{app:AlgorithmComplexity}

At each time step, the EG-OPO algorithm requires $O(|\We|)$ calls to the OPO oracle, and the update at each time can be done in $O(|\We|)$ time. Therefore, the complexity of algorithm is $O(|\We|^2\cdot T\cdot C_{\text{OPO}})$ where $C_{\text{OPO}}$ is the complexity of the OPO oracle.

We used the policy class $\Pi$ of depth-2 decision trees and we used the PolicyTree method \citep{sverdrup2020policytree} as the OPO oracle for this class. This method does exact standard offline policy learning for this class of policies. It fits a fixed-depth decision tree by exhaustive search, given the set of rewards (AIPW scores) for all actions and the associated feature vectors. PolicyTree runs in $O(p^k\cdot n^k\cdot(\log n + d) + p\cdot n\cdot\log n)$ time, where $k$ is the depth of the decision trees, $p$ is the context dimension, $d$ is the number of actions, and $n$ is the number of samples.

A single 4.05 GHz Apple M3 Max CPU was used to run the experiments.

\subsection{Nuisance Parameter Estimation}\label{app:NuisanceParameterEstimation}

There is an additional upfront cost of policy value estimation.
Our results rely on efficient estimation of $\Qw(\pi)$ for any policy $\pi$, which in turn relies on efficient estimation of $\Qi(\pi)$. We leverage ideas of double machine learning \citep{chernozhukov2018double} to guarantee efficient policy value estimation given only high-level conditions on the predictive accuracy of machine learning methods on estimating the nuisance parameters of doubly robust policy value estimators. In this work, we use machine learning and cross-fitting strategies to estimate the nuisance parameters locally. The nuisance parameter estimates must satisfy the conditions of Assumption \ref{ass:FiniteSampleError}. Under these conditions, extensions of the results of \citep{chernozhukov2018double,athey2021policy} would imply that the doubly robust local policy value estimates $\Qhati(\pi)$ for any policy $\pi$ are asymptotically efficient for estimating $\Qi(\pi)$.

The conditions and estimators that guarantee these error assumptions have been extensively studied in the estimation literature. These include parametric or smoothness assumptions for non-parametric estimation. The conditional response function $\mui(x;a)=\E_{\calDcec}[\Yi(a)|\Xi=x]$ can be estimated by regressing observed rewards on observed contexts. The inverse conditional propensity function $\oi(x;a)=1/\P_{\calDcec}(\Ai=a|\Xi=x)$ can be estimated by estimating the conditional propensity function $\ei(x;a)=\P_{\calDcec}(\Ai=a|\Xi=x)$ and then taking the inverse. Under sufficient regularity and overlap assumptions, this gives accurate estimates. We can take any flexible approach to estimate these nuisance parameters. We could use standard parametric estimation methods like logistic regression and linear regression, or we could use non-parametric methods like classification and regression forests to make more conservative assumptions on the true models. Lastly, we note that if it is known that some sources have the same data-generating distribution, it should be possible to learn the nuisance parameters across similar sources.

In our experiments, we estimate the nuisance parameters with the non-parametric method of generalized random forests and we make use of the \texttt{grf} package \citep{athey2019generalized}.

\subsection{Cross-fitted AIPW Estimation}\label{app:CAIPWEstimation}

Once the nuisance parameters are estimated, they can be used for estimating AIPW scores. Refer to Algorithm \ref{alg:CAIPW} for the pseudocode on how we conduct the cross-fitting strategy for AIPW score estimation. Under this strategy, each source $\source\in\sourceSet$ divides their local dataset into $K$ folds, and for each fold $k$, the source estimates $\mui$ and $\oi$ using the rest $K-1$ folds. During AIPW estimation for a single data point, the nuisance parameter estimate that is used in the AIPW estimate is the one that was not trained on the fold that contained that data point. This cross-fitting estimation strategy is described in additional detail in \citep{zhou2023offline}.

\begin{algorithm}
    \caption{Cross-fitted AIPW: source-Side}
    \label{alg:CAIPW}
\begin{algorithmic}[1]
    \REQUIRE local data $\{(\Xij,\Aij,\Yij)\}_{i=1}^{\nc}$, nuisance parameter estimates $\muhati$ and $\ohati$, number of folds $K$
    \vspace{1pt}
    \STATE Partition local data into $K$ folds
    \STATE Define surjective mapping $k_c:[\nc]\to[K]$ of point index to corresponding fold index
    \FOR{$k=1,\dots,K$}
        \STATE Fit estimators $\muhati^{-k}$ and $\ohati^{-k}$ using rest of data not in fold $k$
    \ENDFOR

    \FOR{$i=1,\dots,\nc$}
        \FOR{$a\in\calA$}
            \STATE $\Ghatij(a)\leftarrow \muhatikij(\Xij;a)+\bigrbra{\Yij-\muhatikij(\Xij;a)}\cdot\ohatikij(\Xij;a)\cdot\ones\{\Aij=a\}$
        \ENDFOR
    \ENDFOR
\end{algorithmic}
\end{algorithm}

\section{ADDITIONAL DISCUSSION}\label{app:AdditionalDiscussion}

\subsection{Limitations \& Future Work}

Our work has several limitations that warrant further exploration. First, we make certain assumptions about the data-generating process that may not always hold. While we have discussed potential relaxations, such as relaxing the boundedness and uniform overlap assumptions in the data-generating distributions, further investigation is needed to fully understand the impact of these adjustments.
In particular, interesting question arises regarding the pessimism principle in overcoming the uniform overlap assumption \citep{jin2022policy}. Specifically, we wonder if it would be necessary to have coverage under the locally optimal policy for each data source, which could influence the robustness of our approach.

Moreover, in this work, we estimate nuisance parameters separately for each data source. When sources share the same data-generating distribution, there is an opportunity to improve efficiency by learning nuisance parameters across similar sources, rather than treating them independently.

Finally, we have not fully established whether the regret bounds we provide are optimal. Prior work \citep{mohri2019agnostic} suggests that skewness-based bounds for distributed supervised learning are optimal, and our results reduce to regret-optimal results from \citep{athey2021policy, zhou2023offline} when the sources are identical. This provides strong indications of optimality, but establishing lower bounds in our setting is an open area for future research.


\end{document}